\newtheorem{proposition}{Proposition}
\newtheorem{theorem}{Theorem}
\newtheorem{lemma}{Lemma}
\theoremstyle{definition}
\newtheorem{definition}{Definition}
\theoremstyle{remark}
\newcommand{\J}{J}    
\newcommand{\States}{\mathcal{S}}
\newcommand{\Actions}{\mathcal{A}}
\newcommand{\reward}{R}
\newcommand{\TransitionDistribution}{\tau}
\newcommand{\InitStateDistribution}{\mu_0} 
\newcommand{\discount}{\gamma}
\newcommand{\SxA}{{\States{\times}\Actions}}
\newcommand{\SxAxS}{{\States{\times}\Actions{\times}\States}}
\newcommand{\V}{V}
\newcommand{\Return}{G}
\newcommand{\Evaluation}{J}
\newcommand{\Vfor}[1]{\V^{#1}}
\newcommand{\policy}{\pi}
\newcommand{\Expect}[2]{\mathbb{E}_{#1}\left[{#2}\right]}
\newcommand{\R}{\mathcal{R}}
\title{STARC: A General Framework For Quantifying Differences Between Reward Functions}
\author{Joar Skalse\\
Department of Computer Science\\
Future of Humanity Institute\\
Oxford University\\
\texttt{joar.skalse@cs.ox.ac.uk}
\And
Lucy Farnik \\
University of Bristol \\
Bristol AI Safety Centre \\
\texttt{lucy.farnik@bristol.ac.uk}
\And
Sumeet Ramesh Motwani \\
Berkeley Artificial Intelligence Research \\
University of California, Berkeley \\
\texttt{motwani@berkeley.edu}
\And
Erik Jenner \\
Berkeley Artificial Intelligence Research \\
University of California, Berkeley\\
\texttt{jenner@berkeley.edu}
\And
Adam Gleave \\
FAR AI, Inc. \\
\texttt{adam@far.ai}
\And
Alessandro Abate \\
Department of Computer Science\\
Oxford University\\
\texttt{aabate@cs.ox.ac.uk}
}
\begin{document}

\maketitle

\begin{abstract}
In order to solve a task using reinforcement learning, it is necessary to first formalise the goal of that task as a \emph{reward function}. However, for many real-world tasks, it is very difficult to manually specify a reward function that never incentivises undesirable behaviour. As a result, it is increasingly popular to use \emph{reward learning algorithms}, which attempt to \emph{learn} a reward function from data. 
However, the theoretical foundations of reward learning are not yet well-developed. 
In particular, it is typically not known when a given reward learning algorithm with high probability will learn a reward function that is safe to optimise.
This means that reward learning algorithms generally must be evaluated empirically, which is expensive, and that their failure modes are difficult to anticipate in advance. 
One of the roadblocks to deriving better theoretical guarantees is the lack of good methods for \emph{quantifying} the difference between reward functions.
In this paper we provide a solution to this problem, in the form of
a class of pseudometrics on the space of all reward functions that we call STARC (STAndardised Reward Comparison) metrics. We show that STARC metrics induce both an upper and a lower bound on worst-case regret, which implies that our metrics are tight, and that any metric with the same properties must be bilipschitz equivalent to ours. Moreover, we also identify a number of issues with reward metrics proposed by earlier works. Finally, we evaluate our metrics empirically, to demonstrate their practical efficacy.
STARC metrics can be used to make both theoretical and empirical analysis of reward learning algorithms both easier and more principled.
\end{abstract}

\section{Introduction}

To solve a sequential decision-making task with reinforcement learning or automated planning, we must first formalise that task using a reward function \citep{sutton2018, aimodernapproach}. 
However, for many tasks, it is extremely difficult to manually specify a reward function that captures the task in the intended way.
To resolve this issue, it is increasingly popular to use \emph{reward learning}, which attempts to \emph{learn} a reward function from data. 
There are many techniques for doing this. For example, it is possible to use preferences between trajectories \citep[e.g.][]{christiano2023deep}, expert demonstrations \citep[e.g.][]{ng2000}, or a combination of the two \citep[e.g.][]{ibarz2018}.

To evaluate a reward learning method, we must \emph{quantify} the \emph{difference} between the learnt reward function and the underlying true reward function. However, doing this is far from straightforward. A simple method might be to measure their $L_2$-distance. However, this is unsatisfactory, because two reward functions can have a large $L_2$-distance, even if they induce the \emph{same} ordering of policies, or a small $L_2$-distance, even if they induce the \emph{opposite} ordering of policies.\footnote{For example, given an arbitrary reward function $R$ and an arbitrary constant $c$, we have that $R$ and $c \cdot R$ have the same ordering of policies, even though their $L_2$-distance may be arbitrarily large. Similarly, for any $\epsilon$, we have that $\epsilon \cdot R$ and $-\epsilon \cdot R$ have the opposite ordering of policies, unless $R$ is constant, even though their $L_2$-distance may be arbitrarily small.} Another option is to evaluate the learnt reward function on a \emph{test set}. However, this is also unsatisfactory, because it can only guarantee that the learnt reward function is accurate on a given data distribution, and when the reward function is \emph{optimised} we necessarily incur a \emph{distributional shift} (after which the learnt reward function may no longer match the true reward function). Yet another option is to optimise the learnt reward function, and evaluate the obtained policy according to the true reward function. However, this is also unsatisfactory, both because it is very expensive, and because it makes it difficult to separate issues with the policy optimisation process from issues with the reward learning algorithm. Moreover, because this method is purely empirical, it cannot be used for theoretical work. 
These issues make it challenging to evaluate reward learning algorithms in a way that is principled and robust.
This in turn makes it difficult to anticipate in what situations a reward learning algorithm might fail, or what their failure modes might look like. It also makes it difficult to compare different reward learning algorithms against each other, without getting results that may be heavily dependent on the experimental setup. 
These issues limit the applicability of reward learning in practice.

In this paper, we introduce STAndardised Reward Comparison (STARC) metrics, which is a family of \emph{pseudometrics} that quantify the difference between reward functions in a principled way. Moreover, we demonstrate that STARC metrics enjoy strong theoretical guarantees. In particular, we show that STARC metrics induce an upper bound on the worst-case regret that can be induced under arbitrary policy optimisation, which means that a small STARC distance guarantees that two reward functions behave in a similar way. 
Moreover, we also demonstrate that STARC metrics induce a \emph{lower} bound on worst-case regret. This has the important consequence that any reward function distance metric which induces both an upper and a lower bound on worst-case regret must be bilipschitz equivalent to STARC metrics, which in turn means that they (in a certain sense) are unique. In particular, we should not expect to be able to improve on them in any substantial way.
In addition to this, we also evaluate STARC metrics experimentally, and demonstrate that their theoretical guarantees translate into compelling empirical performance. 
STARC metrics are cheap to compute, which means that they can be used for empirical evaluation of reward learning algorithms. Moreover, they can be calculated from a closed-form expression, which means that they are also suitable for use in theoretical analysis.
As such, STARC metrics enable us to evaluate reward learning methods in a way that is both easier and more theoretically principled than relevant alternatives. Our work thus contributes towards building a more rigorous foundation for the field of reward learning.



\subsection{Related Work}

There are two existing papers that study the problem of how to quantify the difference between reward functions. The first is \cite{epic}, which proposes a distance metric that they call Equivalent-Policy Invariant Comparison (EPIC). 
They show that the EPIC-distance between two reward functions induces a regret bound for optimal policies.
The second paper is \cite{dard}, which proposes a distance metric that they call Dynamics-Aware Reward Distance (DARD).
Unlike EPIC, DARD incorporates information about the transition dynamics of the environment.
This means that DARD might give a tighter measurement, in situations where the transition dynamics are known.
Unlike \cite{epic}, they do not derive any regret bound for DARD.

Our work extends the work by \cite{epic} and \cite{dard} in several important ways. 
First of all, \cite{dard} do not provide any regret bounds, which is unsatisfactory for theoretical work, and the upper regret bound that is provided by \cite{epic} is both weaker and less general than ours. In particular, their bound only considers optimal policies, whereas our bound covers all pairs of policies (with optimal policies being a special case). Moreover, we also argue that \cite{epic} have chosen to quantify regret in a way that fails to capture what we care about in practice. 
In Appendix~\ref{appendix:epic}, we provide an extensive theoretical analysis of EPIC, and show that it lacks many of the important theoretical guarantees enjoyed by STARC metrics. In particular, we demonstrate that EPIC fails to induce either an upper or lower bound on worst-case regret (as we define it). We also include an extensive discussion and criticism of DARD in Appendix~\ref{appendix:dard}. Moreover, in Section~\ref{section:experiments}, we provide experimental data that shows that STARC metrics in practice can have a much tighter correlation with worst-case regret than both EPIC and DARD.
This means that STARC metrics both can attain better empirical performance \emph{and} give stronger theoretical guarantees than the pseudometrics proposed by earlier work.

It is important to note that EPIC is designed to be independent of the environment dynamics, whereas both STARC and DARD depend on the transition dynamics. This issue is discussed in Section~\ref{section:discussing_assumptions}.

The question of what happens if one reward function is optimised instead of a different reward function is considered by many previous works.
A notable example is \cite{ng1999}, which shows that if two reward functions differ by a type of transformation they call \emph{potential shaping}, then they have the same optimal policies in all environments. Potential shaping is also studied by e.g.\ \cite{MDPcalculus}.
Another example is \cite{rewardhacking}, which shows that if two reward functions $R_1$, $R_2$ have the property that there are no policies $\pi_1,\pi_2$ such that $J_1(\pi_1) > J_1(\pi_2)$ and $J_2(\pi_1) < J_2(\pi_2)$, then either $R_1$ and $R_2$ induce the same ordering of policies, or at least one of them assigns the same reward to all policies.
\cite{subset_features} consider proxy rewards that depend on a strict subset of the features which are relevant to the true reward, and then show that optimising such a proxy in some cases may be arbitrarily bad, given certain assumptions.
\cite{invariance} derive necessary and sufficient conditions for when two reward functions are equivalent, for the purposes of computing certain policies or other mathematical objects. 
Also relevant is \cite{reward_corruption}, which studies the related problem of reward corruption, and \cite{reward_misspecification}, which considers natural choices of proxy rewards for several environments. 
Unlike these works, we are interested in the question of \emph{quantifying} the difference between reward functions.

\subsection{Preliminaries}\label{section:preliminaries}

A \emph{Markov Decision Processes} (MDP) is a tuple
$(\States, \Actions, \TransitionDistribution, \InitStateDistribution, 
\reward, \discount)$
where
  $\States$ is a set of \emph{states},
  $\Actions$ is a set of \emph{actions},
  $\TransitionDistribution : \SxA \to \Delta(\States)$ is a \emph{transition function},
  $\InitStateDistribution \in \Delta(\States)$ is an \emph{initial state
  distribution}, 
  $\reward : \SxAxS \to \mathbb{R}$ is a \emph{reward
    function}, 
    and $\discount \in (0, 1)$ is a \emph{discount rate}.
A \textit{policy} is a function $\policy : \States \to \Delta(\Actions)$.
A \emph{trajectory} $\xi = \langle s_0, a_0, s_1, a_1 \dots \rangle$ is a possible path in an MDP. The \emph{return function} $\Return$ gives the
cumulative discounted reward of a trajectory,
$\Return(\xi) = \sum_{t=0}^{\infty} \discount^t \reward(s_t, a_t, s_{t+1})$, and the \emph{evaluation function} $\Evaluation$ gives the expected trajectory return given a policy, $\Evaluation(\pi) = \Expect{\xi \sim \pi}{G(\xi)}$. A policy maximising $\Evaluation$ is an \emph{optimal policy}. 
The \emph{value function} $\Vfor{\policy} : \States \rightarrow \mathbb{R}$ of a policy encodes the expected future discounted reward from each state when following that policy. 
We use $\R$ to refer to the set of all reward functions.
When talking about multiple rewards, we give each reward a subscript $R_i$, and use $\Evaluation_i$, $G_i$, and $\Vfor{\pi}_i$, to denote $R_i$'s evaluation function, return function, and $\pi$-value function. 

In this paper, we assume that all states are reachable under $\tau$ and $\InitStateDistribution$. Note that if this is not the case, then all unreachable states can simply be removed from $\States$.
Our \emph{theoretical results} also assume that $\States$ and $\Actions$ are finite. However, STARC metrics can still be computed in continuous environments.


Given a set $X$, a function $d : X \times X \to \mathbb{R}$ is called a \emph{pseudometric} if $d(x_1,x_1) = 0$, $d(x_1,x_2) \geq 0$, $d(x_1,x_2) = d(x_2, x_1)$, and $d(x_1,x_3) \leq d(x_1,x_2) + d(x_2, x_3)$, for all $x_1,x_2,x_2 \in X$. 
Given two pseudometrics $d_1$, $d_2$ on $X,$ if there are constants $\ell, u$ such that $\ell \cdot d_1(x_1,x_2) \leq d_2(x_1,x_2) \leq u \cdot d_1(x_1,x_2)$ for all $x_1, x_2 \in X$, then $d_1$ and $d_2$ are \emph{bilipschitz equivalent}.
Given a vector space $V$, a function $n : V \to \mathbb{R}$ is a \emph{norm} if $n(v_1) \geq 0$, $n(v_1) = 0 \iff v_1 = 0$, $n(c\cdot v_1) = |c| \cdot n(v_1)$, and $n(v_1 - v_2) \leq n(v_1) + n(v_2)$ for all $v_1,v_2 \in V$, $c \in \mathbb{R}$. Given a norm $n$, we can define a (pseudo)metric $m$ as $m(x,y) = n(|x-y|)$. 
In a mild abuse of notation, we will often denote this metric using $n$ directly, so that $n(x,y) = n(|x-y|)$. For any $p \in \mathbb{N}$, $L_p$ is the norm given by $L_p(v) = (\sum |v_i|^p)^{1/p}$. 
A norm $n$ is a \emph{weighted} version of $n'$ if $n = n' \circ M$ for a diagonal matrix $M$.


We will use \emph{potential shaping}, which was first introduced by \cite{ng1999}. First, a \emph{potential function} is a function $\Phi : \States \to \mathbb{R}$.
Given a discount $\discount$, we say that $R_1$ and $R_2$ differ by \emph{potential shaping} if for some potential $\Phi$, we have that $R_2(s,a,s') = R_1(s,a,s') + \discount\cdot\Phi(s') - \Phi(s)$.
We also use \emph{$S'$-redistribution} \citep[as defined by][]{invariance}. Given a transition function $\tau$, we say that $R_1$ and $R_2$ differ by $S'$-redistribution if $\mathbb{E}_{S' \sim \tau(s,a)}[R_2(s,a,S')] = \mathbb{E}_{S' \sim \tau(s,a)}[R_1(s,a,S')]$. Finally, we say that $R_1$ and $R_2$ differ by positive linear scaling if $R_2(s,a,s') = c \cdot R_1(s,a,s')$ for some positive constant $c$.
We will also combine these transformations. For example, we say that $R_1$ and $R_2$ differ by potential shaping and $S'$-redistribution if it is possible to produce $R_2$ from $R_1$ by applying potential shaping and $S'$-redistribution (in any order). 
The cases where $R_1$ and $R_2$ differ by (for example) potential shaping and positive linear scaling, etc, are defined analogously.
Finally, we will use the following result, proven by \cite{skalse2023misspecification} in their Theorem 2.6: 

\begin{proposition}\label{prop:policy_order}
$(S,A,\tau,\mu_0,R_1,\gamma)$ and $(S,A,\tau,\mu_0,R_2,\gamma)$ have the same ordering of policies if and only if $R_1$ and $R_2$ differ by potential shaping, positive linear scaling, and $S'$-redistribution.
\end{proposition}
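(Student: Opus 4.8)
The plan is to prove the two directions separately, treating the ``if'' direction as routine bookkeeping and concentrating the real work on the ``only if'' direction.

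\textbf{The easy direction.} For ``if'', I would show that each of the three transformations changes the evaluation function $\Evaluation$ by a strictly increasing affine map that is independent of $\pi$. Concretely: positive linear scaling $R \mapsto cR$ (with $c>0$) sends $\Evaluation \mapsto c\,\Evaluation$; potential shaping by $\Phi$ telescopes, sending $\Evaluation \mapsto \Evaluation - \mathbb{E}_{s_0\sim\mu_0}[\Phi(s_0)]$; and $S'$-redistribution leaves $\Evaluation$ entirely unchanged, since by definition it preserves the expected one-step reward $\mathbb{E}_{S'\sim\tau(s,a)}[R(s,a,S')]$ at every $(s,a)$ and hence the expected return of every policy. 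Composing these gives $\Evaluation_2 = c\,\Evaluation_1 + k$ with $c>0$, which is order-preserving, so $R_1$ and $R_2$ share the same ordering of policies.

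\textbf{The hard direction.} For ``only if'', the idea is to pass to occupancy measures. I would write $\Evaluation_i(\pi) = \langle R_i, \eta^\pi\rangle$, where $\eta^\pi$ is the discounted state-action-state occupancy of $\pi$; the achievable set $\Omega = \{\eta^\pi : \pi\}$ is a convex polytope, and each $\Evaluation_i$ is the restriction of a linear functional to $\Omega$. Since $\pi \mapsto \eta^\pi$ is onto $\Omega$, ``same ordering of policies'' is exactly ``same preorder on $\Omega$ induced by the two linear functionals.'' The key lemma is then ordinal-to-cardinal: two linear functionals inducing the same preorder on a convex body must be positive-affine reparametrisations of one another on $\mathrm{aff}(\Omega)$. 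I would prove this by observing that equal ordering forces equal indifference sets (if $\langle R_1,x\rangle = \langle R_1,y\rangle$ then $\langle R_2,x\rangle = \langle R_2,y\rangle$ for $x,y\in\Omega$), so the two functionals share the same hyperplane foliation of $\mathrm{aff}(\Omega)$; hence their gradients are parallel, giving $\langle R_2,\cdot\rangle = c\,\langle R_1,\cdot\rangle + k$ on $\mathrm{aff}(\Omega)$, and order-preservation (rather than reversal) forces $c>0$. The degenerate case where $\Evaluation_1$ is constant (so every policy is optimal) is handled separately with $c=1$.

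This reduces the problem to characterising the ``kernel'': I must show that $R' \coloneqq R_2 - cR_1$, which now satisfies $\langle R', \eta^\pi\rangle = k$ for every $\pi$, is a sum of a potential shaping term and an $S'$-redistribution term. First I would use $S'$-redistribution to replace $R'$ by the state-action reward $\bar R(s,a) = \mathbb{E}_{S'\sim\tau(s,a)}[R'(s,a,S')]$, which has the same (constant) evaluation. Because $\bar R$ gives every policy the same return, every policy is optimal, so by the reachability assumption every action at every state is optimal; thus the advantage vanishes and $\QStar(s,a) = \VStar(s)$ for all $s,a$. The Bellman equation then reads $\bar R(s,a) = \VStar(s) - \gamma\,\mathbb{E}_{S'\sim\tau(s,a)}[\VStar(S')]$, so taking $\Phi = -\VStar$ exhibits $\bar R$ as the $S'$-redistribution-reduction of the potential shaping $\gamma\Phi(s') - \Phi(s)$. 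Unwinding the reductions yields $R' = (\text{potential shaping}) + (S'\text{-redistribution})$ and hence $R_2 = cR_1 + (\text{potential shaping}) + (S'\text{-redistribution})$, as required. I expect the main obstacle to be the ordinal-to-cardinal lemma together with the bookkeeping around the affine hull and the constant-$\Evaluation$ degenerate case: promoting a purely ordinal equivalence of policies to a cardinal, positive-affine relationship between $\Evaluation_1$ and $\Evaluation_2$ is where the convex-geometric content lives, whereas the kernel characterisation is comparatively mechanical once reachability is invoked.
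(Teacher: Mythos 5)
The paper does not actually prove Proposition~\ref{prop:policy_order}: it is imported from \cite{skalse2023misspecification} (their Theorem 2.6) and used as a black box, so there is no in-paper proof to compare yours against. Judged on its own terms, your proposal is correct, and it follows what is essentially the standard route for results of this kind (and is consistent with the machinery this paper itself uses elsewhere, e.g.\ the occupancy map $m(\pi)$ with $J(\pi) = m(\pi)\cdot R$ in the completeness proof): reduce policies to discounted occupancy measures, upgrade the ordinal hypothesis to the cardinal identity $J_2 = c\,J_1 + k$ with $c>0$ via the shared-level-set argument on the occupancy polytope, and then show that any reward with constant evaluation is a potential shaping plus an $S'$-redistribution by passing to $\bar R(s,a)=\mathbb{E}_{S'\sim\tau(s,a)}[R'(s,a,S')]$ and reading off $\Phi=-V^\star$ from the Bellman equation once all advantages vanish. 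Three points deserve explicit care in a full write-up. First, the ordinal-to-cardinal lemma needs the occupancy polytope to have nonempty relative interior in its affine hull (true here, and invoked by the paper via \cite{rewardhacking}), and the foliation step should record that a nonempty relatively open subset of a level hyperplane has that hyperplane as its affine hull, so that equal indifference classes really do force $\ker(dJ_1)=\ker(dJ_2)$ on the direction space of the affine hull. Second, the inference that all policies being optimal forces $Q^\star(s,a)=V^\star(s)$ for every pair $(s,a)$ is exactly where the paper's standing assumption that every state is reachable under $\tau$ and $\mu_0$ is consumed: you need, for each $(s,a)$, a policy that reaches $s$ with positive discounted probability and then deviates to $a$, and the claim is false for unreachable states. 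Third, in the degenerate case you should first note that sameness of the ordering forces $J_2$ to be constant whenever $J_1$ is, before setting $c=1$. None of these is a gap in the idea, only in the level of detail.
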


The \enquote{ordering of policies} is the ordering induced by the policy evaluation function $J$.

EPIC \citep{epic} is defined relative to a distribution $\mathcal{D}_\States$ over $\States$ and a distribution $\mathcal{D}_\Actions$ over $\Actions$, which must give support to all states and actions. It is computed in several steps. First, let $C^\mathrm{EPIC} : \R \to \R$ be the function where $C^\mathrm{EPIC}(R)(s,a,s')$ is equal to
$$
R(s,a,s') + \mathbb{E}[\gamma R(s', A, S') - R(s, A, S') - \gamma R(S, A, S')],
$$
where $S, S' \sim \mathcal{D}_\States$ and $A \sim \mathcal{D}_\Actions$.
Note that $S$ and $S'$ are sampled independently.
Next, let the \enquote{Pearson distance} between two random variables $X$ and $Y$ be defined as $\sqrt{(1-\rho(X,Y))/2}$, where $\rho$ denotes the Pearson correlation.
Then the EPIC-distance $D^\mathrm{EPIC}(R_1,R_2)$ is defined to be the Pearson distance between $C^\mathrm{EPIC}(R_1)(S,A,S')$ and $C^\mathrm{EPIC}(R_2)(S,A,S')$, where again $S, S' \sim \mathcal{D}_\States$ and $A \sim \mathcal{D}_\Actions$.\footnote{\cite{epic} allow different distributions to be used when computing $C^\mathrm{EPIC}(R)$ and when taking the Pearson distance. However, doing this breaks some of their theoretical results. For details, see Appendix~\ref{appendix:misc}.}
Note that $D^\mathrm{EPIC}$ is implicitly parameterised by $\mathcal{D}_\States$ and $\mathcal{D}_\Actions$.

To better understand how EPIC works, it is useful to know that it can be equivalently expressed as 
$$
D^\mathrm{EPIC}(R_1,R_2) = \frac{1}{2} \cdot L_{2,\mathcal{D}}\left(\frac{C^\mathrm{EPIC}(R_1)}{L_{2,\mathcal{D}}(C^\mathrm{EPIC}(R_1))}, \frac{C^\mathrm{EPIC}(R_2)}{L_{2,\mathcal{D}}(C^\mathrm{EPIC}(R_2))}\right),
$$
where $L_{2,\mathcal{D}}$ is a weighted $L_2$-norm.
For details, see Appendix~\ref{appendix:misc}.
Here $C^\mathrm{EPIC}$ maps all reward functions that differ by potential shaping to a single representative in their equivalence class. This, combined with the normalisation step, ensures that reward functions which only differ by potential shaping and positive linear scaling have distance $0$ under $D^\mathrm{EPIC}$.

DARD \citep{dard} is also defined relative to a distribution $\mathcal{D}_\States$ over $\States$ and a distribution $\mathcal{D}_\Actions$ over $\Actions$, which must give support to all actions and all reachable states, but it also requires a transition function $\tau$.
Let $C^\mathrm{DARD} : \R \to \R$ be the function where $C^\mathrm{DARD}(R)(s,a,s')$ is
$$
R(s,a,s') + \mathbb{E} [\gamma R(s', A, S'') - R(s, A, S') - \gamma R(S', A, S'')],
$$
where $A \sim \mathcal{D}_\Actions$, $S' \sim \tau(s,A)$, and $S'' \sim \tau(s',A)$.
Then the DARD-distance $D^\mathrm{DARD}(R_1,R_2)$ is defined to be the Pearson distance between $C^\mathrm{DARD}(R_1)(S,A,S')$ and $C^\mathrm{DARD}(R_2)(S,A,S')$, where again $S, S' \sim \mathcal{D}_\States$ and $A \sim \mathcal{D}_\Actions$.
Note that $D^\mathrm{DARD}$ is parameterised by $\mathcal{D}_\States$, $\mathcal{D}_\Actions$, and $\tau$.

\section{STARC Metrics}\label{section:starc_metrics}

In this section we formally define STARC metrics, and provide several examples of such metrics.

\subsection{A Formal Definition of STARC Metrics}


STARC metrics are defined relative to an environment, consisting of a set of states $\States$, a set of actions $\Actions$, a transition function $\tau$, an initial state distribution $\InitStateDistribution$, and a discount factor $\gamma$. This means that many of our definitions and theorems are implicitly parameterised by these objects, even when this dependency is not spelled out explicitly. Our results hold for any choice of $\States$, $\Actions$, $\tau$, $\InitStateDistribution$, and $\gamma$, as long as they satisfy the assumptions given in Section~\ref{section:preliminaries}. See also Section~\ref{section:discussing_assumptions}.

STARC metrics are computed in several steps, where the first steps collapse certain equivalence classes in $\R$ to a single representative, and the last step measures a distance.
The reason for this is that two distinct reward functions can share the exact same preferences between all policies.
When this is the case, we want them to be treated as equivalent. 
This is achieved by standardising the reward functions in various ways before the distance is finally measured.
First, recall that neither potential shaping nor $S'$-redistribution affects the policy ordering in any way. This motivates the first step: 

\begin{definition}\label{def:canonicalisation_function}
A function $c : \R \to \R$ is a \emph{canonicalisation function} if $c$ is linear, $c(R)$ and $R$ only differ by potential shaping and $S'$-redistribution for all $R \in \R$, and for all $R_1,R_2 \in \R$, $c(R_1) = c(R_2)$ if and only if $R_1$ and $R_2$ only differ by potential shaping and $S'$-redistribution.
\end{definition}

Note that we require $c$ to be linear. 
Note also that $C^\mathrm{EPIC}$ and $C^\mathrm{DARD}$ are not canonicalisation functions in our sense, because we here require canonicalisation functions to simultaniously standardise both potential shaping and $S'$-redistribution, whereas $C^\mathrm{EPIC}$ and $C^\mathrm{DARD}$ only standardise potential shaping.
In Section~\ref{section:examples}, we provide examples of canonicalisation functions. 
Let us next introduce the functions that we use to compute a distance:

\begin{definition}\label{def:admissible_metric}
A metric $m : \R \times \R \to \mathbb{R}$ is \emph{admissible} if there exists a norm $p$ and two (positive) constants $u,\ell$ such that $\ell \cdot p(x,y) \leq m(x,y) \leq u \cdot p (x,y)$ for all $x, y \in \R$.
\end{definition}
 
A metric is admissible if it is bilipschitz equivalent to a norm.
Any norm is an admissible metric, though there are admissible metrics which are not norms.\footnote{For example, the unit ball of $m$ does not have to be convex, or symmetric around the origin.}
Recall also that all norms are bilipschitz equivalent on any finite-dimensional vector space. This means that if $m$ satisfies Definition~\ref{def:admissible_metric} for one norm, then it satisfies it for all norms.
We can now define our class of reward metrics:

\begin{definition}\label{def:reward_metric}
A function $d : \R \times \R \to \mathbb{R}$ is a \emph{STARC metric} (STAndardised Reward Comparison) if there is a canonicalisation function $c$, a function $n$ that is a norm on $\mathrm{Im}(c)$, and a metric $m$ that is admissible on $\mathrm{Im}(s)$, such that $d(R_1, R_2) = m(s(R_1), s(R_2))$,
where $s(R) = c(R)/n(c(R))$ when $n(c(R)) \neq 0$, and $c(R)$ otherwise.
\end{definition}

Intuitively speaking, $c$ ensures that all reward functions which differ by potential shaping and $S'$-redistribution are considered to be equivalent, and division by $n$ ensures that positive scaling is ignored as well.
Note that if $n(c(R)) = 0$, then $c(R)$ assigns $0$ reward to every transition.
Note also that $\mathrm{Im}(c)$ is the image of $c$, if $c$ is applied to the entirety of $\R$. If $n$ is a norm on $\R$, then $n$ is also a norm on $\mathrm{Im}(c)$, but there are functions which are norms on $\mathrm{Im}(c)$ but not on $\R$ (c.f.\ Proposition~\ref{prop:J_norm}).


In Appendix~\ref{appendix:geometric_intuition}, we provide a geometric intuition for how STARC metrics work.


\subsection{Examples of STARC Metrics}\label{section:examples}

In this section, we give several examples of STARC metrics. We begin by showing how to construct canonicalisation functions. We first give a simple and straightforward method:

\begin{proposition}\label{proposition:value_canon}
    For any policy $\pi$, the function $c : \R \to \R$ given by 
    $$
    c(R)(s,a,s') = \mathbb{E}_{S' \sim \tau(s,a)}\left[R(s,a,S') - V^\pi(s) + \gamma V^\pi(S')\right]
    $$
    is a canonicalisation function. Here $V^\pi$ is computed under the reward function $R$ given as input to $c$. We call this function Value-Adjusted Levelling (VAL).
\end{proposition}


The proof, as well as all other proofs, are given in the Appendix.
Proposition~\ref{proposition:value_canon} gives us an easy way to make canonicalisation functions, which are also easy to evaluate whenever $V^\pi$ is easy to approximate. We next give another example of canonicalisation functions: 

\begin{definition}\label{def:minimal}
    A canonicalisation function $c$ is \emph{minimal} for a norm $n$ if for all $R$ we have that $n(c(R)) \leq n(R')$ for all $R'$ such that $R$ and $R'$ only differ by potential shaping and $S'$-redistribution.
\end{definition}

Minimal canonicalisation functions give rise to tighter regret bounds (c.f.\ Section~\ref{section:theoretical_results} and Appendix~\ref{appendix:main_proofs}). It is not a given that minimal canonicalisation functions exist for a given norm $n$, or that they are unique. However, for any weighted $L_2$-norm, this is the case:

\begin{proposition}
    For any weighted $L_2$-norm, a minimal canonicalisation function exists and is unique.
\end{proposition}

A STARC metric can use any canonicalisation function $c$. Moreover, the normalisation step can use any function $n$ that is a norm on $\mathrm{Im}(c)$. This does of course include the $L_1$-norm, $L_2$-norm, $L_\infty$-norm, and so on. We next show that $\max_\pi J(\pi) - \min_\pi J(\pi)$ also is a norm on $\mathrm{Im}(c)$:

\begin{proposition}\label{prop:J_norm}
    If $c$ is a canonicalisation function, then the function $n : \R \to \R$ given by $n(R) = \max_\pi J(\pi) - \min_\pi J(\pi)$ is a norm on $\mathrm{Im}(c)$.
\end{proposition}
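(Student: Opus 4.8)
The plan is to verify the four defining properties of a norm directly, exploiting the fact that for a fixed policy $\pi$ the evaluation $J_R(\pi)$ is \emph{linear} in the reward $R$. Concretely, for finite $\States$ and $\Actions$ we may write $J_R(\pi) = \sum_{s,a,s'} \eta^\pi(s,a,s')\, R(s,a,s')$, where $\eta^\pi$ is the discounted occupancy measure of $\pi$ and does not depend on $R$. Since $c$ is linear, $\mathrm{Im}(c)$ is a linear subspace of $\R$, so it is closed under the operations appearing in the axioms and it suffices to check those axioms for reward functions lying in $\mathrm{Im}(c)$.

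The first three properties are routine. Non-negativity is immediate, since $n(R) = \max_\pi J_R(\pi) - \min_\pi J_R(\pi) \geq 0$ (both extrema are attained because the policy space is compact and $J_R$ is continuous). For absolute homogeneity, linearity gives $J_{\lambda R}(\pi) = \lambda J_R(\pi)$; when $\lambda \geq 0$ the maximum and minimum scale directly, and when $\lambda < 0$ the roles of $\max$ and $\min$ swap, so in both cases $n(\lambda R) = |\lambda|\, n(R)$. For the triangle inequality I would combine $\max_\pi (J_{R_1}(\pi) + J_{R_2}(\pi)) \leq \max_\pi J_{R_1}(\pi) + \max_\pi J_{R_2}(\pi)$ with the analogous reverse bound for the minimum; subtracting yields $n(R_1 + R_2) \leq n(R_1) + n(R_2)$, and replacing $R_2$ by $-R_2$ gives the stated difference form.

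The main obstacle is \emph{definiteness}: showing that $n(v) = 0$ forces $v = 0$ for $v \in \mathrm{Im}(c)$. This is precisely the axiom that fails on all of $\R$ (a nonzero potential-shaping term has $n = 0$), so the argument must use that $v$ lies in the image of a canonicalisation function. The key observation is that $n(v) = 0$ means $J_v$ is constant across all policies, i.e.\ $v$ induces the trivial policy ordering in which every policy is tied, which is the same ordering induced by the zero reward. By Proposition~\ref{prop:policy_order}, $v$ and the zero reward must then differ by potential shaping, positive linear scaling, and $S'$-redistribution; since scaling the zero reward leaves it zero, this says exactly that $v$ and $0$ differ by potential shaping and $S'$-redistribution.

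To finish, I would invoke two facts about canonicalisation functions. First, $c$ is idempotent on its image: because $c(R)$ and $R$ always differ by potential shaping and $S'$-redistribution, the defining \enquote{if and only if} gives $c(c(R)) = c(R)$, hence $c(v) = v$ for every $v \in \mathrm{Im}(c)$. Second, since $v$ and $0$ differ by potential shaping and $S'$-redistribution, that same defining property gives $c(v) = c(0)$, while linearity of $c$ gives $c(0) = 0$. Combining these, $v = c(v) = c(0) = 0$, which is the desired conclusion. Collecting the four verified properties shows that $n$ restricted to $\mathrm{Im}(c)$ is a norm.
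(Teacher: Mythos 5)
Your proposal is correct and follows essentially the same route as the paper's proof: the first three axioms by direct computation, and definiteness via Proposition~\ref{prop:policy_order} (constant $J$ means the same policy ordering as the zero reward, hence differing from it by potential shaping and $S'$-redistribution) combined with the defining property of canonicalisation functions. Your explicit idempotence argument ($c(v)=v$ on $\mathrm{Im}(c)$, $c(v)=c(0)=0$) just spells out the step the paper states more tersely.
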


For the final step we of course have that any norm is an admissible metric, though some other metrics are admissible as well.\footnote{For example, if $m(x,y)$ is the \emph{angle} between $x$ and $y$ when $x,y \neq 0$, and we define $m(0,0) = 0$ and $m(x,0) = \pi/2$ for $x \neq 0$, then $m$ is also admissible, even though $m$ is not a norm.}
To obtain a STARC metric, we then pick any canonicalisation function $c$, norm $n$, and admissible metric $m$, and combine them as described in Definition~\ref{def:reward_metric}.
Which choice of $c$, $n$, and $m$ is best in a given situation may depend on multiple considerations, such as how easy they are to compute, how easy they are to work with theoretically, or how well they together track worst-case regret (c.f.\ Section~\ref{section:theoretical_results} and \ref{section:experiments}).


\subsection{Unknown Transition Dynamics and Continuous Environments}\label{section:discussing_assumptions}

STARC metrics depend on the transition function $\tau$, through the definition of canonicalisation functions (since $S'$-redistribution depends on $\tau$).
Moreover, $\tau$ is often unknown in practice.
However, it is important to note that while STARC metrics \emph{depend} on $\tau$, there are STARC metrics that can be computed without \emph{direct access} to $\tau$. For example, the VAL canonicalisation function (Proposition~\ref{proposition:value_canon}) only requires that we can \emph{sample} from $\tau$, which is always possible in the reinforcement learning setting. Moreover, if we want to evaluate a learnt reward function in an environment that is different from the training environment, then we can simply use the $\tau$ from the evaluation environment. As such, we do not consider the dependence on $\tau$ to be a meaningful limitation. Nonetheless, it is possible to define STARC-like pseudometrics that do not depend on $\tau$ at all, and such pseudometrics also have some theoretical guarantees (albeit guarantees that are weaker than those enjoyed by STARC metrics). This option is discussed in Appendix~\ref{appendix:epic_like_metrics}.

Moreover, we assume that $\States$ and $\Actions$ are finite, but many interesting environments are \emph{continuous}. However, it is important to note that while our theoretical results assume that $\States$ and $\Actions$ are finite, it is still straightforward to compute and use STARC metrics in continuous environments (for example, using the VAL canonicalisation function from Proposition~\ref{proposition:value_canon}).
We discuss this issue in more detail in Appendix~\ref{appendix:approximating_starc_in_large_envs}. In Section~\ref{section:experiments}, we also provide experimental data from a continuous environment.

\section{Theoretical Results}\label{section:theoretical_results}

In this section, we prove that STARC metrics enjoy several desirable theoretical guarantees. First, we note that all STARC metrics are pseudometrics on the space of all reward functions, $\R$:

\begin{proposition}\label{prop:STARC_pseudometrics}
    All STARC metrics are pseudometrics on $\mathcal{R}$.
\end{proposition}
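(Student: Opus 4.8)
The plan is to recognise that every STARC metric is the pullback of a genuine metric along a fixed function, and that such a pullback is automatically a pseudometric. By Definition~\ref{def:reward_metric}, a STARC metric has the form $d(R_1,R_2) = m(s(R_1),s(R_2))$, where $s : \R \to \R$ is the standardisation map and $m$ is an admissible metric on $\mathrm{Im}(s)$. Since $s(R_1)$ and $s(R_2)$ always lie in $\mathrm{Im}(s)$, the quantity $m(s(R_1),s(R_2))$ is always defined, so $d$ is a well-defined function on $\R \times \R$.

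The core of the argument is that an admissible metric is, by Definition~\ref{def:admissible_metric}, an honest metric — I only need that it satisfies $m(x,x)=0$, $m(x,y)\geq 0$, $m(x,y)=m(y,x)$, and $m(x,z)\leq m(x,y)+m(y,z)$; the bilipschitz equivalence to a norm plays no role here. I would then verify the four pseudometric axioms for $d$ by transporting the corresponding axioms for $m$ through $s$, in order: $d(R,R)=m(s(R),s(R))=0$; $d(R_1,R_2)=m(s(R_1),s(R_2))\geq 0$; $d(R_1,R_2)=m(s(R_1),s(R_2))=m(s(R_2),s(R_1))=d(R_2,R_1)$; and finally $d(R_1,R_3)=m(s(R_1),s(R_3))\leq m(s(R_1),s(R_2))+m(s(R_2),s(R_3))=d(R_1,R_2)+d(R_2,R_3)$. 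This delivers the claim.

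There is essentially no obstacle, as the result is purely formal; the only point worth flagging is \emph{why} the conclusion is a pseudometric rather than a metric. The map $s$ need not be injective — distinct reward functions $R_1 \neq R_2$ that differ only by potential shaping, $S'$-redistribution, and positive scaling satisfy $s(R_1)=s(R_2)$, hence $d(R_1,R_2)=0$ despite $R_1 \neq R_2$. This failure of point separation is exactly the intended behaviour, since such rewards induce the same policy ordering. I would also note that the piecewise definition of $s$ (normalising by $n(c(R))$ when this is nonzero, and returning $c(R)$ otherwise) is irrelevant to the argument: $s$ enters only as a function, so its internal construction has no bearing on the pseudometric axioms.
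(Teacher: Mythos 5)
Your proposal is correct and matches the paper's own proof, which likewise verifies the pseudometric axioms for $d$ by transporting the corresponding axioms of the admissible metric $m$ through the standardisation map $s$. The additional remarks — that bilipschitz equivalence to a norm is not needed, and that non-injectivity of $s$ is precisely why $d$ is only a pseudometric — are accurate and consistent with the paper's discussion around Proposition~\ref{prop:STARC_distance_zero}.
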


This means that STARC metrics give us a well-defined notion of a \enquote{distance} between rewards.
Next, we characterise the cases when STARC metrics assign two rewards a distance of zero:

\begin{proposition}\label{prop:STARC_distance_zero}
All STARC metrics have the property that $d(R_1, R_2) = 0$ if and only if $R_1$ and $R_2$ induce the same ordering of policies.
\end{proposition}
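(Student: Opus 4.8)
The plan is to reduce the statement, via Proposition~\ref{prop:policy_order}, to a purely algebraic claim about the standardisation map $s$, and then verify that claim by tracking how potential shaping, $S'$-redistribution, and positive scaling propagate through $c$ and through the normalisation by $n$. First I would dispose of the admissible metric $m$. Since $m$ is bilipschitz equivalent to some norm $p$ (Definition~\ref{def:admissible_metric}), there are positive constants $\ell, u$ with $\ell \cdot p(x,y) \leq m(x,y) \leq u \cdot p(x,y)$, so $m(x,y) = 0$ if and only if $p(x,y) = 0$, i.e.\ if and only if $x = y$. Hence $d(R_1,R_2) = m(s(R_1), s(R_2)) = 0$ if and only if $s(R_1) = s(R_2)$. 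By Proposition~\ref{prop:policy_order}, the proposition is therefore equivalent to showing that $s(R_1) = s(R_2)$ if and only if $R_1$ and $R_2$ differ by potential shaping, positive linear scaling, and $S'$-redistribution.

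For the direction from transformations to equality, suppose $R_1$ and $R_2$ differ by potential shaping, a positive scaling $\lambda$, and $S'$-redistribution. Then $R_2$ and $\lambda R_1$ differ only by potential shaping and $S'$-redistribution, so by Definition~\ref{def:canonicalisation_function} together with linearity of $c$ we get $c(R_2) = c(\lambda R_1) = \lambda \, c(R_1)$. If $n(c(R_1)) \neq 0$, then $n(c(R_2)) = \lambda \, n(c(R_1)) \neq 0$ and the factor $\lambda$ cancels, giving $s(R_2) = c(R_2)/n(c(R_2)) = c(R_1)/n(c(R_1)) = s(R_1)$. If instead $n(c(R_1)) = 0$, then $c(R_1) = 0$ (a norm vanishes only at $0$), whence $c(R_2) = 0$ as well, and $s(R_1) = c(R_1) = 0 = c(R_2) = s(R_2)$.

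Conversely, assume $s(R_1) = s(R_2)$. I would first argue that $n(c(R_1)) = 0$ holds precisely when $n(c(R_2)) = 0$: if, say, $n(c(R_1)) = 0$ but $n(c(R_2)) \neq 0$, then $s(R_1) = c(R_1) = 0$ while $n(s(R_2)) = n(c(R_2))/n(c(R_2)) = 1$ forces $s(R_2) \neq 0$, contradicting $s(R_1) = s(R_2)$. In the case $n(c(R_1)) = n(c(R_2)) = 0$ we have $c(R_1) = 0 = c(R_2)$, so Definition~\ref{def:canonicalisation_function} gives that $R_1$ and $R_2$ differ only by potential shaping and $S'$-redistribution, which is the claimed relation with unit scaling. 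Otherwise both norms are nonzero, and $s(R_1) = s(R_2)$ rearranges to $c(R_1) = \lambda \, c(R_2)$ with $\lambda = n(c(R_1))/n(c(R_2)) > 0$; by linearity $c(R_1) = c(\lambda R_2)$, so Definition~\ref{def:canonicalisation_function} yields that $R_1$ and $\lambda R_2$ differ only by potential shaping and $S'$-redistribution, i.e.\ $R_1$ and $R_2$ differ by potential shaping, positive scaling, and $S'$-redistribution.

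The main obstacle is not any single deep step but rather the careful bookkeeping around the normalisation: one must correctly route the positive scalar $\lambda$ through both the linearity of $c$ and its defining equivalence, and separately handle the degenerate case $n(c(R)) = 0$ (where $s$ returns $c(R)$ itself) so as to avoid dividing by zero. Once these edge cases are isolated, the argument is a direct application of the canonicalisation axioms in Definition~\ref{def:canonicalisation_function} and Proposition~\ref{prop:policy_order}.
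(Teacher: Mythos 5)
Your proof is correct and follows essentially the same route as the paper's (which simply cites Proposition~\ref{prop:policy_order} together with the fact that potential shaping, $S'$-redistribution, and positive scaling applied in any order can be rewritten as a scaling followed by shaping and redistribution). You merely spell out the details the paper leaves implicit, namely that an admissible $m$ vanishes only on equal arguments and the bookkeeping of $\lambda$ and the $n(c(R))=0$ edge case through the normalisation.
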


This means that STARC metrics consider two reward functions to be equivalent, exactly when those reward functions induce exactly the same ordering of policies. This is intuitive and desirable.

For a pseudometric $d$ on $\R$ to be useful, it is crucial that it induces an upper bound on worst-case regret. Specifically, we want it to be the case that if $d(R_1, R_2)$ is small, then the impact of using $R_2$ instead of $R_1$ should also be small.
When a pseudometric has this property, we say that it is \emph{sound}:

\begin{definition}\label{def:soundness}
A pseudometric $d$ on $\R$ is \emph{sound} if there exists a positive constant $U$, such that for any reward functions $R_1$ and $R_2$, if two policies $\pi_1$ and $\pi_2$ satisfy that $J_2(\pi_2) \geq J_2(\pi_1)$, then
$$
J_1(\pi_1) - J_1(\pi_2) \leq U \cdot (\max_\pi J_1(\pi) - \min_\pi J_1(\pi)) \cdot d(R_1, R_2).
$$
\end{definition}

Let us unpack this definition. 
$J_1(\pi_1) - J_1(\pi_2)$ is the regret, as measured by $R_1$, of using policy $\pi_2$ instead of $\pi_1$.
Division by $\max_\pi J_1(\pi) - \min_\pi J_1(\pi)$ normalises this quantity based on the total range of $R_1$ 
(though the term is put on the right-hand side of the inequality, instead of being used as a denominator, in order to avoid division by zero when $\max_\pi J_1(\pi) - \min_\pi J_1(\pi) = 0$).
The condition that $J_2(\pi_2) \geq J_2(\pi_1)$ says that $R_2$ prefers $\pi_2$ over $\pi_1$.
Taken together, this means that a pseudometric $d$ on $\R$ is sound if $d(R_1, R_2)$ gives an upper bound on the maximal regret that could be incurred under $R_1$ if an arbitrary policy $\pi_1$ is optimised to another policy $\pi_2$ according to $R_2$. 
It is also worth noting that this includes the special case when $\pi_1$ is optimal under $R_1$ and $\pi_2$ is optimal under $R_2$.
Our first main result is that all STARC metrics are sound:

\begin{theorem}\label{thm:STARC_sound}
All STARC metrics are sound.
\end{theorem}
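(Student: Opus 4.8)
The plan is to show that the regret $J_1(\pi_1) - J_1(\pi_2)$ is controlled by the STARC distance $d(R_1, R_2)$ through a chain of three reductions: from regret to a difference of evaluation functions of the canonicalised-and-normalised rewards, from that difference to the admissible metric $m$ via the norm on $\mathrm{Im}(c)$, and finally back to $d$ itself via the bilipschitz bounds built into admissibility. The central object is the standardised reward $s(R) = c(R)/n(c(R))$. Since $c$ only changes $R$ by potential shaping and $S'$-redistribution, Proposition~\ref{prop:policy_order} (together with the fact that scaling by the positive constant $1/n(c(R))$ is positive linear scaling) guarantees that $s(R)$ induces the \emph{same} ordering of policies as $R$, and moreover that the evaluation functions transform in a simple, controlled way. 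Concretely, potential shaping and $S'$-redistribution leave $J(\pi) - J(\pi')$ differences proportional to the corresponding differences under $R$, so the regret under $R_1$ equals $(\max_\pi J_1(\pi) - \min_\pi J_1(\pi))$ times the regret under the normalised reward $s(R_1)$. This is where I would use Proposition~\ref{prop:J_norm}: the quantity $\max_\pi J(\pi) - \min_\pi J(\pi)$ is itself a norm on $\mathrm{Im}(c)$, so it is bilipschitz equivalent to the norm $n$ used in the definition of $s$, and all the normalisation factors can be tracked cleanly.

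The key analytic step is to bound the regret of the \emph{difference} reward $s(R_1) - s(R_2)$. First I would observe that, because $s(R_1)$ and $s(R_2)$ are normalised to unit $n$-norm, the linearity of the evaluation function in the reward gives
$$
J_1^{s}(\pi_1) - J_1^{s}(\pi_2) = \left(J_1^{s}(\pi_1) - J_2^{s}(\pi_1)\right) - \left(J_1^{s}(\pi_2) - J_2^{s}(\pi_2)\right) + \left(J_2^{s}(\pi_1) - J_2^{s}(\pi_2)\right),
$$
where $J_i^{s}$ denotes evaluation under $s(R_i)$. The hypothesis $J_2(\pi_2) \geq J_2(\pi_1)$ forces the last bracket to be nonpositive (ordering is preserved under standardisation), so it can be dropped from an upper bound. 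The two remaining brackets are each an evaluation-function difference of the single reward $s(R_1) - s(R_2)$ applied to a fixed policy, and these are bounded by $\max_\pi J^{\Delta}(\pi) - \min_\pi J^{\Delta}(\pi)$ for $\Delta = s(R_1) - s(R_2)$, which by Proposition~\ref{prop:J_norm} is a norm of $\Delta$. Invoking the bilipschitz equivalence of all norms on the finite-dimensional space $\mathrm{Im}(c)$, this norm is bounded by a constant multiple of $m(s(R_1), s(R_2)) = d(R_1, R_2)$ using the lower admissibility constant $\ell$ of $m$.

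Assembling these pieces yields $J_1(\pi_1) - J_1(\pi_2) \leq U \cdot (\max_\pi J_1(\pi) - \min_\pi J_1(\pi)) \cdot d(R_1, R_2)$ for a constant $U$ depending only on $\ell$, the norm $n$, and the bilipschitz constants relating $n$, the $J$-norm, and the norm underlying $m$ — crucially a constant independent of $R_1$, $R_2$, $\pi_1$, $\pi_2$, which is exactly what soundness demands. The main obstacle, I expect, is making the reduction from the ordering hypothesis $J_2(\pi_2) \geq J_2(\pi_1)$ rigorous at the level of \emph{standardised} rewards: one must verify carefully that canonicalisation plus positive normalisation preserves not just the \emph{ordering} but the \emph{sign structure} needed to discard the last bracket, and that the normalisation denominators $n(c(R_i))$ (including the degenerate case $n(c(R)) = 0$, where $s(R) = c(R) = 0$ and the regret contribution vanishes) are handled uniformly. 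Finite-dimensionality of $\mathrm{Im}(c)$, which is available since $\States$ and $\Actions$ are finite, is what makes the repeated appeals to bilipschitz equivalence of norms legitimate and is the linchpin of the whole argument.
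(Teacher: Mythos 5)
Your proposal is correct and follows the same overall architecture as the paper's proof: standardise via $s = c/n(c(\cdot))$, bound the standardised regret by $d(R_1,R_2)$, then un-standardise using the fact that $\max_\pi J(\pi) - \min_\pi J(\pi)$ is a norm on $\mathrm{Im}(c)$ (Proposition~\ref{prop:J_norm}) that is equivalent to $n$ by finite-dimensionality. Where you differ is the middle step. The paper first bounds the \emph{per-policy} discrepancy $|J_1^S(\pi) - J_2^S(\pi)| \le K \cdot p(s(R_1),s(R_2))$ via an $L_\infty$ bound with the factor $1/(1-\gamma)$ (Lemmas~\ref{lemma:same_policy_bound} and \ref{lemma:norm_to_distance_fixed_pi}), and then converts two such per-policy bounds into a regret bound with a factor of $2$ (Lemma~\ref{lemma:policy_optimisation_bound}). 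You instead write the standardised regret as $J^{\Delta}(\pi_1)-J^{\Delta}(\pi_2)$ plus a nonpositive term, where $\Delta = s(R_1)-s(R_2) \in \mathrm{Im}(c)$, and bound that difference directly by the $J$-range norm of $\Delta$, which norm-equivalence converts into a multiple of $d(R_1,R_2)$. This is a legitimate and arguably cleaner route, since it reuses Proposition~\ref{prop:J_norm} in place of the $1/(1-\gamma)$ machinery; the paper's route has the advantage that its intermediate lemmas are reused verbatim in the EPIC analysis and yield explicit constants.

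One phrasing in your argument is literally false and worth fixing: the two brackets $J^{\Delta}(\pi_1)$ and $J^{\Delta}(\pi_2)$ are \emph{not each} bounded by $\max_\pi J^{\Delta}(\pi) - \min_\pi J^{\Delta}(\pi)$ (take $\Delta$ a nonzero constant reward: the range is zero but each evaluation is large). Only their \emph{difference} is bounded by the range, which is all you actually use, so the proof survives --- but the claim as written should be restated for the difference. Likewise, the regret under $R_1$ equals $n(c(R_1))$ times the standardised regret, not $(\max_\pi J_1(\pi) - \min_\pi J_1(\pi))$ times it; the latter enters only up to the bilipschitz constant from Proposition~\ref{prop:J_norm}, which you do invoke, so this is a matter of wording rather than substance.
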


This means that any STARC metric gives us an upper bound on worst-case regret.
Next, we will show that STARC metrics also induce a \emph{lower} bound on worst-case regret.
It may not be immediately obvious why this property is desirable. 
To see why this is the case, note that if a pseudometric $d$ on $\mathcal{R}$ does not induce a lower bound on worst-case regret, then there are reward functions that have a \emph{low} worst-case regret, but a \emph{large} distance under $d$. This would in turn mean that $d$ is not \emph{tight}, and that it should be possible to improve upon it. In other words, if we want a small distance under $d$ to be both sufficient \emph{and necessary} for low worst-case regret, then $d$ must induce both an upper \emph{and a lower} bound on worst-case regret.
As such, we also introduce the following definition:

\begin{definition}\label{def:STARC_complete}
A pseudometric $d$ on $\R$ is \emph{complete} if there exists a positive constant $L$, such that for any reward functions $R_1$ and $R_2$, there exist two policies $\pi_1$ and $\pi_2$ such that $J_2(\pi_2) \geq J_2(\pi_1)$ and
$$
J_1(\pi_1) - J_1(\pi_2) \geq L \cdot (\max_\pi J_1(\pi) - \min_\pi J_1(\pi)) \cdot d(R_1, R_2),
$$
and moreover, if both $\max_\pi J_1(\pi) - \min_\pi J_1(\pi) = 0$ and $\max_\pi J_2(\pi) - \min_\pi J_2(\pi) = 0$, then we have that $d(R_1, R_2) = 0$.
\end{definition}


The last condition is included to rule out certain pathological edge-cases.
Intuitively, if $d$ is sound, then a small $d$ is \emph{sufficient} for low regret, and if $d$ is complete, then a small $d$ is \emph{necessary} for low regret. 
Soundness implies the absence of false positives, and completeness the absence of false negatives.
Our second main result is that all STARC metrics are complete:

\begin{theorem}\label{thm:STARC_complete}
All STARC metrics are complete.
\end{theorem}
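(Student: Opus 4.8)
The plan is to reduce everything to the normalised canonical representatives $\hat R_i := s(R_i)$ and to show that, in these coordinates, the worst-case regret is comparable to $N(\hat R_1 - \hat R_2)$, where $N(R) := \max_\pi J_R(\pi) - \min_\pi J_R(\pi)$ is the norm on $\mathrm{Im}(c)$ supplied by Proposition~\ref{prop:J_norm}. The final clause of Definition~\ref{def:STARC_complete} (same policy order $\Rightarrow d=0$) is immediate from Proposition~\ref{prop:STARC_distance_zero}, so only the inequality needs work. First I would dispose of the degenerate cases: if $n(c(R_1))=0$ then $c(R_1)=0$, so $J_1$ is constant and the right-hand side vanishes, making $\pi_1=\pi_2$ a valid witness; if $n(c(R_2))=0$ then $R_2$ ranks all policies equally, so the constraint $J_2(\pi_2)\ge J_2(\pi_1)$ is vacuous and I can take $\pi_1,\pi_2$ to be a maximiser and minimiser of $J_1$, giving regret equal to the full range. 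In the main case, since $c$ is linear and potential shaping and $S'$-redistribution preserve every difference $J(\pi)-J(\pi')$ (this is why they preserve policy order), both the regret $J_1(\pi_1)-J_1(\pi_2)$ and the range $\max_\pi J_1-\min_\pi J_1$ scale by the positive factor $n(c(R_1))$ when passing from $R_1$ to $\hat R_1$, and the constraint is unaffected. Writing $W$ for the worst-case regret computed with $\hat R_1,\hat R_2$, the goal collapses to producing a constant with $W \ge L' \, d$.

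The key step is a closed form for $W$. Using $J_R(\pi)=\langle R,\eta^\pi\rangle$ with $\eta^\pi$ the (discounted) visitation measure, and the fact that visitation measures form a compact polytope $\mathcal D$, I would set $e=\eta^{\pi_1}-\eta^{\pi_2}$ and rewrite
$$W = \max\{\langle \hat R_1, e\rangle : e\in\mathcal E,\ \langle \hat R_2, e\rangle \le 0\},\qquad \mathcal E := \mathcal D-\mathcal D,$$
where $\mathcal E$ is a centrally symmetric polytope. This is a linear program that is feasible ($e=0$) and bounded, so LP strong duality applies, and since $\max_{e\in\mathcal E}\langle R,e\rangle = N(R)$ by construction, the dual simplifies to the clean identity
$$W=\min_{\lambda\ge 0} N(\hat R_1 - \lambda \hat R_2),$$
i.e.\ $W$ is the distance, in the norm $N$, from $\hat R_1$ to the positive ray $\{\lambda \hat R_2:\lambda\ge 0\}$. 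The maximiser of the primal also supplies the pair of policies that Definition~\ref{def:STARC_complete} asks me to exhibit.

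The heart of the argument is then to show this distance-to-a-ray is not much smaller than the distance $N(\hat R_1-\hat R_2)$ to the single point $\hat R_2$. Let $\lambda^\star\ge 0$ attain the minimum and put $w=\hat R_1-\lambda^\star\hat R_2$, so $W=N(w)$. Because $\hat R_1$ and $\hat R_2$ are unit vectors for the normalisation norm $n$, the reverse triangle inequality gives $|1-\lambda^\star| = |n(\hat R_1)-\lambda^\star n(\hat R_2)| \le n(w)$, and finite-dimensional norm equivalence on $\mathrm{Im}(c)$ gives $n(w)\le C_1 N(w)=C_1 W$. Hence $N(\hat R_1-\hat R_2)\le N(w)+|\lambda^\star-1|\,N(\hat R_2)\le (1+C_1\beta)\,W$, where $\beta$ bounds $N$ on the $n$-unit sphere. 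This is exactly where normalisation is indispensable: two rewards with the same policy order collapse to $\hat R_1=\hat R_2$ (applying the idempotent $c$ and then $n$-normalising to the characterisation in Proposition~\ref{prop:policy_order}), so the ray cannot approach $\hat R_1$ unless $\lambda^\star\approx 1$, and the bound on $|1-\lambda^\star|$ makes this quantitative.

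Finally I would convert $N(\hat R_1-\hat R_2)$ into $d$. Admissibility (Definition~\ref{def:admissible_metric}) gives $d=m(\hat R_1,\hat R_2)\le u\,p(\hat R_1-\hat R_2)$ for a norm $p$, and norm equivalence between $p$ and $N$ on $\mathrm{Im}(c)$ together with $N(\hat R_1)\le\beta$ lets me absorb all remaining constants, yielding a single $L>0$ (depending only on $c,n,m$ and the environment) with $J_1(\pi_1)-J_1(\pi_2)\ge L\,(\max_\pi J_1-\min_\pi J_1)\,d$ for the witnessing policies. I expect the main obstacle to be the duality identity for $W$ and, above all, the distance-to-ray versus distance-to-point comparison; the reverse-triangle bound on $|1-\lambda^\star|$ is the crucial trick that makes the latter go through with a uniform constant.
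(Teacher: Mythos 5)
Your proposal is correct, and it takes a genuinely different route from the paper's proof. The paper argues geometrically: it lower-bounds the \emph{angle} between $s(R_1)$ and $s(R_2)$ in terms of $d(R_1,R_2)$ (Lemma~\ref{lemma:lower_bound_lemma_1}), controls how that angle is distorted when projecting onto the space of occupancy measures via an invertible linear map (Lemma~\ref{lemma:lower_bound_lemma_2}), and then exhibits the witnessing policies inside an open ball contained in the occupancy polytope, finishing with trigonometric estimates. You instead obtain an \emph{exact} variational formula for the normalised worst-case regret via Lagrangian/LP duality over the centrally symmetric polytope $\mathcal{E}=\mathcal{D}-\mathcal{D}$, namely $W=\min_{\lambda\ge 0} N(\hat R_1-\lambda\hat R_2)$ with $N(R)=\max_\pi J(\pi)-\min_\pi J(\pi)$, and then compare this distance-to-ray with the distance-to-point $N(\hat R_1-\hat R_2)$ using the reverse triangle inequality $|1-\lambda^\star|\le n(w)$ (which is exactly where the unit normalisation $n(\hat R_1)=n(\hat R_2)=1$ does its work), before converting to $d$ by finite-dimensional norm equivalence on $\mathrm{Im}(c)$ and admissibility of $m$. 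The duality step is sound: the primal is feasible ($e=0$) and bounded ($\mathcal{E}$ compact), the support function of $\mathcal{E}$ is $N$ by construction, and the minimiser $\lambda^\star$ exists because $N(\hat R_2)>0$ in the non-degenerate case makes the objective coercive; note that this step leans on Proposition~\ref{prop:J_norm}, i.e.\ on $c$ removing both potential shaping and $S'$-redistribution, which is precisely where the argument would fail for EPIC-like canonicalisations. Your approach buys a cleaner and more explicit identity for the worst-case regret (and hence more interpretable constants), at the cost of invoking strong duality and the polytope structure of $\mathcal{D}$; the paper's route is more elementary but needs the two auxiliary geometric lemmas and the open-set property of $\mathrm{Im}(m)$. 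One small point to make explicit: in the degenerate case $n(c(R_2))=0$ with $n(c(R_1))>0$, your witnesses give regret equal to the full range, so the inequality requires $L\cdot d(R_1,R_2)\le 1$; since $s$ maps into a bounded set and $m\le u\cdot p$, $d$ is uniformly bounded and the final $L$ can be shrunk accordingly --- the same adjustment the paper makes in the last paragraph of its own proof.
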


Theorems~\ref{thm:STARC_sound} and \ref{thm:STARC_complete} together imply that, for any STARC metric $d$, we have that a small value of $d$ is both necessary and sufficient for a low regret.
This means that STARC metrics, in a certain sense, exactly capture what it means for two reward functions to be similar, and that we should not expect it to be possible to significantly improve upon them. We can make this claim formal as follows:

\begin{proposition}\label{prop:bilipschitz}
Any pseudometrics on $\R$ that are both sound and complete are bilipschitz equivalent.
\end{proposition}

This implies that all STARC metrics are bilipschitz equivalent. Moreover, any other pseudometric on $\R$ that induces both an upper and a lower bound on worst-case regret (as we define it) must also be bilipschitz equivalent to STARC metrics.

In Appendix~\ref{appendix:epic} and \ref{appendix:dard}, we provide an extensive analysis of both EPIC and DARD, and show that they fail to induce similar theoretical guarantees.

\section{Experimental Results}\label{section:experiments}


In this section we present our experimental results. First, we demonstrate that STARC metrics provide a better estimate of regret than EPIC and DARD in randomly generated MDPs. We then evaluate a STARC metric in a continuous environment.

\subsection{Large Numbers Of Small Random MDPs}
Our first experiment compares several STARC metrics to EPIC, DARD, and a number of other non-STARC baselines.
In total, our experiment covered 223 different pseudometrics (including rollout regret), derived by creating different combinations of canonicalisation functions, normalisations, and distance metrics. For details, see Appendix~\ref{appendix:exp_list_of_metrics}.
For each pseudometric, we generated a large number of random MDPs, and then measured how well the pseudometric correlates with \emph{regret} across this distribution.
The regret is defined analogously to Definition~\ref{def:soundness} and \ref{def:STARC_complete}, except that only optimal policies are considered -- for details, see Appendix~\ref{appendix:rollout}.
We used MDPs with 32 states, 4 actions, $\gamma=0.95$, a uniform initial state distribution, and randomly sampled sparse non-deterministic transition functions, and for each MDP, we generated several random reward functions. 
For details on the random generation process, see Appendix \ref{appendix:experimental_mdps}. 
We compared 49,152 reward function pairs (Appendix \ref{appendix:exp_number}), and used these to estimate how well each pseudometric correlates with regret. We show these correlations in Figure~\ref{fig:correlations}, and the full data is given in a table in Appendix~\ref{appendix:exp_results}. In Appendix~\ref{appendix:exp_norm}, we also provide tables that indicate the impact of changing the metric $m$ or the normalisation function $n$.

The canonicalisation functions we used were \verb|None| (which simply skips the canonicalisation step), $C^\mathrm{EPIC}$, $C^\mathrm{DARD}$, \verb|MinimalPotential| (which is the minimal \enquote{canonicalisation} that removes potential shaping but not $S'$-redistribution, and therefore is easier to compute), \verb|VALPotential| (which is given by $R(s,a,s') - V^\pi(s) + \gamma V^\pi(s')$), and \verb|VAL| (defined in Proposition \ref{proposition:value_canon}). For both $C^\mathrm{EPIC}$ and $C^\mathrm{DARD}$, both $\mathcal{D}_\mathcal{S}$ and $\mathcal{D}_\mathcal{A}$ were chosen to be uniform over $\mathcal{S}$ and $\mathcal{A}$. For both \verb|VALPotential| and \verb|VAL|, $\pi$ was chosen to be the uniformly random policy.
Note that \verb|VAL| is the only canonicalisation which removes both potential shaping and $S'$-redistribution, and thus the only one that meets Definition~\ref{def:canonicalisation_function} --- for this reason, it is listed as \enquote{STARC-VAL} in Figure~\ref{fig:correlations}. For the full details about which pseudometrics were chosen, and why, see Appendix~\ref{appendix:exp_list_of_metrics}

\begin{figure}[ht]
    \centering
    \includegraphics[width=0.7\textwidth]{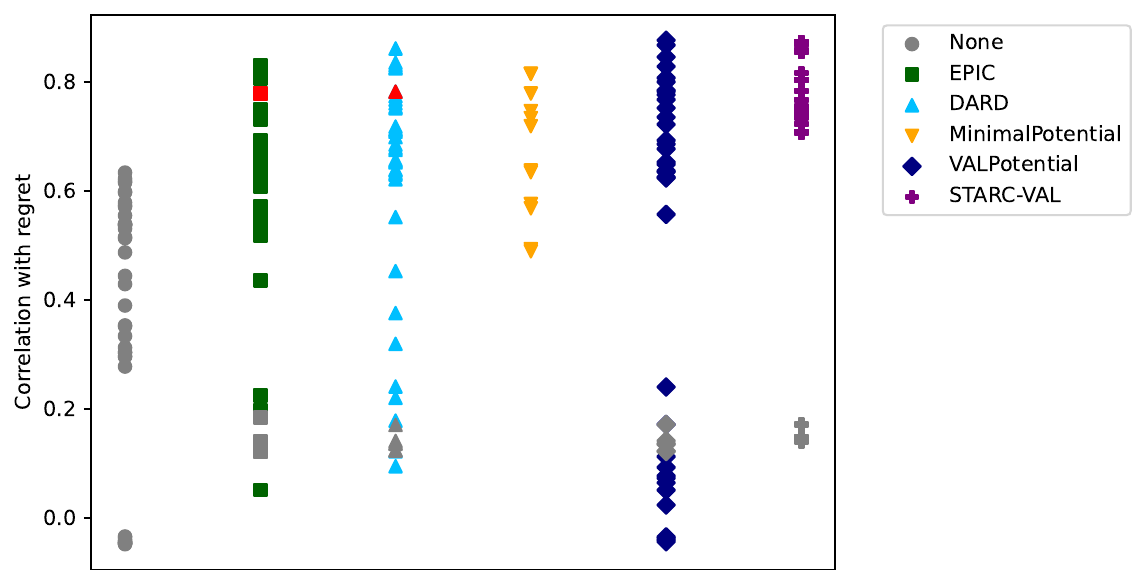}
    \caption{
        This figure displays the correlation to regret for several pseudometrics. Each point represents one pseudometric, i.e.\ one unique combination of canonicalisation $c$, normalisation $n$, and distance metric $m$.
        They are grouped together based on their canonicalisation function, with each column corresponding to a different canonicalisation function.
        Pseudometrics which skip canonicalisation or normalisation are shown in grey.
        The versions of EPIC and DARD that use the $L_2$ norm for both normalisation $n$ and distance metric $m$ are highlighted in red, as these are the original versions given in \cite{epic} and \cite{dard}.
        The STARC metrics, which are canonicalised using VAL, are reliably better indicators of regret than the other pseudometrics.
    }
    \label{fig:correlations}
\end{figure}

As we can see, the STARC metrics based on \verb|VAL| perform noticeably better than all pre-existing pseudometrics -- for instance, the correlation of EPIC to regret is 0.778, DARD's correlation is 0.782, while \verb|VAL|'s correlation is 0.856 (when using $L_2$ for both $n$ and $m$, which is the same as EPIC and DARD).
Out of the 10 best pseudometrics, 8 use \verb|VAL| (and the other 2 both use \verb|VALPotential|).
Moreover, for each choice of $n$ and $m$, we have that the \verb|VAL| canonicalisation performs better than the EPIC canonicalisation in 40 out of 42 cases.\footnote{The only exceptions are when no normalisation is used and $m = L_\infty$, and when $n = \texttt{weighted-}L_2$ and $m = \texttt{weighted-}L_\infty$. However, in the first case, both the EPIC-based and the VAL-based pseudometric perform badly (since no normalisation is used), and in the second case, the difference between them is not large.}
Taken together, these results suggest that STARC metrics robustly perform better than the existing alternatives.

Our results also suggest that the choice of normalisation function $n$ and metric $m$ can have a significant impact on the pseudometric's accuracy. For instance, when canonicalising with \verb|VAL|, it is better to use the $L_1$ norm than the $L_2$ norm for both normalisation and taking the distance -- this increases the correlation with regret from 0.856 to 0.873. Another example is the EPIC canonicalisation -- when paired with the weighted $L_\infty$ norm for normalisation and the (unweighted) $L_\infty$ norm for taking the distance, instead of using the $L_2$ norm for both, its correlation decreases from 0.778 to 0.052.
As we can see in Figure~\ref{fig:correlations}, this effect appears to be more prominent for the non-STARC metrics.
Another thing to note is that it seems like \verb|VALPotential| can perform as well as \verb|VAL| despite not canonicalising for $S'$-redistribution, but only when a ($\tau$-)weighted norm is used.
This may be because $\tau$-weighted norms set all impossible transitions to 0, and reduce the impact of very unlikely transitions; plausibly, this could in practice be similar to canonicalising for $S'$-redistribution.
When using \verb|VAL|, $L_1$ was the best unweighted norm for both $m$ and $n$ in our experiment.



\subsection{The Reacher Environment}

Our next experiment estimates the distance between several hand-crafted reward functions in the Reacher environment from MuJoCo \citep{mujoco}. This is a deterministic environment with an 11-dimensional continuous state space and a 2-dimensional continuous action space. The reward functions we used are:
\begin{enumerate}
    \item \verb|GroundTruth|: The Euclidean distance to the target, plus a penalty term for large actions.
    \item \verb|PotentialShaped|: \verb|GroundTruth| with random potential shaping.
    \item \verb|SecondPeak|: We create a second target in the environment, and reward the agent based on both its distance to this target, and to the original target, but give a greater weight to the original target.
    \item \verb|Random|: A randomly generated reward, implemented as an affine transformation from $s,a,s'$ to real numbers with the weights and bias randomly initialised.
    \item \verb|Negative|: Returns $-$\verb|GroundTruth|.
\end{enumerate}

We expect \verb|GroundTruth| to be equivalent to \verb|PotentialShaped|, similar to \verb|SecondPeak|, orthogonal to \verb|Random|, and opposite to \verb|Negative|. We used the VAL canonicalisation function with the uniform policy, and normalised and took the distance with the $L_2$-norm. This pseudometric was then estimated through sampling; full details can be found in Appendix~\ref{appendix:approximating_starc_in_large_envs} and \ref{appendix:experimental_reacher}. The results of this experiment are given in Table~\ref{table:continuous_experiment}. As we can see, the relative ordering of the reward functions match what we expect. However, the magnitudes of the estimated distances are noticeably larger than their real values; for example, the actual distance between \verb|GroundTruth| and \verb|PotentialShaped| is $0$, but it is estimated as $\approx 0.9$. The reason for this is likely that the estimation involves summing over absolute values, which makes all noise positive. Nonetheless, for the purposes of \emph{ranking} the rewards, this is not fundamentally problematic.




\begin{table}[ht]
    \centering
    \begin{tabular}{llll}
    \verb|PotentialShaped| & \verb|SecondPeak| & \verb|Random| & \verb|Negative|\\
    0.8968 & 1.2570 & 1.3778 & 1.706
    \end{tabular}
    \caption{This figure displays the estimated distance (using $c = \mathrm{VAL}$, $n = L_2$, and $m = L_2$) between each reward function in the Reacher environment and the \texttt{GroundTruth} reward function.}
    \label{table:continuous_experiment}
\end{table}

\section{Discussion}\label{section:discussion}

We have introduced STARC metrics, and demonstrated that they provide compelling theoretical guarantees.
In particular, we have shown that they are both sound and complete, which means that they induce both an upper and a lower bound on worst-case regret. 
As such, a small STARC distance is both necessary and sufficient to ensure that two reward functions induce a similar ordering of policies.
Moreover, any two pseudometrics that are both sound and complete must be bilipschitz equivalent.
This means that any pseudometric on $\mathcal{R}$ that has the same theoretical guarantees as STARC metrics must be equivalent to STARC metrics.
This means that we have provided what is essentially a complete answer to the question of how to correctly measure the distance between reward functions. 
Moreover, our experiments show that STARC metrics have a noticeably better empirical performance than any existing pseudometric in the current literature, for a wide range of environments. 
This means that STARC metrics offer direct practical advantages, in addition to their theoretical guarantees.
In addition to this, STARC metrics are both easy to compute, and easy to work with mathematically. As such, STARC metrics will be useful for both empirical and theoretical work on the analysis and evaluation of reward learning algorithms.



Our work can be extended in a number of ways. First of all, it would be desirable to establish more conclusively which STARC metrics work best in practice. 
Our experiments are indicative, but not conclusive. 
Secondly, our theoretical results assume that $\States$ and $\Actions$ are finite; it would be desirable to generalise them to continuous environments.
Third, we use a fairly strong definition of regret. We could consider some weaker criterion, that may allow for the creation of more permissive reward metrics.
Finally, our work considers the MDP setting -- it would be interesting to also consider other classes of environments.  
We believe that the multi-agent setting would be of particular interest, since it introduces new and more complex dynamics that are not present in the case of MDPs. 
%

\section*{Acknowledgements}
The authors wish to acknowledge and thank the financial support of the UK Research and Innovation (UKRI) [Grant ref EP/S022937/1] and the University of Bristol.


\bibliography{iclr2024}
\bibliographystyle{iclr2024}

\newpage
\appendix
\section{Comments on EPIC}\label{appendix:epic}

In this appendix, we will show that EPIC has a number of undesirable properties. For the sake of readability, the proofs of the theorems in this section are given in Appendix~\ref{appendix:epic_like_metrics}, rather than here.

First of all, while EPIC does induce an upper bound on a form of regret, this is not the type of regret that is typically relevant in practice. To demonstrate this, we will first provide a generalisation of the regret bound given in \cite{epic}:

\begin{theorem}\label{thm:EPIC_generalised_bound}
There exists a positive constant $U$, such that for any reward functions $R_1$ and $R_2$, and any $\tau$ and $\mu_0$, if two policies $\pi_1$ and $\pi_2$ satisfy that $J_2(\pi_2) \geq J_2(\pi_1)$, then we have that
$$
J_1(\pi_1) - J_1(\pi_2) \leq U \cdot L_2(R_1) \cdot D^\mathrm{EPIC}(R_1, R_2).
$$\end{theorem}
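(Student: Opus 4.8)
The plan is to reduce the claim to a statement about the EPIC-canonicalised rewards and then bound the resulting regret by the normalised EPIC distance via a Cauchy--Schwarz argument. Write $X_i = C^\mathrm{EPIC}(R_i)$. Since $C^\mathrm{EPIC}$ only standardises potential shaping, $X_i$ differs from $R_i$ by potential shaping, so by the telescoping identity for potential shaping we have $J_i(\pi) = \langle \eta^\pi, X_i\rangle + k_i$ for a constant $k_i$ independent of $\pi$, where $\eta^\pi(s,a,s')$ is the discounted occupancy measure of $\pi$ over transitions (so that $\langle \eta^\pi, R\rangle = J(\pi)$ for every $R$). Consequently, both the regret $J_1(\pi_1) - J_1(\pi_2)$ and the hypothesis $J_2(\pi_2) \geq J_2(\pi_1)$ are unchanged if we replace each $R_i$ by $X_i$.

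Next I would set $\Delta = \eta^{\pi_1} - \eta^{\pi_2}$, so that the regret equals $\langle \Delta, X_1\rangle$ and the hypothesis becomes $\langle \Delta, X_2\rangle \leq 0$. Assuming for now that $L_{2,\mathcal{D}}(X_1), L_{2,\mathcal{D}}(X_2) > 0$, write $\hat X_i = X_i / L_{2,\mathcal{D}}(X_i)$, so that $\langle \Delta, \hat X_2\rangle \leq 0$ and, recalling the $L_2$-form of EPIC given in the preliminaries, $L_{2,\mathcal{D}}(\hat X_1 - \hat X_2) = 2 D^\mathrm{EPIC}(R_1,R_2)$. The key step is the decomposition
$$
\langle \Delta, \hat X_1\rangle = \langle \Delta, \hat X_1 - \hat X_2\rangle + \langle \Delta, \hat X_2\rangle \leq \langle \Delta, \hat X_1 - \hat X_2\rangle \leq L^\star_{2,\mathcal{D}}(\Delta)\cdot L_{2,\mathcal{D}}(\hat X_1 - \hat X_2),
$$
where $L^\star_{2,\mathcal{D}}$ is the dual of the weighted norm and the last inequality is Cauchy--Schwarz. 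Multiplying through by $L_{2,\mathcal{D}}(X_1)$ yields
$$
J_1(\pi_1) - J_1(\pi_2) = \langle \Delta, X_1\rangle \leq 2\,L_{2,\mathcal{D}}(X_1)\cdot L^\star_{2,\mathcal{D}}(\Delta)\cdot D^\mathrm{EPIC}(R_1,R_2).
$$

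It then remains to absorb the two factors into $U$ and $L_2(R_1)$. For the first, $C^\mathrm{EPIC}$ is a linear map on the finite-dimensional space $\R$, so $L_{2,\mathcal{D}}(X_1) \leq \kappa\cdot L_2(R_1)$, where $\kappa$ bounds the operator norm of $C^\mathrm{EPIC}$ together with the equivalence constant between $L_{2,\mathcal{D}}$ and $L_2$. For the second, each $\eta^\pi$ is non-negative with total mass $1/(1-\gamma)$, so $\|\Delta\|_1 \leq 2/(1-\gamma)$ and hence $L^\star_{2,\mathcal{D}}(\Delta) \leq \kappa'/(1-\gamma)$ for a constant $\kappa'$ arising only from norm equivalence on $\R$ and the weights $\mathcal{D}$. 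Setting $U = 2\kappa\kappa'/(1-\gamma)$ closes the main case.

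The main obstacle, and the part requiring the most care, is obtaining a single constant $U$ valid for \emph{all} $\tau$ and $\mu_0$: the occupancy measure $\Delta$ genuinely depends on the dynamics, and the argument succeeds only because its $\ell_1$-mass is controlled by $1/(1-\gamma)$ independently of $\tau$ and $\mu_0$. The remaining work is routine but must still be discharged: handling the degenerate cases $L_{2,\mathcal{D}}(X_1) = 0$ (then $J_1$ is constant in $\pi$ and the regret vanishes) and $L_{2,\mathcal{D}}(X_2) = 0$ (where the normalisation, and hence $D^\mathrm{EPIC}$, must be read according to the convention of \cite{epic}), and checking that the $L_2$-form of EPIC quoted above---which relies on $C^\mathrm{EPIC}(R)$ being mean-zero under $\mathcal{D}$ so that Pearson distance coincides with normalised $L_2$ distance---is applied with matching sampling distributions, as flagged in the footnote.
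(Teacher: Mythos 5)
Your proof is correct, and its skeleton matches the paper's: exploit that $C^\mathrm{EPIC}(R_i)$ differs from $R_i$ by potential shaping so that $J_i$ shifts only by a policy-independent constant, normalise, use the hypothesis $J_2(\pi_2)\geq J_2(\pi_1)$ to discard the term $\langle\Delta,\hat X_2\rangle\leq 0$, bound the remaining cross term by a dynamics-independent constant times $D^\mathrm{EPIC}$, and un-normalise by $L_{2,\mathcal{D}}(X_1)\lesssim L_2(R_1)$. The execution differs in two respects worth noting. First, where you bound $\langle\Delta,\hat X_1-\hat X_2\rangle$ in one shot via the dual of the weighted $L_2$-norm applied to the occupancy-measure difference $\Delta$ (with $\lVert\Delta\rVert_1\leq 2/(1-\gamma)$ supplying uniformity over $\tau$ and $\mu_0$), the paper bounds $\lvert J_1^S(\pi)-J_2^S(\pi)\rvert$ separately for each of the two policies via an $L_\infty$ estimate and finite-dimensional norm equivalence, then combines them in a dedicated lemma; the two routes are equivalent up to constants, and yours is the more compact. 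Second, your argument is tied to the specific $L_2$ formulation of EPIC (so that the factor $2D^\mathrm{EPIC}$ appears exactly), whereas the paper proves the statement for the whole class of \emph{EPIC-like metrics} built from any linear potential-shaping canonicalisation, any norm, and any admissible metric, recovering EPIC as a special case; this extra generality is what the abstract lemma chain buys. The degenerate cases you flag ($L_{2,\mathcal{D}}(X_i)=0$) are handled no more explicitly in the paper's own proof, so leaving them as routine is acceptable.
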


Theorem~\ref{thm:EPIC_generalised_bound} covers the special case when $\pi_1$ is optimal under $R_1$, and $\pi_2$ is optimal under $R_2$. This means that this theorem is a generalisation of the bound given in \cite{epic}.
However, we will next show that EPIC does not induce a regret bound of the form given in Definition~\ref{def:soundness}:

\begin{theorem}\label{thm:EPIC_soundness}
EPIC is not sound, for any choice of $\tau$ or $\mu_0$.
\end{theorem}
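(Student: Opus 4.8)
The plan is to show that soundness fails by exhibiting, for the arbitrary fixed $\tau$ and $\mu_0$, a sequence of reward pairs whose normalised regret stays bounded below by a fixed positive constant while their EPIC distance tends to $0$; this makes the ratio in Definition~\ref{def:soundness} unbounded, so no constant $U$ can exist. The mechanism I would exploit is that $C^\mathrm{EPIC}$ standardises potential shaping but not $S'$-redistribution (as noted after Definition~\ref{def:canonicalisation_function}). Concretely, I would first produce a reward $\Delta_0$ that is \emph{invisible to returns} --- it adds a policy-independent constant to every $J$ --- yet has $C^\mathrm{EPIC}(\Delta_0) \neq 0$. Such a $\Delta_0$ is exactly a nonzero $S'$-redistribution (with respect to $\tau$) that is not also potential shaping: by Proposition~\ref{prop:policy_order} an $S'$-redistribution leaves the policy ordering, and in fact every return, unchanged, while a short computation shows that $C^\mathrm{EPIC}$ (which averages over an \emph{independent} $S' \sim \mathcal{D}_\States$ rather than over $\tau$) does not annihilate it.

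Next, I would fix two reward functions $\rho_1, \rho_2$ with $\rho_2 = -\rho_1$, where $\rho_1$ is chosen so that its evaluation function $J[\rho_1]$ is non-constant across policies, and set $R_1 = B\,\Delta_0 + \rho_1$ and $R_2 = B\,\Delta_0 + \rho_2$ for a scalar $B \to \infty$. Since $B\,\Delta_0$ does not affect any return, we have $J_1 = J[\rho_1]$ and $J_2 = J[\rho_2] = -J_1$. Taking a policy $\pi_1$ that maximises $J_1$ and a policy $\pi_2$ that minimises $J_1$ gives $J_2(\pi_2) \geq J_2(\pi_1)$ and $J_1(\pi_1) - J_1(\pi_2) = \max_\pi J_1(\pi) - \min_\pi J_1(\pi) > 0$, so the left-hand side of the soundness inequality equals the normalising factor $\max_\pi J_1(\pi) - \min_\pi J_1(\pi)$, a fixed positive constant independent of $B$.

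It then remains to show $D^\mathrm{EPIC}(R_1,R_2) \to 0$ as $B\to\infty$. Using linearity of $C^\mathrm{EPIC}$, I would write $C^\mathrm{EPIC}(R_i) = B\,C^\mathrm{EPIC}(\Delta_0) + C^\mathrm{EPIC}(\rho_i)$, so that in the normalised form $C^\mathrm{EPIC}(R_i)/L_{2,\mathcal{D}}(C^\mathrm{EPIC}(R_i))$ the bounded term $C^\mathrm{EPIC}(\rho_i)$ becomes negligible and both normalised canonicalisations converge to the common unit vector $C^\mathrm{EPIC}(\Delta_0)/L_{2,\mathcal{D}}(C^\mathrm{EPIC}(\Delta_0))$. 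By the weighted-$L_2$ characterisation of EPIC recalled in Section~\ref{section:preliminaries}, $D^\mathrm{EPIC}(R_1,R_2)$ equals half the $L_{2,\mathcal{D}}$ distance between these two normalised vectors, which therefore tends to $0$. Combining the three steps, the ratio of regret to $(\max_\pi J_1(\pi) - \min_\pi J_1(\pi))\cdot D^\mathrm{EPIC}(R_1,R_2)$ diverges, contradicting soundness for any $U$.

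The main obstacle is the first step: guaranteeing that a suitable $\Delta_0$ exists for \emph{every} $\tau$ and $\mu_0$, and that $C^\mathrm{EPIC}(\Delta_0)\neq 0$. When some $\tau(s,a)$ has at least two successors $s', s''$, an explicit choice supported on that state-action (for instance $+\tau(s''\mid s,a)$ on $(s,a,s')$ and $-\tau(s'\mid s,a)$ on $(s,a,s'')$) is $\tau$-mean-zero but not of potential-shaping form, so it survives $C^\mathrm{EPIC}$. When $\tau$ is deterministic everywhere, I would instead place $\Delta_0$ on a \emph{non-occurring} transition $(s,a,s')$ with $\tau(s'\mid s,a)=0$; such transitions exist whenever $|\States|\geq 2$, again do not affect returns, and are not annihilated by $C^\mathrm{EPIC}$. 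Verifying non-membership in the kernel of $C^\mathrm{EPIC}$ (equivalently, that $\Delta_0$ is not potential shaping plus a constant) is the one place requiring a careful, case-based check, but in each case a one- or two-entry choice of $\Delta_0$ settles it.
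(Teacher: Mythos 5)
Your proposal is correct and follows essentially the same route as the paper's proof: both exploit the fact that $S'$-redistribution leaves all returns (and hence regret) unchanged while surviving $C^\mathrm{EPIC}$, and then exhibit a sequence of reward pairs with maximal normalised regret ($R_2$ built from $-R_1$) whose EPIC distance tends to zero. The only cosmetic difference is that you inflate the $S'$-redistribution component ($B \to \infty$) whereas the paper shrinks the regret-relevant component ($\epsilon \to 0$) and renormalises; by scale-invariance of EPIC these give the same family of counterexamples.
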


These results may at first seem paradoxical, since the definition of soundness is quite similar to the statement of Theorem~\ref{thm:EPIC_generalised_bound}.
The reason why these statements can both be true simultaneously is that $L_2(R_1)$ 
can be arbitrarily large even as $\max_\pi J_1(\pi) - \min_\pi J_1(\pi)$ becomes arbitrarily small.
Moreover, we argue that it is more relevant to compare $J_1(\pi_1) - J_1(\pi_2)$ to $\max_\pi J_1(\pi) - \min_\pi J_1(\pi)$, rather than $L_2(R_1)$.
First of all, note that it is not informative to just know the absolute value of $J_1(\pi_1) - J_1(\pi_2)$, since this depends on the scale of $R_1$. Rather, what we want to know how much reward we might lose, relative to the total amount of reward that could be had. This quantity is measured by $\max_\pi J_1(\pi) - \min_\pi J_1(\pi)$, not $L_2(R_1)$. For example, if $J_1(\pi_1) - J_1(\pi_2) = \epsilon$, but $\max_\pi J_1(\pi) - \min_\pi J_1(\pi)$ is just barely larger than $\epsilon$, then this should be considered to be a large loss of reward, even if $L_2(R_1) \gg \epsilon$. For this reason, we consider the regret bound given in Definition~\ref{def:soundness} to be more informative than the regret bound given in \cite{epic} and in Theorem~\ref{thm:EPIC_generalised_bound}.

Another relevant question is whether EPIC induces a \emph{lower} bound on worst-case regret. We next show that this is not the case, regardless of whether we consider the type of regret used in \cite{epic}, or the type of regret used in Theorem~\ref{thm:EPIC_generalised_bound}, or the type of regret used in Definition~\ref{def:soundness}:

\begin{theorem}\label{thm:EPIC_completeness}
There exist rewards $R_1$, $R_2$ such that $D^\mathrm{EPIC}(R_1, R_2) > 0$, but where $R_1$ and $R_2$ induce the same ordering of policies for any choice of $\tau$ or $\mu_0$.
\end{theorem}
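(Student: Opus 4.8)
The plan is to exploit the one structural feature that decouples $D^\mathrm{EPIC}$ from the policy ordering: EPIC is computed purely from $\mathcal{D}_\States$ and $\mathcal{D}_\Actions$ with $S,S'$ drawn independently, so it never consults $\tau$ at all, whereas the policy ordering is governed by $\tau$. As noted after Definition~\ref{def:canonicalisation_function}, $C^\mathrm{EPIC}$ standardises potential shaping but \emph{not} $S'$-redistribution, and this is exactly the gap I want to expose. The difficulty is that $S'$-redistribution is itself defined relative to $\tau$, so two fixed rewards cannot differ by a nontrivial $S'$-redistribution simultaneously for every $\tau$. In fact, one can verify that if $R_1$ and $R_2$ induce the same policy ordering for \emph{all} $\tau$ and $\mu_0$ and $\Actions$ has at least two elements, then $R_1$ and $R_2$ must differ only by potential shaping and positive scaling, both of which $D^\mathrm{EPIC}$ collapses to distance $0$. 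I will therefore sidestep this obstruction by choosing an environment in which the policy ordering is trivially identical.

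Concretely, I would take a single action $\Actions = \{a_0\}$ and two states $\States = \{1,2\}$. Then there is exactly one policy, so the \enquote{ordering of policies} is an ordering of a one-element set, and is trivially the same for every reward function, every $\tau$, and every $\mu_0$. Hence any pair $R_1, R_2$ automatically satisfies the second requirement of the theorem, and it only remains to exhibit $R_1, R_2$ with $D^\mathrm{EPIC}(R_1, R_2) > 0$.

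For this I would fix $\mathcal{D}_\States$ to be uniform (with $\mathcal{D}_\Actions = \delta_{a_0}$ forced) and pick rewards whose images under $C^\mathrm{EPIC}$ are not positive scalar multiples of one another, for example $R_1(s,a_0,s') = \mathbf{1}[\,s=1\,]$ and $R_2(s,a_0,s') = \mathbf{1}[\,s=1,\ s'=2\,]$. Evaluating $C^\mathrm{EPIC}(R)(s,a_0,s') = R(s,a_0,s') + \gamma\,\bar R(s') - \bar R(s) - \gamma\,\overline{\overline{R}}$, where $\bar R(s)$ is the $\mathcal{D}_\States$-average over the third argument and $\overline{\overline{R}}$ the overall average, one finds that $C^\mathrm{EPIC}(R_1)$ is constant in $s$ (it depends only on $s'$) while $C^\mathrm{EPIC}(R_2)$ genuinely varies in $s$. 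Consequently no positive affine relation $C^\mathrm{EPIC}(R_1) = a\,C^\mathrm{EPIC}(R_2) + b$ can hold, so the Pearson correlation is strictly below $1$ and, equivalently via $D^\mathrm{EPIC} = \tfrac12 L_{2,\mathcal{D}}(\widehat{C^\mathrm{EPIC}(R_1)}, \widehat{C^\mathrm{EPIC}(R_2)})$ (with $\widehat{(\cdot)}$ the $L_{2,\mathcal{D}}$-normalisation), we get $D^\mathrm{EPIC}(R_1,R_2) > 0$.

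The step I expect to require the most care is the last one: confirming that the two canonical forms are genuinely non-proportional. EPIC's canonicalisation can unexpectedly identify rewards that look different — it collapses potential shaping, and its $\gamma$-weighting can even render a \enquote{current-state} reward proportional to a \enquote{next-state} reward — so the two rewards must be chosen so that the difference between their images under $C^\mathrm{EPIC}$ survives normalisation. Once the concrete pair above is fixed, this reduces to a short finite computation (a $2\times2$ table of values), after which strict positivity of the Pearson distance is immediate.
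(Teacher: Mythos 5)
Your construction does not establish the theorem in the setting the paper intends, and the claim you use to justify retreating to that setting is false. The theorem, like all results in the paper, is implicitly parameterised by $\States$ and $\Actions$ (``Our results hold for any choice of $\States$, $\Actions$, $\tau$, $\mu_0$, and $\gamma$''), and the intended setting has at least two states and at least two actions --- with a single action there is exactly one policy, \emph{every} pair of rewards trivially ``induces the same ordering of policies,'' and the statement becomes vacuous for any metric that is not identically zero, so it says nothing about EPIC in particular (note also that the companion result that EPIC is not sound would be \emph{false} in a one-action environment, since regret is always zero there; the degenerate case is clearly outside the framework). The deeper problem is your intermediate assertion that, when $|\Actions| \geq 2$, two rewards inducing the same policy ordering for every $\tau$ and $\mu_0$ must differ by potential shaping and positive scaling. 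This is exactly what the paper's counterexample refutes, and believing it is what forced you into the degenerate environment.

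The construction you were missing: let $R_1(s_1,a_1,s_1)=1$, $R_1(s_1,a_1,s_2)=\epsilon$, $R_2(s_1,a_1,s_1)=\epsilon$, $R_2(s_1,a_1,s_2)=1$, with both rewards equal to $0$ on every other transition and $\epsilon \in (0,1)$. For any fixed $\tau$, replacing each reward by its expectation over $S'\sim\tau(s,a)$ (an $S'$-redistribution, which preserves the policy ordering) turns $R_1$ into $c_1\cdot\mathbf{1}[(s,a)=(s_1,a_1)]$ and $R_2$ into $c_2\cdot\mathbf{1}[(s,a)=(s_1,a_1)]$, where $c_1=\tau(s_1,a_1,s_1)+\epsilon\,\tau(s_1,a_1,s_2)$ and $c_2=\epsilon\,\tau(s_1,a_1,s_1)+\tau(s_1,a_1,s_2)$ are either both positive or both zero; hence for every $\tau$ and $\mu_0$ the two rewards differ by a $\tau$-dependent $S'$-redistribution and a $\tau$-dependent positive scaling, and so induce the same policy ordering. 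Yet globally they do not differ by potential shaping plus positive scaling: on the transitions where both rewards vanish, any witnessing potential $\Phi$ would have to satisfy $\gamma\Phi(s')=\Phi(s)$, which forces $\Phi\equiv 0$, and $(\epsilon,1)$ is not a positive multiple of $(1,\epsilon)$. Since $C^\mathrm{EPIC}$ identifies rewards exactly up to potential shaping and the normalisation only quotients out positive scaling, $D^\mathrm{EPIC}(R_1,R_2)>0$. The key point is that the equivalence witnessing ``same policy ordering'' is permitted to depend on $\tau$ even though the pair of rewards is fixed; once you see that, there is no need to collapse the policy set to a single point, and the theorem holds in every admissible environment.
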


This means that EPIC cannot induce a lower bound on worst-case regret, for almost any way of defining regret. Together, we think these results show that EPIC lacks the theoretical guarantees that we desire in a reward function pseudometric. 


\section{Comments on DARD}\label{appendix:dard}

In this appendix, we briefly discuss some of the properties of DARD. In so doing, we will criticise some of the choices made in the design of DARD, and argue that STARC metrics offer a better way to incorporate information about the environment dynamics.

To start with, recall that DARD uses the canonicalisation function $C^\mathrm{DARD}$, which is the function where $C^\mathrm{DARD}(R)(s,a,s')$ is given by
$$
R(s,a,s') + \mathbb{E} [\gamma R(s', A, S'') - R(s, A, S') - \gamma R(S', A, S'')],
$$
where $A \sim \mathcal{D}_\Actions$, $S' \sim \tau(s,A)$, and $S'' \sim \tau(s',A)$. 
Moreover, also recall that $C^\mathrm{DARD}$ only is designed to remove potential shaping, whereas the canonicalisation functions we specify in Definition~\ref{def:canonicalisation_function} are designed to remove both potential shaping and $S'$-redistribution.




Now, first and foremost, note that while $C^\mathrm{DARD}$ is designed to remove potential shaping, it does this in a somewhat strange way.
In particular, while it is shown in \cite{dard} that $C^\mathrm{DARD}(R_1) = C^\mathrm{DARD}(R_2)$ if $R_1$ and $R_2$ differ by potential shaping, it is in general \emph{not} the case that $R$ and $C^\mathrm{DARD}(R)$ differ by potential shaping.
To see this, note that the term $\gamma R(S', A, S'')$ depends on both $s$ and $s'$, which a potential shaping function cannot do.
This has a few important consequences.
In particular, it is unclear if $C^\mathrm{DARD}(R_1) = C^\mathrm{DARD}(R_2)$ \emph{only if} $R_1$ and $R_2$ differ by potential shaping. It is also unclear if $R$ and $C^\mathrm{DARD}(R)$ in general even have the same policy ordering. Note also that $C^\mathrm{DARD}(R)$ is not monotonic, in the sense that $C^\mathrm{DARD}(C^\mathrm{DARD}(R))$ and $C^\mathrm{DARD}(R)$ may be different. This seems undesirable.


Another thing to note is that $\mathbb{E}[C^\mathrm{DARD}(R)(S,A,S')]$ may not be $0$. This means that it is unclear whether or not DARD can be expressed in terms of norms, like EPIC can (c.f.\ Proposition~\ref{prop:epic_with_norms}).
It is also unclear if DARD induces an upper bound on regret, since \cite{dard} do not provide a regret bound. The fact that $C^\mathrm{DARD}(R)$ is not a potential shaping function does not necessarily imply that DARD does not induce an upper bound on regret. However, without a proof, there is a worry that there might be reward pairs with a bounded DARD distance but unbounded regret.

Yet another thing to note is that, while DARD is designed to be used in cases where the environment dynamics are known, it can still be influenced by the reward of transitions that are impossible according to the environment dynamics.
For example, the final term of $C^\mathrm{DARD}(R)(s,a,s')$ can be influenced by impossible transitions. This gives us the following result:

\begin{proposition}
There exists transition functions $\tau$ and initial state distributions $\InitStateDistribution$ for which DARD is not complete.    
\end{proposition}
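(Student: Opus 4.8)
The plan is to disprove the defining requirements of completeness (Definition~\ref{def:STARC_complete}) by exploiting the fact, noted above, that $C^\mathrm{DARD}$ can read reward values at transitions that are impossible under $\tau$. Concretely, it suffices to exhibit a transition function $\tau$, an initial-state distribution $\InitStateDistribution$, and two rewards $R_1, R_2$ that induce the \emph{same} ordering of policies while $D^\mathrm{DARD}(R_1,R_2) > 0$. This already contradicts the clause of Definition~\ref{def:STARC_complete} demanding that same-order rewards be assigned distance zero; and if in addition $R_1$ is chosen so that $\max_\pi J_1(\pi) - \min_\pi J_1(\pi) > 0$, then the main inequality fails too, since for same-order rewards every pair with $J_2(\pi_2) \geq J_2(\pi_1)$ has $J_1(\pi_1) - J_1(\pi_2) \leq 0$, whereas the right-hand side is strictly positive. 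Either way completeness is violated.

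First I would build the environment so that an impossible transition appears inside $C^\mathrm{DARD}$. Recall that the final term of $C^\mathrm{DARD}(R)(s,a,s')$ evaluates $R$ at $(S',A,S'')$, where $S'$ is a $\tau$-successor of $s$ and $S''$ is a $\tau$-successor of $s'$; there is no reason for $(S',A,S'')$ to be a valid transition. I would therefore construct a small MDP containing reachable states $p,q,s,s'$ and an action $\bar a$ with $p \in \mathrm{supp}\,\tau(s,\bar a)$ and $q \in \mathrm{supp}\,\tau(s',\bar a)$, but $q \notin \mathrm{supp}\,\tau(p,\bar a)$, so that $(p,\bar a, q)$ is impossible yet occurs with positive weight in the final term of $C^\mathrm{DARD}(R)(s,\bar a, s')$. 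I would take $\mathcal{D}_\States$ and $\mathcal{D}_\Actions$ to have full support (e.g.\ uniform), as DARD requires.

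Next I would choose $R_2 = R_1 + \Delta$, where $\Delta$ is supported only on the single impossible transition $(p,\bar a, q)$. Because $(p,\bar a, q)$ has probability zero under $\tau$, it never occurs on any positive-probability trajectory, so $G_1$ and $G_2$ agree on all such trajectories and hence $J_1 = J_2$; in particular $R_1$ and $R_2$ share the same policy ordering. By linearity of $C^\mathrm{DARD}$ we have $C^\mathrm{DARD}(R_2) = C^\mathrm{DARD}(R_1) + C^\mathrm{DARD}(\Delta)$, and a direct evaluation of the final term shows $C^\mathrm{DARD}(\Delta)(s,\bar a, s') \neq 0$ while $C^\mathrm{DARD}(\Delta)$ vanishes on other argument triples. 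Thus $C^\mathrm{DARD}(\Delta)$ is non-constant over the support of $(S,A,S') \sim \mathcal{D}_\States \times \mathcal{D}_\Actions \times \mathcal{D}_\States$.

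The hard part is concluding $D^\mathrm{DARD}(R_1,R_2) > 0$, because the Pearson distance is invariant under positive affine rescaling, so a nonzero perturbation of a single coordinate is not by itself sufficient. I would arrange the example -- choosing $R_1$ and the MDP's dimensions -- so that $C^\mathrm{DARD}(R_1)$ takes equal values at two argument triples at which $C^\mathrm{DARD}(\Delta)$ differs; an affine map cannot separate coordinates it previously identified, so $C^\mathrm{DARD}(R_2)$ is not a positive affine transform of $C^\mathrm{DARD}(R_1)$, forcing the Pearson correlation strictly below $1$ and hence $D^\mathrm{DARD}(R_1,R_2) > 0$. In practice this last step is cleanest to discharge by exhibiting one explicit small MDP with explicit $R_1,R_2$ and computing both canonicalisations, while also taking care to avoid the degenerate zero-variance case in which the correlation is ill-defined. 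Having produced same-order rewards with positive DARD distance for a concrete $\tau$ and $\InitStateDistribution$, completeness fails.
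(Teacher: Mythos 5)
Your proposal is correct and follows essentially the same route as the paper's proof: construct rewards that agree on all transitions possible under $\tau$ (hence induce identical policy orderings) but differ on impossible transitions, which $C^\mathrm{DARD}$ nevertheless reads through its final term, yielding $D^\mathrm{DARD}(R_1,R_2)>0$. If anything you are more careful than the paper, which simply asserts $D^\mathrm{DARD}(R_1,R_2)>0$ for its concrete three-state example, whereas you explicitly flag and address the affine-invariance of the Pearson distance and the zero-variance degeneracy.
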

\begin{proof}
Consider an environment $(\States, \Actions, \tau, \mu_0, \_, \gamma)$ where $\States = \{s_1, s_2, s_3\}$, $\Actions = \{a_1, a_2\}$, and where the transition function is given by $\tau(s_1, a) = s_2$, $\tau(s_2, a) = s_3$, and $\tau(s_3, a) = s_1$, for any $a \in \Actions$. We may also suppose $\mu_0 = s_1$, and $\gamma = 0.9$.

Next, let $R_1 = R_2 = 0$ for all transitions which are possible under $\tau$, but let $R_1(s,a_1,s') = 1$, $R_1(s,a_2,s') = 0$, $R_2(s,a_1,s') = 0$, and $R_2(s,a_2,s') = 1$ for all transitions which are impossible under $\tau$. Now $D^\mathrm{DARD}(R_1,R_2) > 0$, even though $(\States, \Actions, \tau, \mu_0, R_1, \gamma)$ and $(\States, \Actions, \tau, \mu_0, R_2, \gamma)$ have exactly the same policy ordering.
\end{proof}

Together, the above leads us to worry that DARD might lead to misleading measurements. 
Therefore, we believe that STARC metrics offer a better way to incorporate knowledge about the transition dynamics into the reward metric, especially in the light of our results from Section~\ref{section:theoretical_results}.

\section{A Geometric Intuition for STARC Metrics}\label{appendix:geometric_intuition}

In this section, we will provide a geometric intuition for how STARC metrics work. This will help to explain why STARC metrics are designed in the way that they are, and how they work. It may also make it easier to understand some of our proofs.

First of all, note that the space of all reward functions $\mathcal{R}$ forms an $|\States||\Actions||\States|$-dimensional vector space. Next, recall that if two reward functions $R_1$ and $R_2$ differ by (some combination of) potential shaping and $S'$-redistribution, then $R_1$ and $R_2$ induce the same ordering of policies. Moreover, both of these transformations are \emph{additive}. In other words, they correspond to a set of reward functions $\{R_0\}$, such that $R_1$ and $R_2$ differ by a combination of potential shaping and $S'$-redistribution if and only if $R_1 - R_2 \in \{R_0\}$. This means that $\{R_0\}$ is a linear subspace of $\mathcal{R}$, and that for any reward function $R$, the set of all reward functions that differ from $R$ by a combination of potential shaping and $S'$-redistribution together form an affine subspace of $\mathcal{R}$.

A canonicalisation function is a linear map that removes the dimensions that are associated with $\{R_0\}$. In other words, they map $\mathcal{R}$ to an $|\States|(|\Actions|-1)$-dimensional subspace of $\mathcal{R}$ in which no reward functions differ by potential shaping or $S'$-redistribution. The null space of a canonicalisation function is always $\{R_0\}$. The canonicalisation function that is minimal for the $L_2$-norm is the orthogonal map that satisfies these properties, whereas other canonicalisation functions are non-orthogonal.

When we normalise the resulting reward functions by dividing by a norm $n$, we project the entire vector space onto the unit ball of $n$ (except the zero reward, which remains at the origin). The metric $m$ then measures the distance between the resulting reward functions on the surface of this sphere:

\begin{figure}[H]
    \centering
    \includegraphics[width=\textwidth/2]{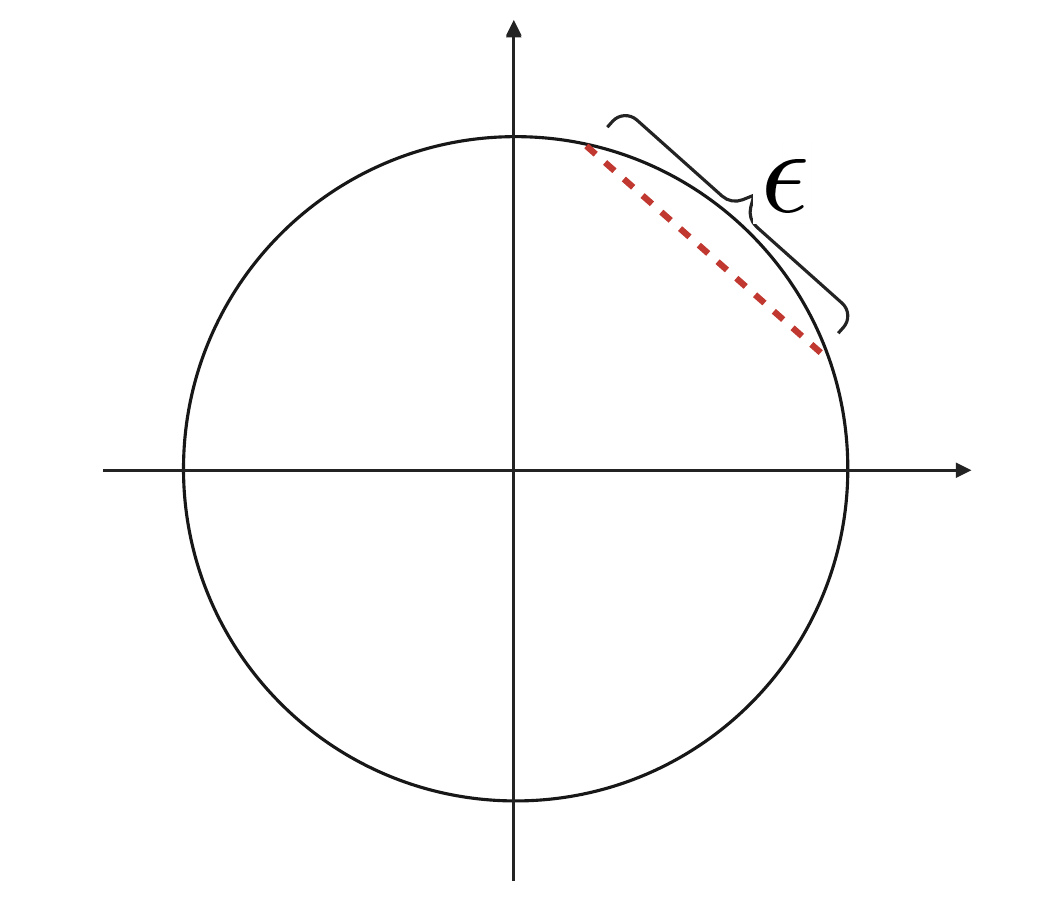}
    \label{fig:circle_diagram}
\end{figure}

To make this more clear, it may be worth considering the case of non-sequential decision making. Suppose we have a finite set of \emph{choices} $C$, and a utility function $U : C \to \mathbb{R}$. Given two distributions $\mathcal{D}_1$, $\mathcal{D}_2$ over $C$, we say that we prefer $\mathcal{D}_1$ over $\mathcal{D}_2$ if $\mathbb{E}_{c \sim \mathcal{D}_1}[U(c)] > \mathbb{E}_{c \sim \mathcal{D}_2}[U(c)]$. The set of all utility functions over $C$ forms a $|C|$-dimensional vector space. Moreover, in this setting, it is well-known that two utility functions $U_1$, $U_2$ induce the same preferences between all possible distributions over $C$ if and only if they differ by an affine transformation. Therefore, if we wanted to represent the set of all \emph{non-equivalent} utility functions over $C$, we may consider requiring that $U(c_0) = 0$ for some $c_0 \in C$, and that $L_2(U) = 1$ unless $U(c) = 0$ for all $c \in C$. Any utility function over $C$ is equivalent to some utility function in this set, and this set can in turn be represented as the surface of a $(|C|-1)$-dimensional sphere, together with the origin. 

This is essentially analogous to the normalisation that the canonicalisation function $c$ and the normalisation function $n$ perform for STARC metrics. Here $C$ is analogous to the set of all trajectories, the trajectory return function $G$ is analogous to $U$, and a policy $\pi$ induces a distribution over trajectories. It is worth knowing that affine transformations of the trajectory return function, $G$, correspond exactly to potential shaping and positive linear scaling of $R$ \citep[see ][ their Theorem 3.12]{invariance}. However, while the cases are analogous, it is not a direct correspondence, because not all distributions over trajectories can be realised as a policy in a given MDP.

Another perspective that may help with understanding STARC metrics comes from considering \emph{occupancy measures}. Specifically, for a given policy $\pi$, let its occupancy measure $\eta^\pi$ be the $|\States||\Actions||\States|$-dimensional vector in which the value of the $(s,a,s')$'th dimension is
$$
\sum_{t=0}^\infty \gamma^t \mathbb{P}_{\xi \sim \pi}(S_t = s, A_t = a, S_{t+1} = s').
$$
Now note that $J(\pi) = \eta^\pi \cdot R$. Therefore, by computing occupancy measures, we can divide the computation of $J$ into two parts, the first of which is independent of $R$, and the second of which is a linear function. Moreover, let $\Omega = \{\eta^\pi : \pi \in \Pi\}$ be the set of all occupancy measures. We now have that the policy value function $J$ of a reward function $R$ can be visualised as a linear function on this set. Moreover, if we have two reward functions $R_1$, $R_2$, then they can be visualised as two different linear functions on this set:

\begin{figure}[H]
    \centering
    \includegraphics[width=\textwidth/2]{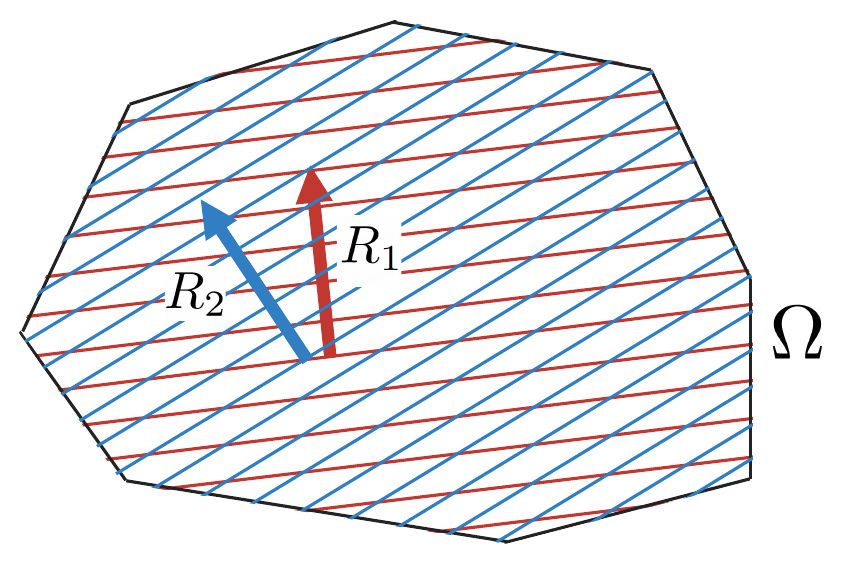}
    \label{fig:occupancy_measure}
\end{figure}

$\Omega$ is located in a $|\States|(|\Actions|-1)$-dimensional affine subspace of $\mathbb{R}^{|\States||\Actions||\States|}$, and contains a set which is open in this space \citep{skalse2023misspecification}. Moreover, it can be represented as the convex hull of a finite set of points \citep{splitting_randomised_policies}. It is thus a polytope.

From this image, it is visually clear that the worst-case regret of maximising $R_1$ instead of $R_2$, should be proportional to the angle between the projections of $R_1$ and $R_2$ onto $\Omega$. Moreover, this is what STARC metrics measure; any STARC metric is bilipschitz equivalent to the angle between reward functions projected onto $\Omega$. This in should in turn give an intuition for why the STARC distance between two rewards provide both an upper and lower bound on their worst-case regret.

\section{Approximating STARC Metrics in Large Environments}\label{appendix:approximating_starc_in_large_envs}

In small MDPs, STARC metrics can be computed exactly (in time that is polynomial in $|\States|$ and $|\Actions|$). However, most realistic MDPs are too large for this to be feasible. As such, we will here discuss how to approximate STARC metrics in large environments, including continuous environments.

First, recall the VAL canonicalisation function (Proposition~\ref{proposition:value_canon}), given by
$$
c(R)(s,a,s') = \mathbb{E}_{S' \sim \tau(s,a)}\left[R(s,a,S') - V^\pi(s) + \gamma V^\pi(S')\right],
$$
where $\pi$ can be any (fixed) policy. This canonicalisation function is straightforward to approximate in any environment where reinforcement learning can be used, including large-scale environments. To do this, first pick an arbitrary policy $\pi$, such as e.g.\ the uniformly random policy. Then compute an approximation of $V^\pi$ -- this can be done using a neural network updated with \emph{on-policy} Bellman updates \citep{sutton2018}. Note that $\pi$ should not be updated, and must remain fixed. Then simply estimate the expected value of $R(s,a,S') - V^\pi(s) + \gamma V^\pi(S')$ by sampling from $\tau$. 
This approximation can be computed in any environment where it is possible to sample from $\tau$ and approximate $V^\pi$ (which is to say, any environment where reinforcement learning is applicable). Note that we do not require direct access to $\tau$, we only need to be able to sample from it.

Next, note that if $v$ is an $n$-dimensional vector, and $\mathcal{U}$ is the uniform distribution over $\{1 \dots n\}$, then
$$
L_p(v) = \left(n \cdot \mathbb{E}_{i \sim \mathcal{U}}[|v_i|^p]\right)^{1/p} = n^{1/p} \cdot \mathbb{E}_{i \sim \mathcal{U}}[|v_i|^p]^{1/p}.
$$
This in turn means that
$$
L_p\left(\frac{v}{L_p(v)}, \frac{w}{L_p(w)}\right) = \mathbb{E}_{i \sim \mathcal{U}}\left[\Bigg|\frac{v_i}{\mathbb{E}_{j \sim \mathcal{U}}[|v_j|^p]^{1/p}} - \frac{w_i}{{\mathbb{E}_{j \sim \mathcal{U}}[|w_j|^p]^{1/p}}}\Bigg|^p\right]^{1/p}
$$
since the $n^{1/p}$-terms cancel out. Therefore, the normalisation step and distance step can also be estimated through sampling; simply sample enough random transitions to approximate each of the expectations in the expression above. Indeed, this can even be done if the state space and action space are continuous. Recall that the $L_p$-norm of an infinite-dimensional vector $v$ is defined as
$$
L_{p,X}(v) = \left(\int_X |v_x|^p \mathrm{d}x\right)^{1/p}
$$
where $X \subseteq \mathbb{R}^n$. This value can also be approximated through sampling.

It is also possible to approximate division by $n$ and taking the distance with $m$ using the Pearson distance, which is what EPIC does. 
In particular, let $\mathcal{D}$ be a distribution over $\States \times \Actions \times \States$ that assigns positive probability to all transitions, and let $R_1$, $R_2$ be reward functions such that $\mathbb{E}_{S,A,S' \sim \mathcal{D}}[R_1(S,A,S')] = \mathbb{E}_{S,A,S' \sim \mathcal{D}}[R_1(S,A,S')] = 0$. We then have that the Pearson distance $\sqrt{(1-\rho(R_1(S,A,S'),R_2(S,A,S')))/2}$ between $R_1(S,A,S')$ and $R_2(S,A,S')$, where $S,A,S' \sim \mathcal{D}$, is equal to
$$
\frac{1}{2} \cdot L_{2,W}\left(\frac{R_1}{L_{2,W}(R_1)}, \frac{R_2}{L_{2,W}(R_2)}\right),
$$
where $W$ is a weight matrix depending on $\mathcal{D}$, and $\rho$ denotes the Pearson correlation. For details, see the proof of Proposition~\ref{prop:epic_with_norms}. For this identity to hold, it is crucial that $\mathbb{E}_{S,A,S' \sim \mathcal{D}}[R_1(S,A,S')] = \mathbb{E}_{S,A,S' \sim \mathcal{D}}[R_1(S,A,S')] = 0$. However, this can easily be ensured; for an arbitrary canonicalisation function $c_1$, let $c_2(R) = c_1(R) - \mathbb{E}_{S,A,S' \sim \mathcal{D}}[c_1(R)(S,A,S')]$. If $c_1$ is a valid canonicalisation function, then so is $c_2$. The Pearson correlation can of course be estimated through sampling.


\section{Proofs of Miscellaneous Claims}\label{appendix:misc}

In this Appendix, we provide proofs for several miscellaneous claims and minor propositions made throughout the paper, especially in Section~\ref{section:examples}.

\begin{proposition}
    For any policy $\pi$, the function $c : \R \to \R$ given by 
    $$
    c(R)(s,a,s') = \mathbb{E}_{S' \sim \tau(s,a)}\left[R(s,a,S') - V^\pi(s) + \gamma V^\pi(S')\right]
    $$
    is a canonicalisation function.
\end{proposition}

\begin{proof}
To prove that $c$ is a canonicalisation function, we must show
\begin{enumerate}
    \item that $c$ is linear,
    \item that $c(R)$ and $R$ only differ by potential shaping and $S'$-redistribution, and
    \item that $c(R_1) = c(R_2)$ if and only if $R_1$ and $R_2$ only differ by potential shaping and $S'$-redistribution.
\end{enumerate}
We first show that $c$ is linear. Given a state $s$, let $v_s$ be the $|\States||\Actions||\States|$-dimensional vector where
$$
v_s[s',a,s''] = \sum_{i=0}^\infty \gamma^i \cdot \mathbb{P}(S_i = s', A_i = a, S_{i+1} = s''),
$$
where the probability is given for a trajectory that is generated from $\pi$ and $\tau$, starting in $s$. Now note that $V^\pi(s) = v_s \cdot R$, where $R$ is represented as a vector. Using these vectors $\{v_s\}$, it is possible to express $c$ as a linear transformation.


To see that $c(R)$ and $R$ differ by potential shaping and $S'$-redistribution, it is sufficient to note that $V^\pi$ acts as a potential function, and that setting $R_2(s,a,s') = \mathbb{E}_{S' \sim \tau(s,a)}[R_1(s,a,S')]$ is a form of $S'$-redistribution.

To see that $c(R_1) = c(R_2)$ if $R_1$ and $R_2$ differ by potential shaping and $S'$-redistribution, first note that if $R_1$ and $R_2$ differ by potential shaping, so that $R_2(s,a,s') = R_1(s,a,s') + \gamma \Phi(s') - \Phi(s)$ for some $\Phi$, then $V^\pi_2(s) = V^\pi_1(s) - \Phi(s)$ \citep[see e.g.\ Lemma B.1 in ][]{invariance}. This means that 
\begin{align*}
    c(R_2)(s,a,s') &= \mathbb{E}[R_2(s,a,S') + \gamma \cdot V^\pi_2(S') - V^\pi_2(s)]\\
    &= \mathbb{E}[R_1(s,a,S') + \gamma \cdot \Phi(S') - \Phi(s) + \gamma \cdot (V^\pi_1(S') - \Phi(S')) - (V^\pi_1(s) - \Phi(s))]\\
    &= \mathbb{E}[R_1(s,a,S') + \gamma \cdot V^\pi_1(S') - V^\pi_1(s)]\\
    &= c(R_1)(s,a,s').
\end{align*}
To see that $c(R_1) = c(R_2)$ only if $R_1$ and $R_2$ differ by potential shaping and $S'$-redistribution, first note that we have already shown that $R$ and $c(R)$ differ by potential shaping and $S'$-redistribution for all $R$. This implies that $R_1$ and $c(R_1)$ differ by potential shaping and $S'$-redistribution, and likewise for $R_2$ and $c(R_2)$. Then if $c(R_1) = c(R_2)$, we can combine these transformations, and obtain that $R_1$ and $R_2$ also differ by potential shaping and $S'$-redistribution.
This completes the proof.
\end{proof}

Note that this holds regardless of which environment $V^\pi$ is calculated in. Therefore, $V^\pi$ could be computed in an environment that is entirely distinct from the training environment. For example, $V^\pi$ could be computed in an environment (and using a policy) designed specifically to make this computation easy.

\begin{proposition}
    For any weighted $L_2$-norm, a minimal canonicalisation function exists and is unique.
\end{proposition}

\begin{proof}
Let $R_0$ be the reward function that is $0$ for all transitions.
First note that the set of all reward functions that differ from $R_0$ by potential shaping and $S'$-redistribution form a linear subspace of $\R$.
Let this space be denoted by $\mathcal{Y}$, and let $\mathcal{X}$ denote the orthogonal complement of $\mathcal{Y}$ in $\R$. Now any reward function $R \in \R$ can be uniquely expressed in the form $R_\mathcal{X} + R_\mathcal{Y}$, where $R_\mathcal{X} \in \mathcal{X}$ and $R_\mathcal{Y} \in \mathcal{Y}$.
Consider the function $c : \R \to \R$ where $c(R) = R_\mathcal{X}$. Now this function is a canonicalisation function such that $n(c(R)) \leq R'$ for all $R'$ such that $c(R) = c(R')$, assuming that $n$ is a weighted $L_2$-norm.

To see this, we must show that
\begin{enumerate}
    \item $c$ is linear,
    \item $c(R)$ and $R$ differ by potential shaping and $S'$-redistribution,
    \item $c(R_1) = c(R_2)$ if $R_1$ and $R_2$ differ by potential shaping and $S'$-redistribution, and
    \item $n(c(R)) \leq n(R')$ for all $R'$ such that $c(R) = c(R')$.
\end{enumerate}
It follows directly from the construction that $c$ is linear. 
To see that $c(R)$ and $R$ differ by potential shaping and $S'$-redistribution, simply note that $c(R) = R - R_\mathcal{Y}$, where $R_\mathcal{Y}$ is given by a combination of potential shaping and $S'$-redistribution of $R_0$. 
To see that $c(R_1) = c(R_2)$ if $R_1$ and $R_2$ differ by potential shaping and $S'$-redistribution, let $R_2 = R_1 + R'$, where $R'$ is given by potential shaping and $S'$-redistribution of $R_0$, and let $R_1 = R_\mathcal{X} + R_\mathcal{Y}$, where $R_\mathcal{X} \in \mathcal{X}$ and $R_\mathcal{Y} \in \mathcal{Y}$. Now $c(R_1) = R_\mathcal{X}$. 
Moreover, $R_2 = R_\mathcal{X} + R_\mathcal{Y} + R'$. We also have that $R' \in \mathcal{Y}$. 
We can thus express $R_2$ as $R_\mathcal{X} + (R_\mathcal{Y} + R')$, where $R_\mathcal{X} \in \mathcal{X}$ and $(R_\mathcal{Y} + R') \in \mathcal{Y}$, which implies that $c(R_2) = R_\mathcal{X}$. Therefore, if $R_1$ and $R_2$ differ by potential shaping and $S'$-redistribution, then $c(R_1) = c(R_2)$. To see that $c(R_1) = c(R_2)$ only if $R_1$ and $R_2$ differ by potential shaping and $S'$-redistribution, first note that we have already shown that $R$ and $c(R)$ differ by potential shaping and $S'$-redistribution for all $R$. This implies that $R_1$ and $c(R_1)$ differ by potential shaping and $S'$-redistribution, and likewise for $R_2$ and $c(R_2)$. Then if $c(R_1) = c(R_2)$, we can combine these transformations, and obtain that $R_1$ and $R_2$ also differ by potential shaping and $S'$-redistribution.

To see that $n(c(R)) \leq n(R')$ for all $R'$ such that $c(R) = c(R')$, first note that if $c(R) = c(R')$, then $R = R_\mathcal{X} + R_\mathcal{Y}$ and $R' = R_\mathcal{X} + R_\mathcal{Y}'$, where $R_\mathcal{X} \in \mathcal{X}$ and $R_\mathcal{Y}, R_\mathcal{Y}' \in \mathcal{Y}$. This means that $n(c(R)) = n(R_\mathcal{X})$, and $n(R') = n(R_\mathcal{X} + R_\mathcal{Y}')$. Moreover, since $n$ is a weighted $L_2$-norm, and since $R_\mathcal{X}$ and $R_\mathcal{Y}'$ are orthogonal, we have that $n(R_\mathcal{X} + R_\mathcal{Y}) = \sqrt{n(R_\mathcal{X})^2 + n(R_\mathcal{Y})^2} \geq n(R_\mathcal{X})$. This means that $n(c(R)) \leq n(R')$.

To see that this canonicalisation function is the unique minimal canonicalisation function for any weighted $L_2$-norm $n$, consider an arbitrary reward function $R$. Now, the set of all reward functions that differ from $R$ by potential shaping and $S'$-redistribution forms an affine space of $\R$, and a minimal canonicalisation function must map $R$ to a point $R'$ in this space such that $n(R') \leq n(R'')$ for all other points $R''$ in that space. If $n$ is a weighted $L_2$-norm, then this specifies a convex optimisation problem with a unique solution.
\end{proof}

Note that this proof only shows that a minimal canonicalisation function exists and is unique when $n$ is a (weighted) $L_2$-norm. It does not show that such a canonicalisation function \emph{only} exists for these norms, nor does it show that it is unique for all norms.

\begin{proposition}
    If $c$ is a canonicalisation function, then the function $n : \R \to \R$ given by $n(R) = \max_\pi J(\pi) - \min_\pi J(\pi)$ is a norm on $\mathrm{Im}(c)$.
\end{proposition}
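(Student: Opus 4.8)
The plan is to verify directly that $n$ satisfies the four defining properties of a norm on the subspace $\mathrm{Im}(c)$: non-negativity, point separation, absolute homogeneity, and the triangle inequality. The structural fact I would use throughout is that, for each fixed policy $\pi$, the evaluation $J(\pi)$ is a \emph{linear} function of $R$, since $G$ (and hence $J$) is a discounted sum that is linear in the reward. Because $\States$ and $\Actions$ are finite, the policy space is compact and $J(\cdot)$ is continuous, so $\max_\pi J(\pi)$ and $\min_\pi J(\pi)$ are attained; I would note this at the outset so that the extremising policies below are well-defined. Also, since $c$ is linear, $c(0)=0$, so the zero reward function is the zero vector of $\mathrm{Im}(c)$.

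Non-negativity is immediate from $\max_\pi J(\pi) \geq \min_\pi J(\pi)$. For absolute homogeneity I would scale $R$ by a constant $a$ and use linearity of $J$ in $R$: for $a \geq 0$, $\max_\pi aJ(\pi) = a\max_\pi J(\pi)$ and likewise for the min, giving $n(aR)=a\,n(R)$; for $a<0$ the extrema swap, $\max_\pi aJ(\pi) = a\min_\pi J(\pi)$ and $\min_\pi aJ(\pi) = a\max_\pi J(\pi)$, so $n(aR)=|a|\,n(R)$. For the triangle inequality, writing $J_1,J_2,J_{12}$ for the evaluations of $R_1,R_2,R_1+R_2$, linearity gives $J_{12}(\pi)=J_1(\pi)+J_2(\pi)$; picking $\pi^+,\pi^-$ attaining the max and min of $J_{12}$, I would expand $n(R_1+R_2) = [J_1(\pi^+)-J_1(\pi^-)] + [J_2(\pi^+)-J_2(\pi^-)]$ and bound each bracket by the corresponding range $\max_\pi J_i(\pi)-\min_\pi J_i(\pi)$, yielding $n(R_1+R_2)\leq n(R_1)+n(R_2)$.

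The main obstacle is point separation: I must show that for $R\in\mathrm{Im}(c)$, $n(R)=0$ forces $R=0$. The observation is that $n(R)=0$ says every policy has the same evaluation, i.e.\ $R$ induces the trivial policy ordering, which is precisely the ordering induced by the zero reward. By Proposition~\ref{prop:policy_order}, $R$ and $0$ then differ only by potential shaping, positive linear scaling, and $S'$-redistribution (scaling of $0$ being vacuous), so the defining \enquote{if and only if} clause of a canonicalisation function gives $c(R)=c(0)=0$. The final ingredient is that every canonicalisation function is idempotent: since $c(R')$ and $R'$ differ only by potential shaping and $S'$-redistribution, that same clause yields $c(c(R'))=c(R')$, so $c$ restricts to the identity on $\mathrm{Im}(c)$. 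Hence for $R\in\mathrm{Im}(c)$ we conclude $R=c(R)=0$, which establishes point separation and completes the verification that $n$ is a norm on $\mathrm{Im}(c)$.
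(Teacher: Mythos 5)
Your proof is correct and follows essentially the same route as the paper's: verify the four norm axioms directly, with the triangle inequality via linearity of $J$ in $R$ and point separation via Proposition~\ref{prop:policy_order} combined with the defining \enquote{if and only if} clause of a canonicalisation function. You are somewhat more explicit than the paper about the idempotency of $c$ on its image and about why the extrema over policies are attained, but the substance is the same.
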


\begin{proof}
To show that a function $n$ is a norm on $\mathrm{Im}(c)$, we must show that it satisfies:
\begin{enumerate}
    \item $n(R) \geq 0$ for all $R \in \mathrm{Im}(c)$.
    \item $n(R) = 0$ if and only if $R = R_0$ for all $R \in \mathrm{Im}(c)$.
    \item $n(\alpha \cdot R) = \alpha \cdot n(R)$ for all $R \in \mathrm{Im}(c)$ and all scalars $\alpha$.
    \item $n(R_1 + R_2) \leq n(R_1) + n(R_2)$ for all $R_1, R_2 \in \mathrm{Im}(c)$.
\end{enumerate}

Here $R_0$ is the reward function that is $0$ everywhere.
It is trivial to show that Axioms 1 and 3 are satisfied by $n$. For Axiom 2, note that $n(R) = 0$ exactly when $\max_\pi J(\pi) = \min_\pi J(\pi)$. If $R$ is $R_0$, then $J(\pi) = 0$ for all $\pi$, and so the \enquote{if} part holds straightforwardly. For the \enquote{only if} part, let $R$ be a reward function such that $\max_\pi J(\pi) = \min_\pi J(\pi)$. Then $R$ and $R_0$ induce the same policy ordering under $\tau$ and $\mu_0$, which means that they differ by potential shaping, $S'$-redistribution, and positive linear scaling (see Proposition~\ref{prop:policy_order}). 
Moreover, since $R_0$ is $0$ everywhere, this means that $R$ and $R_0$ in fact differ by potential shaping and $S'$-redistribution.
However, from the definition of canonicalisation functions, if $R_1, R_2 \in \mathrm{Im}(c)$ differ by potential shaping and $S'$-redistribution, then it must be that $R_1 = R_2$. Hence Axiom 2 holds as well.

We can show that Axiom 4 holds algebraically:
\begin{align*}
    n(R_1 + R_2) &= \max_\pi (J_1(\pi) + J_2(\pi)) - \min_\pi (J_1(\pi) + J_2(\pi))\\
    &\leq \max_\pi J_1(\pi) + \max_\pi J_2(\pi) - \min_\pi J_1(\pi) - \min_\pi J_2(\pi)\\
    &= (\max_\pi J_1(\pi) - \min_\pi J_1(\pi)) + (\max_\pi J_2(\pi) - \min_\pi J_2(\pi))\\
    &= n(R_1) + n(R_2)
\end{align*}

This means that $n(R) = \max_\pi J(\pi) - \min_\pi J(\pi)$ is a norm on $\mathrm{Im}(c)$.
\end{proof}

This means that we can normalise the reward functions so that $\max_\pi J(\pi) - \min_\pi J(\pi) = 1$, which is nice. This proposition will also be useful in our later proofs.

\begin{proposition}\label{prop:epic_with_norms}
EPIC can be expressed as 
$$
D^\mathrm{EPIC}(R_1,R_2) = \frac{1}{2} \cdot L_{2,\mathcal{D}}\left(\frac{C^\mathrm{EPIC}(R_1)}{L_{2,\mathcal{D}}(C^\mathrm{EPIC}(R_1))}, \frac{C^\mathrm{EPIC}(R_2)}{L_{2,\mathcal{D}}(C^\mathrm{EPIC}(R_2))}\right),
$$
where $L_{2,\mathcal{D}}$ is a weighted $L_2$-norm.
\end{proposition}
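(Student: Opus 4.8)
The plan is to reduce the Pearson distance to an ordinary (weighted) Euclidean distance between normalised vectors, which works precisely because $C^\mathrm{EPIC}$ always produces a \emph{centred} random variable. Throughout, I would view any reward as a vector indexed by $\SxAxS$, let $\mathcal{D} = \mathcal{D}_\States \times \mathcal{D}_\Actions \times \mathcal{D}_\States$ denote the product sampling distribution used in the Pearson step, and equip $\R$ with the weighted inner product $\langle f, g \rangle = \mathbb{E}_{(S,A,S') \sim \mathcal{D}}[f(S,A,S')\, g(S,A,S')]$, so that $L_{2,\mathcal{D}}(f) = \sqrt{\langle f, f\rangle}$.

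The first and central step is to show that $\mathbb{E}_{(S,A,S') \sim \mathcal{D}}[C^\mathrm{EPIC}(R)(S,A,S')] = 0$ for every $R$. I would expand $C^\mathrm{EPIC}(R)(S,A,S')$ into its four summands and take the outer expectation over $(S,A,S')$, being careful to keep the inner dummy samples (call them $\tilde S, \tilde S' \sim \mathcal{D}_\States$ and $\tilde A \sim \mathcal{D}_\Actions$) independent of the outer ones. Because every state or action index appearing in each of the four terms is then sampled independently from either $\mathcal{D}_\States$ or $\mathcal{D}_\Actions$, each term collapses to the same grand mean $\bar R := \mathbb{E}[R(X, B, Y)]$ (with $X, Y \sim \mathcal{D}_\States$ and $B \sim \mathcal{D}_\Actions$ independent), up to the discount factors. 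The four contributions are $\bar R$, $\gamma \bar R$, $-\bar R$, and $-\gamma \bar R$, which cancel. This is the step that genuinely uses that the \emph{same} distributions appear in $C^\mathrm{EPIC}$ and in the Pearson step, and it is where I expect the only real subtlety (correctly tracking the independence of inner and outer samples); it is also exactly why the footnote's variant with mismatched distributions breaks the identity.

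Given centredness, the remaining steps are routine. For centred $X := C^\mathrm{EPIC}(R_1)$ and $Y := C^\mathrm{EPIC}(R_2)$, the Pearson correlation reduces to $\rho(X,Y) = \langle X, Y\rangle / (L_{2,\mathcal{D}}(X)\, L_{2,\mathcal{D}}(Y)) = \langle \hat X, \hat Y\rangle$, where $\hat X = X / L_{2,\mathcal{D}}(X)$ and $\hat Y = Y / L_{2,\mathcal{D}}(Y)$ are the unit-normalised vectors. Expanding the squared norm of their difference then gives $L_{2,\mathcal{D}}(\hat X - \hat Y)^2 = \langle \hat X, \hat X\rangle - 2\langle \hat X, \hat Y\rangle + \langle \hat Y, \hat Y\rangle = 2\bigl(1 - \rho(X,Y)\bigr)$.

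Finally, I would substitute this into the definition of the Pearson distance: $D^\mathrm{EPIC}(R_1,R_2) = \sqrt{(1 - \rho(X,Y))/2} = \tfrac12 \sqrt{2(1-\rho(X,Y))} = \tfrac12 L_{2,\mathcal{D}}(\hat X - \hat Y)$, which is exactly the claimed expression once one recalls the convention $L_{2,\mathcal{D}}(a,b) = L_{2,\mathcal{D}}(a - b)$. The main obstacle is thus concentrated entirely in establishing centredness; the rest is a direct consequence of the parallelogram-type identity for unit vectors.
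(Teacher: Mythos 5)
Your proposal is correct and follows essentially the same route as the paper's proof: establish that $C^\mathrm{EPIC}(R)$ has mean zero under $\mathcal{D}$, reduce the Pearson correlation to a normalised weighted inner product (the paper phrases this as $\cos\theta$ and invokes the Law of Cosines, which is the same identity as your expansion of $L_{2,\mathcal{D}}(\hat X - \hat Y)^2 = 2(1-\rho)$), and substitute into the Pearson distance. If anything, your treatment of the centredness step --- tracking the independence of the inner and outer samples and noting that this is precisely where matching distributions are required --- is more explicit than the paper's, which simply asserts it from linearity of expectation.
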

\begin{proof}
Recall that by default, $D^\mathrm{EPIC}(R_1,R_2)$ is defined to be the Pearson distance between $C^\mathrm{EPIC}(R_1)(S,A,S')$ and $C^\mathrm{EPIC}(R_2)(S,A,S')$, where $S, S' \sim \mathcal{D}_\States$ and $A \sim \mathcal{D}_\Actions$, and where the \enquote{Pearson distance} between two random variables $X$ and $Y$ be defined as $\sqrt{(1-\rho(X,Y))/2}$, where $\rho$ denotes the Pearson correlation. Recall also that $C^\mathrm{EPIC}(R)(s,a,s')$ is equal to
$$
R(s,a,s') + \mathbb{E}[\gamma R(s', A, S') - R(s, A, S') - \gamma R(S, A, S')].
$$
For the sake of brevity, let $R_1^C = C^\mathrm{EPIC}(R_1)$ and $R_2^C = C^\mathrm{EPIC}(R_2)$, and let $\mathcal{D}$ be the distribution over $\SxAxS$ given by 
$$
\mathbb{P}_{(S,A,S') \sim \mathcal{D}}(S = s, A = a, S' = s) = \mathbb{P}_{S \sim \mathcal{D}_\States}(S = s) \cdot \mathbb{P}_{A \sim \mathcal{D}_\Actions}(A = a) \cdot \mathbb{P}_{S' \sim \mathcal{D}_\States}(S' = s').
$$
Moreover, let $X = R_1^C(T)$ and $Y = R_2^C(T')$, where $T$ and $T'$ are random transitions distributed according to $\mathcal{D}$. We now have that $D^\mathrm{EPIC}(R_1,R_2) = \sqrt{(1-\rho(X,Y))/2}$, where $\rho$ is the Pearson correlation.

Next, recall that the Pearson correlation between two random variables $X$ and $Y$ is defined as
$$
\frac{\mathbb{E}[(X-\mathbb{E}[X])(Y-\mathbb{E}[Y])]}{\sigma_X \sigma_Y},
$$
where $\sigma_X$ and $\sigma_Y$ are the standard deviations of $X$ and $Y$.
Recall also that the standard deviation $\sigma_X$ of a random variable $X$ is equal to $\sqrt{\mathbb{E}[X^2] - \mathbb{E}[X]^2}$.

Next, note that we in this case have that $\mathbb{E}[X]$ and $\mathbb{E}[Y]$ both are equal to $0$. This follows from the way that these variables were defined, together with the linearity of expectation. Therefore, we can rewrite $\rho(X,Y)$ as
$$
\frac{\mathbb{E}[XY]}{\sqrt{\mathbb{E}[X^2]}\sqrt{\mathbb{E}[Y^2]}}.
$$
Let $W$ be the $(|\States||\Actions||\States|) \times (|\States||\Actions||\States|)$-dimensional diagonal matrix in which the diagonal value that corresponds to transition $t$ is equal to $\sqrt{\mathcal{P}_{T \sim \mathcal{D}}(T = t)}$.
We now have that $\sqrt{\mathbb{E}[X^2]} = L_2(W R_1^C)$ and $\sqrt{\mathbb{E}[Y^2]} = L_2(W R_2^C)$. 
Moreover, we also have that $\mathbb{E}[XY] = (W  R_1^C) \cdot (W R_2^C)$.
Next, recall that the dot product $v \cdot w$ between two vectors $v$ and $w$ can be written as $L_2(v) \cdot L_2(w) \cdot \cos(\theta)$, where $\theta$ is the angle between $v$ and $w$. This means that we can rewrite $\rho(X,Y)$ as
$$
\frac{L_2(W R_1^C) \cdot L_2(W R_2^C) \cdot \cos(\theta)}{L_2(W R_1^C) \cdot L_2(W R_2^C)} = \cos(\theta),
$$
where $\theta$ is the angle between $W R_1^C$ and $W R_2^C$. 

Since the angle between two vectors is unaffected by the scale of those vectors, we have that $\theta$ is also the angle between $W R_1^C/L_2(W R_1^C)$ and $W R_2^C/L_2(W R_2^C)$. We can now apply the Law of Cosines, and conclude that the $L_2$-distance between $W R_1^C/L_2(W R_1^C)$ and $W R_2^C/L_2(W R_2^C)$ is equal to $\sqrt{2 - 2 \cos(\theta)} = \sqrt{2 - 2\rho(X,Y)} = 2 \cdot D^\mathrm{EPIC}(R_1, R_2)$. This means that
$$
D^\mathrm{EPIC}(R_1,R_2) = \frac{1}{2} \cdot L_{2}\left(\frac{W R_1^C}{L_2(W R_1^C)}, \frac{W R_2^C}{L_2(W R_2^C)}\right).
$$
Rewriting this completes the proof.
\end{proof}

The fact that EPIC can be expressed in this form is also asserted in \cite{epic}, but a proof is not given. 
We have provided the proof here, to make the equivalence more accessible and intuitive.
It is worth noting that this equivalence only holds when the \enquote{coverage distribution} is the same as the distribution used to compute $C^\mathrm{EPIC}$, which is left somewhat ambiguous in \cite{epic}.

\section{Main Proofs}\label{appendix:main_proofs}

\setcounter{proposition}{0}
\setcounter{theorem}{0}
\setcounter{corollary}{0}

In this section, we give the proofs of all our results. We have divided it into four parts. In the first part, we prove that STARC metrics are sound. In the second part, we prove that STARC metrics are complete. In the third part, we prove that EPIC (and a class of similar metrics) all are subject to the results discussed in Appendix~\ref{appendix:epic}. In the final part, we prove a few remaining theorems.

\subsection{Soundness}\label{appendix:soundness}

Before we can give the proof of Theorem~\ref{thm:STARC_sound}, we will first state and prove several supporting lemmas.

\begin{lemma}\label{lemma:same_policy_bound}
For any reward functions $R_1$ and $R_2$, and any policy $\pi$, we have that
$$
|\Evaluation_1(\pi) - \Evaluation_2(\pi)| \leq \left(\frac{1}{1-\gamma}\right) L_\infty(R_1, R_2).
$$ 
\end{lemma}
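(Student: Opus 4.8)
The plan is to expand both evaluation functions using the definition of $\mathcal{E}$, combine them into a single expectation, and then bound the integrand uniformly by $L_\infty(R_1, R_2)$ before summing the resulting geometric series. Concretely, I would first use linearity of expectation to merge the two separate expectations into one, writing
$$
\mathcal{E}_1(\mathcal{D}) - \mathcal{E}_2(\mathcal{D}) = \mathbb{E}_{\xi \sim \mathcal{D}}\left[\sum_{t=0}^\infty \gamma^t \left(R_1(S_t, A_t, S_{t+1}) - R_2(S_t, A_t, S_{t+1})\right)\right].
$$

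Next, I would move the absolute value inside the expectation and the sum, invoking the triangle inequality (equivalently Jensen's inequality for the convex function $|\cdot|$), to obtain
$$
|\mathcal{E}_1(\mathcal{D}) - \mathcal{E}_2(\mathcal{D})| \leq \mathbb{E}_{\xi \sim \mathcal{D}}\left[\sum_{t=0}^\infty \gamma^t \left|R_1(S_t, A_t, S_{t+1}) - R_2(S_t, A_t, S_{t+1})\right|\right].
$$

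The key observation is that for every transition $(s,a,s')$ we have $|R_1(s,a,s') - R_2(s,a,s')| \leq L_\infty(R_1, R_2)$, since $L_\infty(R_1, R_2)$ is by definition the maximum of these per-transition differences. Substituting this uniform bound, the integrand no longer depends on $\xi$, so the expectation collapses and I am left with the deterministic geometric sum $\sum_{t=0}^\infty \gamma^t L_\infty(R_1, R_2) = \frac{1}{1-\gamma} L_\infty(R_1, R_2)$, which is exactly the claimed inequality.

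There is essentially no difficult step here; the only point requiring a little care is justifying the interchange of the infinite sum and the expectation in the first display, together with convergence of the series. Both are unproblematic: because $\States$ and $\Actions$ are finite the rewards are bounded, and $\gamma \in (0,1)$ then guarantees absolute convergence, so dominated convergence (or Tonelli's theorem applied to the nonnegative integrand in the second display) legitimises swapping summation and expectation.
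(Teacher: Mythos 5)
Your proof is correct and follows essentially the same route as the paper's: merge the two expectations by linearity, push the absolute value inside via Jensen's inequality, bound each per-transition difference by $L_\infty(R_1,R_2)$, and sum the geometric series. Your extra remark justifying the interchange of sum and expectation is a welcome touch of rigor that the paper leaves implicit, but it does not change the argument.
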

\begin{proof}
This follows from straightforward algebra:
\begin{align*}
|\Evaluation_1(\pi) - \Evaluation_2(\pi)| = &\Big|\mathbb{E}_{\xi \sim \pi}\left[\sum_{t=0}^\infty \gamma^ t R_1(S_t,A_t,S_{t+1})\right] \\
& -\mathbb{E}_{\xi \sim \pi}\left[\sum_{t=0}^\infty \gamma^ t R_2(S_t,A_t,S_{t+1})\right]\Big|\\
= &\sum_{t=0}^\infty \gamma^t |\mathbb{E}_{\xi \sim \pi}[R_1(S_t,A_t,S_{t+1}) - R_2(S_t,A_t,S_{t+1})]|\\
\leq &\sum_{t=0}^\infty \gamma^t \mathbb{E}_{\xi \sim \pi}[ |R_1(S_t,A_t,S_{t+1}) - R_2(S_t,A_t,S_{t+1})|]\\
\leq &\sum_{t=0}^\infty \gamma^t L_\infty(R_1, R_2) = \left(\frac{1}{1-\gamma}\right) L_\infty(R_1, R_2).
\end{align*}
Here the second line follows from the linearity of expectation, and the third line follows from Jensen's inequality.
\end{proof}

Thus, the $L_\infty$-distance between two rewards bounds the difference between their policy evaluation functions. Since all norms are bilipschitz equivalent on any finite-dimensional vector space, this extends to all norms:

\begin{lemma}\label{lemma:norm_to_distance_fixed_pi}
If $p$ is a \emph{norm}, then there is a positive constant $K_p$ such that, for any reward functions $R_1$ and $R_2$, and any policy $\pi$, $|\Evaluation_1(\pi) - \Evaluation_2(\pi)| \leq K_p \cdot p(R_1, R_2)$.
\end{lemma}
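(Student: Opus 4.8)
The plan is to combine Lemma~\ref{lemma:same_policy_bound} with the equivalence of norms on finite-dimensional vector spaces. Lemma~\ref{lemma:same_policy_bound} already establishes exactly the desired bound in the special case $p = L_\infty$, with the explicit constant $1/(1-\gamma)$. It therefore suffices to show that, for an arbitrary norm $p$, the quantity $L_\infty(R_1, R_2)$ can be controlled by $p(R_1, R_2)$ up to a distribution-independent constant; the result then follows by composing the two bounds.

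First I would recall that, under the standing assumption that $\States$ and $\Actions$ are finite, the space $\R$ of reward functions is a finite-dimensional real vector space, namely $\R \cong \mathbb{R}^{|\States||\Actions||\States|}$. The key ingredient is then the standard fact that any two norms on a finite-dimensional vector space are bilipschitz equivalent (as is in fact already noted in the discussion following Definition~\ref{def:admissible_metric}). Applying this to $L_\infty$ and the given norm $p$ yields a positive constant $K$ with $L_\infty(R) \leq K \cdot p(R)$ for every $R \in \R$. Since both $L_\infty$ and $p$ are norms, the induced metrics inherit this inequality, so $L_\infty(R_1, R_2) = L_\infty(R_1 - R_2) \leq K \cdot p(R_1 - R_2) = K \cdot p(R_1, R_2)$. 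Chaining this with Lemma~\ref{lemma:same_policy_bound} gives $|\mathcal{E}_1(\mathcal{D}) - \mathcal{E}_2(\mathcal{D})| \leq \left(1/(1-\gamma)\right) L_\infty(R_1, R_2) \leq \left(K/(1-\gamma)\right) p(R_1, R_2)$, so that $K_p = K/(1-\gamma)$ works.

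The argument is essentially immediate once Lemma~\ref{lemma:same_policy_bound} is in hand, so there is no serious computational obstacle; the only real content is the norm-equivalence step, and the single point worth emphasising is \emph{where} it is used. The constant $K$ depends only on $p$ and the dimension of $\R$, and the constant from Lemma~\ref{lemma:same_policy_bound} depends only on $\gamma$, so $K_p$ is genuinely independent of the trajectory distribution $\mathcal{D}$, exactly as the statement requires. It is precisely this step that relies on the finiteness of $\States$ and $\Actions$: in an infinite-dimensional setting norms need no longer be equivalent, which is consistent with the paper's remark that extending the theory to continuous environments is left to future work.
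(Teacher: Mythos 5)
Your proposal is correct and matches the paper's own proof essentially verbatim: both invoke the equivalence of norms on the finite-dimensional space $\R$ to bound $L_\infty(R_1,R_2)$ by $K \cdot p(R_1,R_2)$ and then chain this with Lemma~\ref{lemma:same_policy_bound}, arriving at the same constant $K_p = K/(1-\gamma)$.
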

\begin{proof}
If $p$ and $q$ are norms on a finite-dimensional vector space, then there are constants $k$ and $K$ such that 
$k \cdot p(x) \leq q(x) \leq K \cdot p(x)$.
Since $\States$ and $\Actions$ are finite, $\mathcal{R}$ is a finite-dimensional vector space. This means that there is a constant $K$ such that $L_\infty(R_1, R_2) \leq K \cdot p(R_1, R_2)$. Together with Lemma~\ref{lemma:same_policy_bound}, this implies that 
$$
|\Evaluation_1(\pi) - \Evaluation_2(\pi)| \leq \left(\frac{1}{1-\gamma}\right) \cdot K \cdot m(R_1, R_2).
$$ 
Letting $K_p = \left( \frac{K}{1-\gamma}\right)$ completes the proof.
\end{proof}

Note that the constant $K_p$ given by Lemma~\ref{lemma:norm_to_distance_fixed_pi} may not be the smallest value of $K$ for which this statement holds of a given norm $p$. This fact can be used to compute tighter bounds for particular STARC-metrics.

\begin{lemma}\label{lemma:policy_optimisation_bound}
Let $R_1$ and $R_2$ be reward functions, and $\pi_1, \pi_2$ be two policies. If $|\Evaluation_1(\pi) - \Evaluation_2(\pi)| \leq U$ for $\pi \in \{\pi_1, \pi_2\}$, and if $\Evaluation_2(\pi_2) \geq \Evaluation_2(\pi_1)$, then
$$
\Evaluation_1(\pi_1) - \Evaluation_1(\pi_2) \leq 2 \cdot U.
$$
\end{lemma}
\begin{proof}
First note that $U$ must be non-negative.
Next, note that if $\Evaluation_1(\pi_1) < \Evaluation_1(\pi_2)$ then $\Evaluation_1(\pi_1) - \Evaluation_1(\pi_2) < 0$, and so the lemma holds. 
Now consider the case when $\Evaluation_1(\pi_1) \geq \Evaluation_1(\pi_2)$: 
\begin{align*}
\Evaluation_1(\pi_1) - \Evaluation_1(\pi_2) &= 
\Evaluation_1(\pi_1) - \Evaluation_2(\pi_2) + \Evaluation_2(\pi_2) - \Evaluation_1(\pi_2)\\
&\leq |\Evaluation_1(\pi_1) - \Evaluation_2(\pi_2)| + |\Evaluation_2(\pi_2) - \Evaluation_1(\pi_2)|\\
\end{align*}
Our assumptions imply that $|\Evaluation_2(\pi_2) - \Evaluation_1(\pi_2)| \leq U$. We will next show that $|\Evaluation_1(\pi_1) - \Evaluation_2(\pi_2)| \leq U$ as well.
Our assumptions imply that
\begin{align*}
&|\Evaluation_1(\pi_1) - \Evaluation_2(\pi_1)| \leq U\\
\implies &\Evaluation_2(\pi_1) \geq \Evaluation_1(\pi_1) - U\\
\implies &\Evaluation_2(\pi_2) \geq \Evaluation_1(\pi_1) - U
\end{align*}
Here the last implication uses the fact that $\Evaluation_2(\pi_2) \geq \Evaluation_2(\pi_1)$. A symmetric argument also shows that $\Evaluation_1(\pi_1) \geq \Evaluation_2(\pi_2) - U$ (recall that we assume that $\Evaluation_1(\pi_1) \geq \Evaluation_1(\pi_2)$). Together, this implies that $|\Evaluation_1(\pi_1) - \Evaluation_2(\pi_2)| \leq U$. We have thus shown that if $\Evaluation_1(\pi_1) \geq \Evaluation_1(\pi_2)$ then
$$
|\Evaluation_1(\pi_1) - \Evaluation_2(\pi_2)| + |\Evaluation_2(\pi_2) - \Evaluation_1(\pi_2)| \leq 2 \cdot U,$$
and so the lemma holds. This completes the proof.
\end{proof}

\begin{lemma}\label{lemma:smallest_n_ratio}
For any linear function $c : \mathbb{R}^n \to \mathbb{R}^n$ and any norm $n$, there is a positive constant $K_n$ such that $n(c(v)) \leq K_n \cdot n(v)$ for all $v \in \mathbb{R}^n$.
\end{lemma}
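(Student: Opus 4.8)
The plan is to establish the classical fact that any linear map on a finite-dimensional normed space is bounded, specialised to the case where the domain and codomain carry the same norm $n$. I would work directly with the standard basis $e_1, \dots, e_n$ of $\mathbb{R}^n$ and exploit the linearity of $c$ together with the triangle inequality and absolute homogeneity of $n$.

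First, writing an arbitrary $v \in \mathbb{R}^n$ as $v = \sum_i v_i e_i$ and applying linearity of $c$ gives $c(v) = \sum_i v_i \, c(e_i)$. The triangle inequality and homogeneity of $n$ then yield
\[
n(c(v)) \;\leq\; \sum_i |v_i| \, n(c(e_i)) \;\leq\; \Big(\max_i n(c(e_i))\Big) \cdot L_1(v),
\]
so that $n(c(v))$ is controlled by the fixed, $v$-independent quantity $\max_i n(c(e_i))$ multiplied by $L_1(v)$.

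Second, I would convert $L_1(v)$ back into $n(v)$. Since $\mathbb{R}^n$ is finite-dimensional, all norms on it are bilipschitz equivalent (as recalled in Section~\ref{section:preliminaries}), so there is a constant $C > 0$ with $L_1(v) \leq C \cdot n(v)$ for every $v$. Combining this with the previous display and setting $K_n = C \cdot \max_i n(c(e_i))$ immediately gives $n(c(v)) \leq K_n \cdot n(v)$ for all $v \in \mathbb{R}^n$.

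The only point requiring a moment's care — and it is minor — is that the statement demands $K_n$ be \emph{positive}. If $c$ is not identically zero then some $c(e_i) \neq 0$, whence $\max_i n(c(e_i)) > 0$ and $K_n > 0$ automatically; if $c = 0$ then $n(c(v)) = 0 \leq K_n \cdot n(v)$ holds for any positive constant, so one simply takes $K_n = 1$. I do not expect any genuine obstacle: the argument is the standard proof that finite-dimensionality forces linear maps to be continuous, with the norm-equivalence fact quoted in the preliminaries carrying the essential weight.
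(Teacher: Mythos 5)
Your proof is correct, but it takes a genuinely different route from the paper's. You give the classical basis-decomposition argument: expand $v = \sum_i v_i e_i$, push $c$ through by linearity, bound $n(c(v)) \leq \bigl(\max_i n(c(e_i))\bigr) \cdot L_1(v)$ via the triangle inequality and homogeneity, and then trade $L_1$ for $n$ using the equivalence of norms on a finite-dimensional space (a fact the paper already quotes and uses elsewhere in its appendix). The paper instead observes that the ratio $n(c(v))/n(v)$ is invariant under scaling of $v$, restricts attention to the surface of the unit ball of $n$, and invokes compactness together with the extreme value theorem to conclude that the ratio attains a maximum $K_n$ there; the degenerate case $n(v)=0$ is handled separately in both arguments, and you correctly also patch the $c=0$ case to keep $K_n$ strictly positive. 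The trade-off is mild: your argument is more elementary and essentially self-contained given the norm-equivalence fact, whereas the paper's compactness argument produces the \emph{optimal} constant (the operator norm of $c$ with respect to $n$, which is actually attained), consistent with the lemma's role in the paper where tighter constants translate into tighter regret bounds for particular STARC metrics. Since the lemma only asserts the existence of \emph{some} positive constant, your larger constant $K_n = C \cdot \max_i n(c(e_i))$ is entirely sufficient.
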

\begin{proof}
First consider the case when $n(v) > 0$. In this case, we can find an upper bound for $n(c(v))$ in terms of $n(v)$ by finding an upper bound for $\frac{n(c(R))}{n(R)}$.
Since $c$ is linear, and since $n$ is absolutely homogeneous, we have that for any $v \in \mathbb{R}^n$ and any non-zero $\alpha \in \mathbb{R}$,
$$
\frac{n(c(\alpha \cdot v))}{n(\alpha \cdot v)} = \left(\frac{\alpha}{\alpha}\right)\frac{n(c(v))}{n(v)} = \frac{n(c(v))}{n(v)}.
$$
In other words, $\frac{n(c(v))}{n(v)}$ is unaffected by scaling of $v$. We may thus restrict our attention to the unit ball of $n$.
Next, since the surface of the unit ball of $n$ is a compact set, and since $\frac{n(c(v))}{n(v)}$ is continuous on this surface, the extreme value theorem implies that $\frac{n(c(v))}{n(v)}$ must take on some maximal value $K_n$ on this domain.
Together, the above implies that $n(c(v)) \leq K_n \cdot n(v)$ for all $R$ such that $n(v) > 0$.


Next, suppose $n(v) = 0$. In this case, $v$ is the zero vector. Since $c$ is linear, this implies that $c(v) = v$, which means that $n(c(v)) = 0$ as well. Therefore, if $n(v) = 0$, then the statement holds for any $K_n$. In particular, it holds for the value $K_n$ selected above. 
\end{proof}



\begin{lemma}\label{lemma:standardised_to_unstandardised_bound}
Let $c$ be a linear function $c : \R \to \R$, and let $n$ be a norm on $\mathrm{Im}(c)$. 
Let $R$ be any reward function, let $R_C = c(R)$, and let $R_S = \left(\frac{R_C}{n(R_C)}\right)$ if $n(R_C) > 0$, and $R_C$ otherwise. 
Assume there is a constant $B$ such that $\J_C(\pi) = \J(\pi) + B$ for all $\pi$. Then $\J(\pi_1) - \J(\pi_2) = n(c(R)) \cdot \J_S(\pi_1) - \J_S(\pi_2)$.
\end{lemma}

\begin{proof}
Let us first consider the case where $n(R) = 0$. Since $n$ is a norm, $R$ must be the reward function that is $0$ everywhere. Since $c$ is linear, this also implies that $n(c(R)) = 0$. In that case, both $J(\pi_1) - J(\pi_2) = 0$ for all $\pi_1$ and $\pi_2$, and $n(c(R)) \cdot x = 0$ for all $x$. Therefore, the statement holds.

Let us next consider the case when $n(R) > 0$. Since $c$ is linear, this means that $n(c(R)) > 0$. Moreover, since $R_S = R_C/n(R_C)$, and since $\Evaluation_C(\pi) = \Evaluation(\pi) + B$, we have that
$$
\Evaluation_S(\pi) = \left(\frac{1}{n(c(R))}\right) (\Evaluation(\pi) + B)
$$
This further implies that
$$
\Evaluation_S(\pi_1) - \Evaluation_S(\pi_2) = \left(\frac{1}{n(c(R))}\right) (\Evaluation(\pi_1) - \Evaluation(\pi_2))
$$
since the $B$-terms cancel out. By rearranging, we get that 
$$
\Evaluation(\pi_1) - \Evaluation(\pi_2) =
n(c(R))(\Evaluation_S(\pi_1) - \Evaluation_S(\pi_2)).
$$
This completes the proof.
\end{proof}

\begin{lemma}\label{lemma:shift_lemma}
If $R_1$ and $R_2$ differ by potential shaping with $\Phi$, then for any $\tau$ and $\mu_0$, we have that $J_2(\pi) = J_1(\pi) - \mathbb{E}_{S_0 \sim \mu_0}[\Phi(S_0)]$. Moreover, if $R_1$ and $R_2$ differ by potential shaping with $\Phi$ and $S'$-redistribution for $\tau$, then for any $\mu_0$, we have that $J_2(\pi) = J_1(\pi) - \mathbb{E}_{S_0 \sim \mu_0}[\Phi(S_0)]$.
\end{lemma}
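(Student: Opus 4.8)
The plan is to prove the first statement by computing the difference $J_2(\pi) - J_1(\pi)$ directly as a discounted telescoping sum, and then to obtain the second statement by observing that $S'$-redistribution leaves $J$ unchanged, so that only the potential-shaping part contributes. Both halves reduce to elementary manipulations of the evaluation function together with the definitions of the two transformations.

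For the first statement, recall that potential shaping with $\Phi$ means $R_2(s,a,s') - R_1(s,a,s') = \gamma\Phi(s') - \Phi(s)$. Since $J_i(\pi) = \mathbb{E}_{\xi\sim\pi}[\sum_{t=0}^\infty \gamma^t R_i(s_t,a_t,s_{t+1})]$, linearity of expectation gives $J_2(\pi) - J_1(\pi) = \mathbb{E}_{\xi\sim\pi}[\sum_{t=0}^\infty \gamma^t(\gamma\Phi(s_{t+1}) - \Phi(s_t))]$. Writing $f(t) = \gamma^t \Phi(s_t)$, the summand equals $f(t+1) - f(t)$, so the series telescopes to $\lim_{T\to\infty} f(T) - f(0) = -\Phi(s_0)$, where the boundary term vanishes because $\States$ is finite (so $\Phi$ is bounded) and $\gamma \in (0,1)$. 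Taking the expectation over $\xi\sim\pi$, only the initial state $s_0 \sim \mu_0$ survives, which yields $J_2(\pi) - J_1(\pi) = -\mathbb{E}_{S_0\sim\mu_0}[\Phi(S_0)]$. This is the claimed identity, and the argument holds for any $\tau$ and any $\mu_0$.

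For the second statement, the key observation is that $J(\pi)$ depends on $R$ only through the state-action expected reward $\bar R(s,a) = \mathbb{E}_{S'\sim\tau(s,a)}[R(s,a,S')]$: conditioning each trajectory term on $(s_t,a_t)$ and taking the inner expectation over $s_{t+1}\sim\tau(s_t,a_t)$ gives $J(\pi) = \mathbb{E}_{\xi\sim\pi}[\sum_t \gamma^t \bar R(s_t,a_t)]$. Since $S'$-redistribution for $\tau$ is precisely the statement that $\bar R_1 = \bar R_2$, it leaves $J$ invariant. I would therefore factor the transformation from $R_1$ to $R_2$ through an intermediate reward $R'$ obtained from $R_1$ by potential shaping with $\Phi$, with $R_2$ and $R'$ differing only by $S'$-redistribution for $\tau$. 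Applying the first part gives $J'(\pi) = J_1(\pi) - \mathbb{E}_{S_0\sim\mu_0}[\Phi(S_0)]$, and the invariance gives $J_2(\pi) = J'(\pi)$, so the two compose to the desired equality, now for the fixed $\tau$ against which $S'$-redistribution is defined and for any $\mu_0$.

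The calculations are routine; the only points needing care are the interchange of the infinite sum with the expectation and the vanishing of the telescoping boundary term $\gamma^T\Phi(s_T)$. Both are justified by the finiteness of $\States$, which bounds $\Phi$, together with $\gamma\in(0,1)$, so that dominated convergence applies and $\gamma^T\Phi(s_T)\to 0$. I expect this convergence bookkeeping to be the only mild obstacle; everything else is linearity of expectation and the defining property of $S'$-redistribution.
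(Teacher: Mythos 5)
Your proof is correct. The paper itself disposes of this lemma in two sentences: the first claim is delegated to Lemma B.1 of the cited invariance paper, and the second is asserted to ``follow straightforwardly from the properties of $S'$-redistribution.'' Your argument fills in exactly what that citation and that assertion contain: the telescoping computation $\sum_{t=0}^{\infty}\gamma^t(\gamma\Phi(s_{t+1})-\Phi(s_t)) = -\Phi(s_0)$ with the boundary term $\gamma^{T}\Phi(s_{T})$ killed by boundedness of $\Phi$ (finite $\States$) and $\gamma\in(0,1)$, and the observation that $J$ factors through $\bar R(s,a)=\mathbb{E}_{S'\sim\tau(s,a)}[R(s,a,S')]$, so $S'$-redistribution is $J$-invariant. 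Composing the two through an intermediate reward $R'$ gives the second claim. The only point worth making explicit is that ``differ by potential shaping and $S'$-redistribution'' means the transformations are composed in some order, and your factorisation works in either order since both are additive and the shaping term does not depend on the reward being shaped. Your version has the advantage of being self-contained where the paper's is not; the convergence bookkeeping you flag is handled correctly.
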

\begin{proof}
The first part follows from Lemma B.1 in \cite{invariance}. The second part then follows straightforwardly from the properties of $S'$-redistribution. 
\end{proof}

\begin{theorem}
Any STARC metric is sound.
\end{theorem}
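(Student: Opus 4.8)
The plan is to reduce the soundness inequality, step by step, to the supporting lemmas already established, peeling off the canonicalisation and normalisation layers of the STARC metric one at a time. Fix a STARC metric $d$ with canonicalisation function $c$, normalising norm $n$, and admissible metric $m$; by Definition~\ref{def:admissible_metric} there is a norm $p$ and a constant $\ell > 0$ with $\ell \cdot p(x,y) \leq m(x,y)$ for all $x,y$. Write $S_i = s(R_i)$ for the standardised rewards, and let $\mathcal{D}_1, \mathcal{D}_2$ be the trajectory distributions induced by $\pi_1, \pi_2$, so that $J_i(\pi_j) = \mathcal{E}_i(\mathcal{D}_j)$.

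First I would pass from the original rewards to the standardised ones. Since $c(R_i)$ and $R_i$ differ by potential shaping and $S'$-redistribution, Lemma~\ref{lemma:shift_lemma} gives a constant $B_i$, independent of the trajectory distribution, with $\mathcal{E}_{c(R_i)}(\mathcal{D}) = \mathcal{E}_i(\mathcal{D}) + B_i$; moreover $s(R_i)$ is always a positive scalar multiple of $c(R_i)$. Hence the hypothesis $J_2(\pi_2) \geq J_2(\pi_1)$ is equivalent to $\mathcal{E}_{c(R_2)}(\mathcal{D}_2) \geq \mathcal{E}_{c(R_2)}(\mathcal{D}_1)$, and, after rescaling, to $\mathcal{E}_{S_2}(\mathcal{D}_2) \geq \mathcal{E}_{S_2}(\mathcal{D}_1)$.

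The heart of the argument is to bound the regret of the standardised rewards. By Lemma~\ref{lemma:norm_to_distance_fixed_pi} applied to the norm $p$, together with admissibility, $|\mathcal{E}_{S_1}(\mathcal{D}) - \mathcal{E}_{S_2}(\mathcal{D})| \leq K_p \cdot p(S_1, S_2) \leq (K_p/\ell) \cdot m(S_1, S_2) = (K_p/\ell) \cdot d(R_1, R_2)$ for $\mathcal{D} \in \{\mathcal{D}_1, \mathcal{D}_2\}$. Feeding this bound and the inequality $\mathcal{E}_{S_2}(\mathcal{D}_2) \geq \mathcal{E}_{S_2}(\mathcal{D}_1)$ into Lemma~\ref{lemma:policy_optimisation_bound} yields $\mathcal{E}_{S_1}(\mathcal{D}_1) - \mathcal{E}_{S_1}(\mathcal{D}_2) \leq 2(K_p/\ell) \cdot d(R_1,R_2)$. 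Lemma~\ref{lemma:standardised_to_unstandardised_bound}, whose shift hypothesis is supplied by the constant $B_1$ above, then converts this into a bound on the original regret, $J_1(\pi_1) - J_1(\pi_2) \leq n(c(R_1)) \cdot 2(K_p/\ell) \cdot d(R_1,R_2)$.

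It remains to replace the factor $n(c(R_1))$ by the normalising quantity $\max_\pi J_1(\pi) - \min_\pi J_1(\pi)$ appearing in Definition~\ref{def:soundness}. Because $c(R_1)$ and $R_1$ differ only by potential shaping and $S'$-redistribution, this quantity equals $n_J(c(R_1))$, where $n_J(R) = \max_\pi J(\pi) - \min_\pi J(\pi)$ is, by Proposition~\ref{prop:J_norm}, a norm on $\mathrm{Im}(c)$. Since $n$ and $n_J$ are two norms on the finite-dimensional space $\mathrm{Im}(c)$, they are bilipschitz equivalent, so there is a constant $K$ with $n(c(R_1)) \leq K \cdot n_J(c(R_1)) = K \cdot (\max_\pi J_1(\pi) - \min_\pi J_1(\pi))$. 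Combining the two displays gives the claim with $U = 2K K_p/\ell$. The main obstacle is precisely this last step: one must ensure that the normalisation by $n$ used inside the metric can be traded, uniformly over all reward pairs, for the operationally meaningful spread $\max_\pi J_1 - \min_\pi J_1$, and that the degenerate cases (where $n(c(R_1)) = 0$, or $J_1$ is constant, so that the regret vanishes outright) do not spoil the uniform constant. Finite-dimensionality of $\R$ and of $\mathrm{Im}(c)$ is what makes all the constants $K_p, \ell, K$ choosable independently of the inputs.
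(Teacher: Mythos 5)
Your proposal is correct and follows essentially the same route as the paper's own proof: bound the standardised regret via Lemma~\ref{lemma:norm_to_distance_fixed_pi}, admissibility, and Lemma~\ref{lemma:policy_optimisation_bound}, undo the normalisation with Lemma~\ref{lemma:standardised_to_unstandardised_bound} and Lemma~\ref{lemma:shift_lemma}, and finally trade $n(c(R_1))$ for $\max_\pi J_1(\pi) - \min_\pi J_1(\pi)$ using Proposition~\ref{prop:J_norm} and equivalence of norms on the finite-dimensional space $\mathrm{Im}(c)$. Your explicit observation that the spread $\max_\pi J_1 - \min_\pi J_1$ is unchanged when passing from $R_1$ to $c(R_1)$ (because $J$ shifts by a constant) is a point the paper leaves implicit, but the argument is otherwise the same.
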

\begin{proof}
Consider any transition function $\tau$ and any initial state distribution $\InitStateDistribution$, and let $d$ be a STARC metric. We wish to show that there exists a positive constant $U$, such that for any $R_1$ and $R_2$, and any pair of policies $\pi_1$ and $\pi_2$ such that $J_2(\pi_2) \geq J_2(\pi_1)$, we have that
$$
\J_1(\pi_1) - \J_1(\pi_2) \leq (\max_\pi J_1(\pi) - \min_\pi J_1(\pi)) \cdot K_d \cdot d(R_1, R_2).
$$ 

Recall that $d(R_1, R_2) = m(s(R_1), s(R_2))$, where $m$ is an admissible metric. Since $m$ is admissible, we have that $p(s(R_1), s(R_2)) \leq K_m \cdot m(s(R_1), s(R_2))$ for some norm $p$ and constant $K_m$.
Moreover, since $p$ is a norm, we can apply Lemma~\ref{lemma:norm_to_distance_fixed_pi} to conclude that there is a constant $K_p$ such that for any policy $\pi$, we have that
$$
|\J_1^S(\pi) - \J_2^S(\pi)| \leq K_p \cdot p(s(R_1), s(R_2)),
$$ 
where $\J_1^S$ is the policy evaluation function of $s(R_1)$, and $\J_2^S$ is the policy evaluation function of $s(R_2)$.
Combining this with the fact that $p(s(R_1), s(R_2)) \leq K_m \cdot m(s(R_1), s(R_2))$, we get
\begin{align*}
|\J_1^S(\pi) - \J_2^S(\pi)| &\leq K_p \cdot p(s(R_1), s(R_2))\\
  &\leq K_p \cdot K_m \cdot m(s(R_1), s(R_2))\\  
  &= K_{mp} \cdot d(R_1, R_2)
\end{align*}
where $K_{mp} = K_p \cdot K_m$.
We have thus established that, for any $\pi$, we have
$$
|\J_1^S(\pi) - \J_2^S(\pi)| \leq K_{mp} \cdot d(R_1, R_2).
$$

Let $\pi_1$ and $\pi_2$ be any two policies such that $\J_2(\pi_2) \geq \J_2(\pi_1)$.
Note that $\J_2(\pi_2) \geq \J_2(\pi_1)$ if and only if $\J_2^S(\pi_2) \geq \J_2^S(\pi_1)$. 
We can therefore apply Lemma~\ref{lemma:policy_optimisation_bound} and conclude that 
$$
\J_1^S(\pi_1) - \J_1^S(\pi_2) \leq 2 \cdot K_{mp} \cdot d(R_1, R_2).
$$
By Lemma~\ref{lemma:shift_lemma}, there is a constant $B$ such that $J_1^C = J_1 + B$.
We can therefore apply Lemma~\ref{lemma:standardised_to_unstandardised_bound}:
$$
\J_1(\pi_1) - \J_1(\pi_2) \leq n(c(R_1)) \cdot 2 \cdot  K_{mp} \cdot d(R_1, R_2).
$$ 
We have that $n$ is a norm on $\mathrm{Im}(c)$. Moreover, $\max_\pi \J_1(\pi) - \min_\pi \J_1(\pi)$ is also a norm on $\mathrm{Im}(c)$ (Proposition~\ref{prop:J_norm}).
Since $\mathrm{Im}(c)$ is a finite-dimensional vector space, this means that there is a constant $K_s$ such that $n(c(R_1)) \leq K_s \cdot (\max_\pi \J_1(\pi) - \min_\pi \J_1(\pi))$ for all $R_1 \in \R$. 
Let $U = 2 \cdot K_{mp} \cdot K_s$. We have now established that, for any $\pi_1$ and $\pi_2$ such that $\J_2(\pi_2) \geq \J_2(\pi_1)$, we have
$$
\J_1(\pi_1) - \J_1(\pi_2) \leq (\max_\pi \J_1(\pi) - \min_\pi \J_1(\pi)) \cdot U \cdot d(R_1, R_2).
$$ 
This completes the proof.
\end{proof}

\subsection{Completeness}

In this section we give the proofs that concern completeness. We will need the following lemma:

\begin{lemma}\label{lemma:lower_bound_lemma_0}
    Let $S \subset \mathbb{R}^n$ be the boundary of a bounded convex set whose interior includes the origin. Then there is an $\alpha > 0$ such that for any $x, y \in S$, the angle between $x$ and $y-x$ is at least $\alpha$.
\end{lemma}
\begin{proof}
Let $x,y$ be two arbitrary points in $S$ such that $x \neq y$. Let $\alpha$ be the angle between $x$ and $y$, let $\beta$ be the angle between $-y$ and $x-y$, let $\gamma$ be the angle between $-x$ and $y-x$, and let $\delta$ be the angle between $x$ and $y-x$. Note that $\alpha + \beta + \gamma = \pi$, since these angles are the interior angles of the triangle whose corners lie at $x$, $y$, and the origin. We also have that $\gamma + \delta = \pi$, since these two angles add up to the angle between $x$ and $-x$. We seek a lower bound on $\delta$.

\begin{figure}[H]
    \centering
    \includegraphics[width=2\textwidth/3]{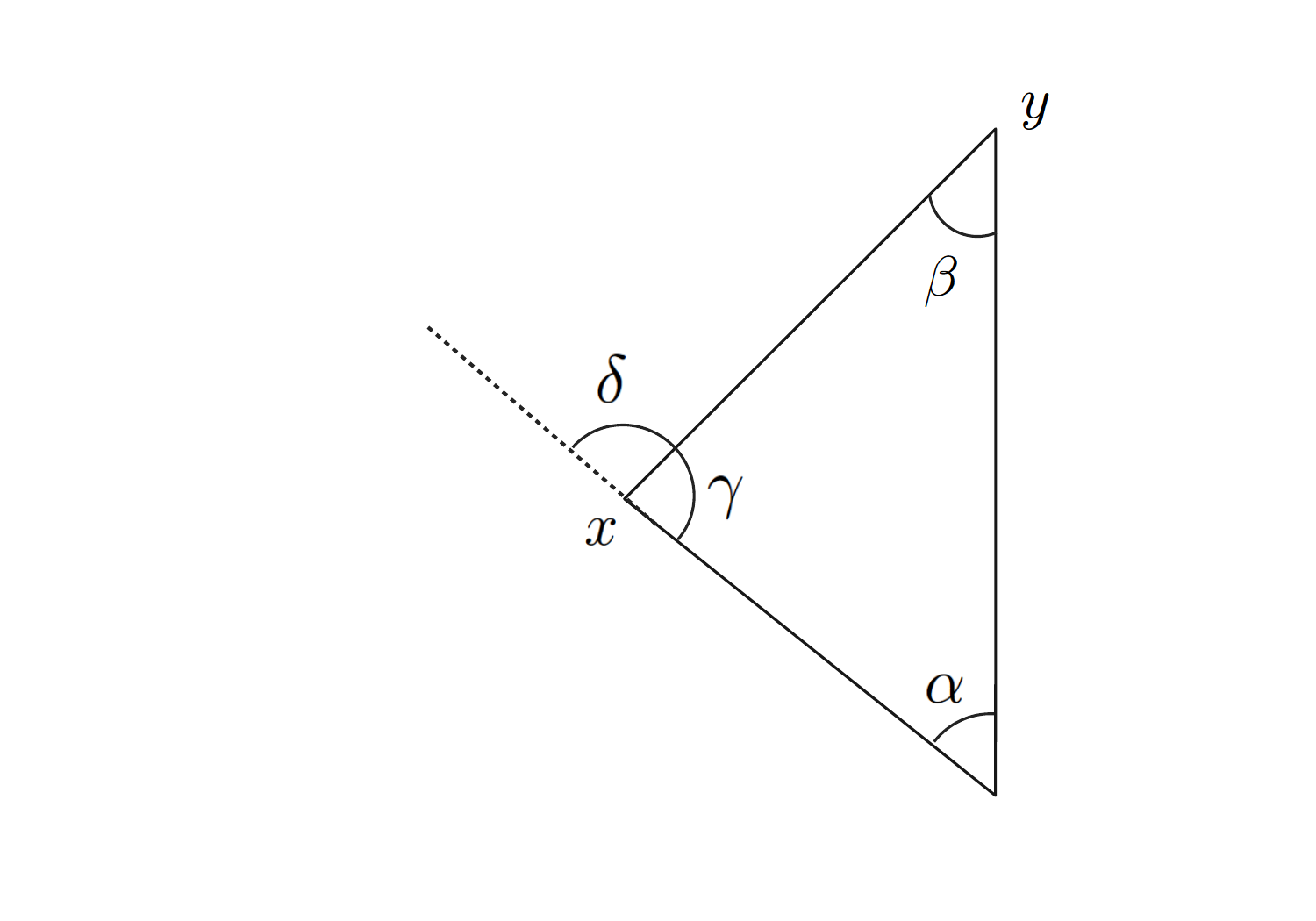}
    \label{fig:lemma_62_1}
\end{figure}

First note that if $\alpha > \pi/2$ then $\gamma < \pi/2$, since $\alpha + \beta + \gamma = \pi$. This means that $\delta > \pi/2$, since $\gamma + \delta = \pi$. Next, suppose $\alpha \leq \pi/2$. 
Since $\gamma + \delta = \pi$, we can derive a lower bound for $\delta$ by deriving an upper bound for $\gamma$.
Let $z$ be the point such that the angle between $y$ and $z$ is $\pi/2$, and such that $x$ lies on the line segment between $z$ and $y$. Let $\theta$ be the angle between $-z$ and $y-z$. 

\begin{figure}[H]
    \centering
    \includegraphics[width=2\textwidth/3]{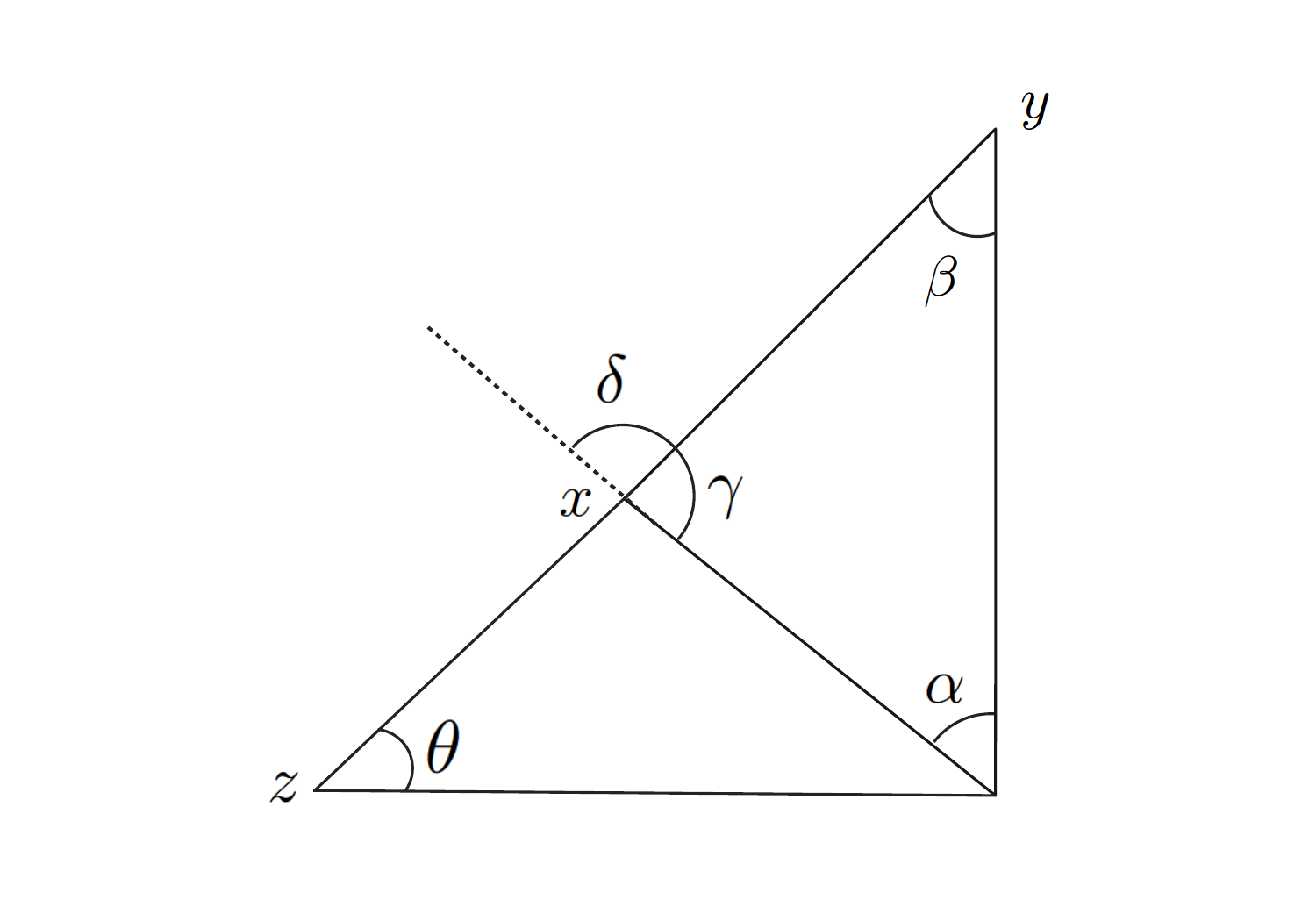}
    \label{fig:lemma_62_2}
\end{figure}

Now elementary trigonometry tells us that $\gamma < \pi/2 + \theta$.\footnote{In particular, $\gamma = \pi - \alpha - \beta$, and $\beta = \pi/2 - \theta$. Thus $\gamma$ is maximised when $\alpha = 0$, in which case $\gamma = \pi - (\pi/2 - \theta) = \pi/2 + \theta$. Moreover, if $\alpha = 0$ then $x=y$, which by assumption is not the case. Hence $\gamma < \pi/2 + \theta$.} By deriving an upper bound for $\theta$, we thus obtain an upper bound for $\gamma$ (and hence a lower bound for $\delta$).

Note that $\theta = \arctan(L_2(y)/L_2(z))$.
Moreover, since $S$ is the boundary of some set $X$, and since the interior of $X$ is non-empty, there must be some $\ell > 0$ such that $L_2(x) \geq \ell$ for all $x \in S$. Moreover, since $X$ is bounded, there must be some $u \geq \ell$ such that $L_2(x) \leq u$ for all $x \in S$. We have that $L_2(y) \leq u$, since $y \in S$.

It may be that $z \not\in S$. However, since $S$ is the boundary of a \emph{convex} set, it must still be the case that $L_2(z) \geq \ell$. To see this, suppose $L_2(z) < \ell$, and let $z'$ be the point in $S$ such that $z' = a \cdot z$ for some $a \in \mathbb{R}^+$. $L_2(z') \geq \ell$, since $s' \in S$, and so $L_2(z') > L_2(z)$. Consider the triangle that lies between $z'$, $y$, and the origin. Since $S$ is the boundary of a convex set $X$, every point that lies in the interior of this triangle must lie in the interior of $X$. But if $L_2(z') > L_2(z)$, then $x$ lies in the interior of this triangle. This is a contradiction, since $x$ lies on the boundary of $X$. Thus $L_2(z) \geq \ell$.  

We thus have that $\theta \leq \arctan(u/\ell)$, which means that $\gamma < \pi/2 + \arctan(u/\ell)$, and thus that $\delta > \pi - (\pi/2 + \arctan(u/\ell)) = \pi/2 - \arctan(u/\ell)$. Since this value does not depend on $x$ or $y$, we have that the angle $\delta$ between $x$ and $y-x$ is at least $\pi/2 - \arctan(u/\ell)$ for all $x,y \in S$ such that $x \neq y$, and such that the angle $\alpha$ between $x$ and $y$ is less than or equal to $\pi/2$. Also recall that if $\alpha > \pi/2$ then $\delta > \pi/2$. Since $u/\ell > 0$, we have that $\pi/2 - \arctan(u/\ell) < \pi/2$, and so $\delta > \pi/2 - \arctan(u/\ell)$ for all $x,y \in S$ such that $x \neq y$. Finally, since $\arctan(x) < \pi/2$, we have that $\pi/2 - \arctan(u/\ell) > 0$. Setting $A = \pi/2 - \arctan(u/\ell)$ thus completes the proof.
\end{proof}

Using this, we can now show that we can get a lower bound on the \emph{angle} between two standardised reward functions in terms of their STARC-distance:

\begin{lemma}\label{lemma:lower_bound_lemma_1}
    For any STARC metric $d$, there exist an $\ell_1 \in \mathbb{R}^+$ such that the angle $\theta$ between $s(R_1)$ and $s(R_2)$ satisfies $\ell_1 \cdot d(R_1, R_2) \leq \theta$ for all $R_1$, $R_2$ for which neither $s(R_1)$ or $s(R_2)$ is $0$.
\end{lemma}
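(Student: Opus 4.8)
The plan is to bound the STARC distance $d(R_1,R_2)$ from above by a constant multiple of the angle $\theta$ between the standardised rewards $x := s(R_1)$ and $y := s(R_2)$; the claimed inequality $\ell_1 \cdot d(R_1,R_2) \le \theta$ then follows by taking $\ell_1$ to be the reciprocal of that constant. Throughout I would work inside the finite-dimensional space $\mathrm{Im}(c)$, on which $n$ is a genuine norm (Definition~\ref{def:reward_metric}). Since $s(R_1), s(R_2) \neq 0$ by hypothesis, both $x$ and $y$ lie on the $n$-unit sphere, i.e.\ $n(x) = n(y) = 1$, and $\theta$ is the Euclidean angle between them.

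First I would peel off the admissible metric $m$ and the normalisation norm $n$ in favour of the Euclidean norm. By admissibility (Definition~\ref{def:admissible_metric}) there is a norm $p$ and a constant $u$ with $d(R_1,R_2) = m(x,y) \le u \cdot p(x-y)$, and since all norms on the finite-dimensional $\mathrm{Im}(c)$ are equivalent, there is a constant $C$ with $p(x-y) \le C \lVert x - y\rVert_2$. Hence it suffices to bound $\lVert x - y\rVert_2$ by a constant times $\theta$, which is a purely geometric statement about the two unit spheres.

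The key geometric step, which I expect to be the main obstacle, is relating the Euclidean angle $\theta$ (defined through the inner product) to Euclidean distance measured \emph{on the $n$-unit sphere}, a sphere belonging to a possibly different norm. I would write $\hat x = x/\lVert x\rVert_2$ and $\hat y = y/\lVert y\rVert_2$ for the Euclidean-unit normalisations; these subtend the same angle $\theta$, so the elementary chord estimate gives $\lVert \hat x - \hat y\rVert_2 = 2\sin(\theta/2) \le \theta$. Now consider the map $\psi(w) = w/n(w)$ sending the Euclidean-unit sphere of $\mathrm{Im}(c)$ onto the $n$-unit sphere; a direct check using $n(\hat x) = 1/\lVert x\rVert_2$ gives $\psi(\hat x) = x$ and likewise $\psi(\hat y) = y$. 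On the compact Euclidean-unit sphere, $n$ is bounded away from $0$ and is Lipschitz with respect to $\lVert\cdot\rVert_2$ (via $|n(w) - n(w')| \le n(w - w') \le L_n \lVert w - w'\rVert_2$), so $\psi$, being a ratio of a Lipschitz function and a Lipschitz function bounded below, is itself Lipschitz with some constant $\Lambda$. Therefore $\lVert x - y\rVert_2 = \lVert \psi(\hat x) - \psi(\hat y)\rVert_2 \le \Lambda \lVert \hat x - \hat y\rVert_2 \le \Lambda\,\theta$. Chaining this with the first step yields $d(R_1,R_2) \le u C \Lambda\,\theta$, so $\ell_1 = (u C \Lambda)^{-1}$ establishes the main inequality. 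Everything here beyond the Lipschitz comparison is norm-equivalence bookkeeping of the kind already used in Lemma~\ref{lemma:smallest_n_ratio}.

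Finally, for the ``moreover'' clause, I would observe that $d$ is bounded above by some $d_{\max}$ (indeed $m(x,y) \le u\, p(x-y)$ and $p$ attains a maximum on the compact product of $n$-unit spheres), and that shrinking $\ell_1$ only decreases the left-hand side of $\ell_1 \cdot d(R_1,R_2) \le \theta$ since $d \ge 0$. Hence replacing $\ell_1$ by $\min\{\ell_1,\, (\pi/2)/d_{\max}\}$ preserves the angle bound while additionally guaranteeing $\ell_1 \cdot d(R_1,R_2) \le \ell_1 \cdot d_{\max} \le \pi/2$, completing the proof.
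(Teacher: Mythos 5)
Your proof is correct, and it takes a genuinely different route from the paper's. Both arguments reduce the lemma to the same geometric fact --- that the Euclidean distance between two points on the $n$-unit sphere of $\mathrm{Im}(c)$ is controlled by the angle between them --- but they establish it in different ways. The paper argues directly for a lower bound on the angle via an explicit trigonometric construction: it exploits the convexity of the $n$-unit ball to confine $s(R_2)$ to a shaded region, shrinks the relevant $L_2$-ball to a radius $\epsilon$ chosen via compactness of $\mathrm{Im}(s)$, and then grinds through the law of sines and cosines to extract an explicit (if unwieldy) formula for $\ell_1$ in terms of quantities like $\min L_2 / \max L_2$ on $\mathrm{Im}(s)$. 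You instead bound the chord from above by the angle: you pass to the Euclidean normalisations $\hat x, \hat y$, use $\lVert \hat x - \hat y\rVert_2 = 2\sin(\theta/2) \le \theta$, and transport this back to the $n$-sphere via the radial map $\psi(w) = w/n(w)$, whose Lipschitz constant on the compact Euclidean unit sphere follows from the reverse triangle inequality, norm equivalence, and the positive lower bound on $n$ there. Your handling of the edge cases is also sound: the hypothesis $s(R_i) \neq 0$ does guarantee $n(s(R_i)) = 1$, your Lipschitz constant $\Lambda$ is uniform in $R_1, R_2$, and shrinking $\ell_1$ to enforce the $\pi/2$ cap is exactly what is needed (with the trivial caveat that if $d$ vanishes identically on the sphere the cap holds for free). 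What each approach buys: the paper's construction yields a constant with a closed-form expression in geometric data of $\mathrm{Im}(s)$, whereas yours is considerably shorter, avoids the delicate bookkeeping around $\delta$ versus $\epsilon$ and the figures entirely, and isolates the one genuinely analytic ingredient (Lipschitz continuity of radial projection) in a reusable form --- at the cost of a constant expressed only through norm-equivalence constants.
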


\begin{proof}
Let $d$ be an arbitrary STARC-metric, and let $R_1$ and $R_2$ be two arbitrary reward functions for which neither $s(R_1)$ or $s(R_2)$ is $0$. Recall that $d(R_1, R_2) = m(s(R_1),s(R_2))$, where $m$ is a metric that is bilipschitz equivalent to some norm. Since all norms are bilipschitz equivalent on any finite-dimensional vector space, this means that $m$ is bilipschitz equivalent to the $L_2$-norm. Thus, there are positive constants $p$, $q$ such that 
$$
p \cdot m(s(R_1), s(R_2)) \leq L_2(s(R_1), s(R_2)) \leq q \cdot m(s(R_1), s(R_2)).
$$
In particular, the $L_2$-distance between $s(R_1)$ and $s(R_2)$ is at least $\epsilon = p \cdot d(R_1, R_2)$. For the rest of our proof, it will be convenient to assume that $\epsilon < L_2(s(R_1))$; this can be ensured by picking a $p$ that is sufficiently small.

Let us plot the plane which contains $s(R_1)$, $s(R_2)$, and the origin, and orient it so that $s(R_1)$ points straight up, and so that $s(R_2)$ is not on the left-hand side:

\begin{figure}[H]
    \centering
    \includegraphics[width=2\textwidth/3]{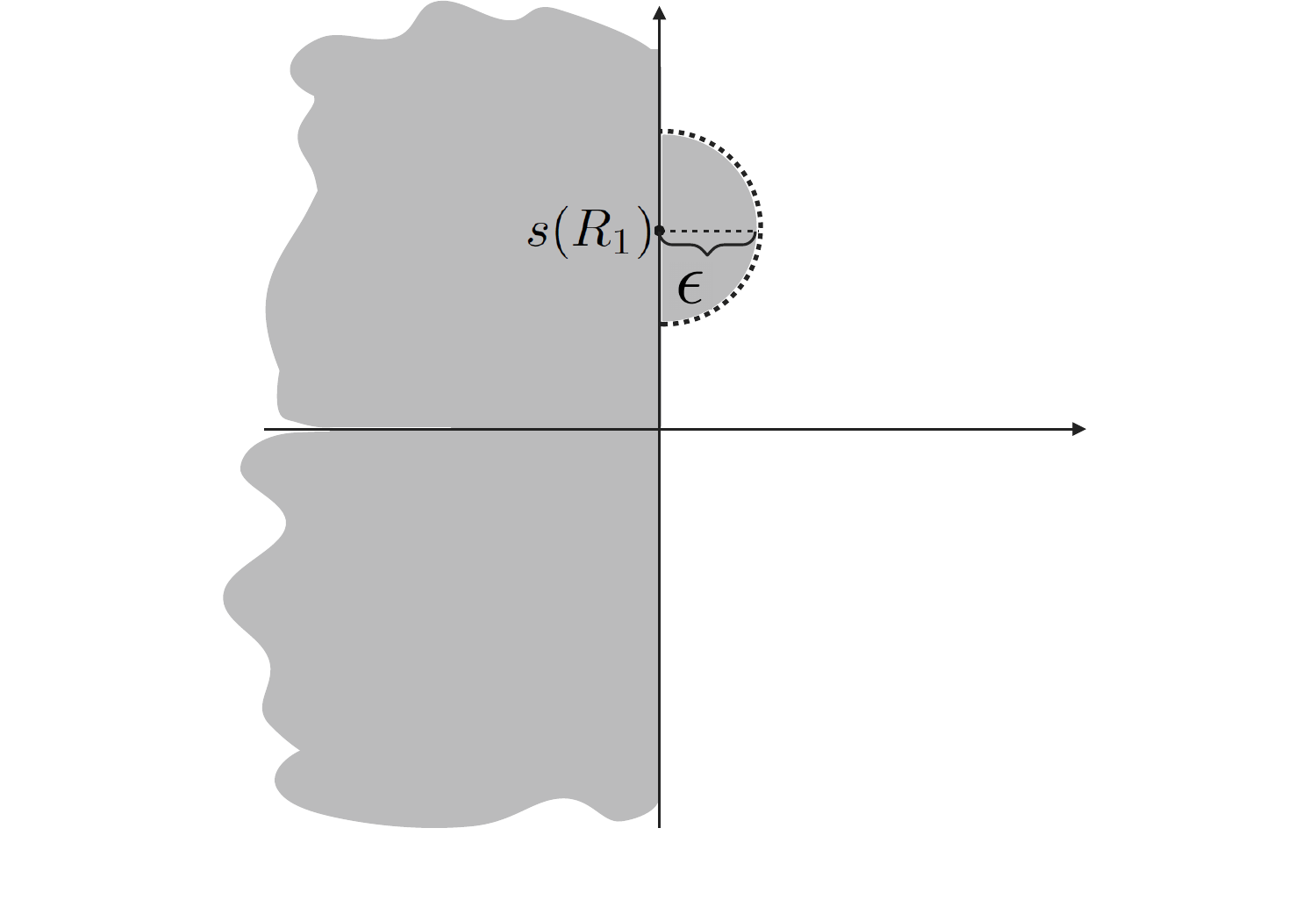}
    \label{fig:lemma_8_1}
\end{figure}

Since the distance between $s(R_1)$ and $s(R_2)$ is at least $\epsilon$, and since $s(R_2)$ is not on the left-hand side, we know that $s(R_2)$ cannot be inside of the region shaded grey in the figure above (though it may be on the boundary). Moreover, as per Lemma~\ref{lemma:lower_bound_lemma_0}, we know that the angle between $s(R_1)$ and $s(R_2) - s(R_1)$ is at least $\alpha$, where $\alpha > 0$. This means that we also can rule out the following region:

\begin{figure}[H]
    \centering
    \includegraphics[width=2\textwidth/3]{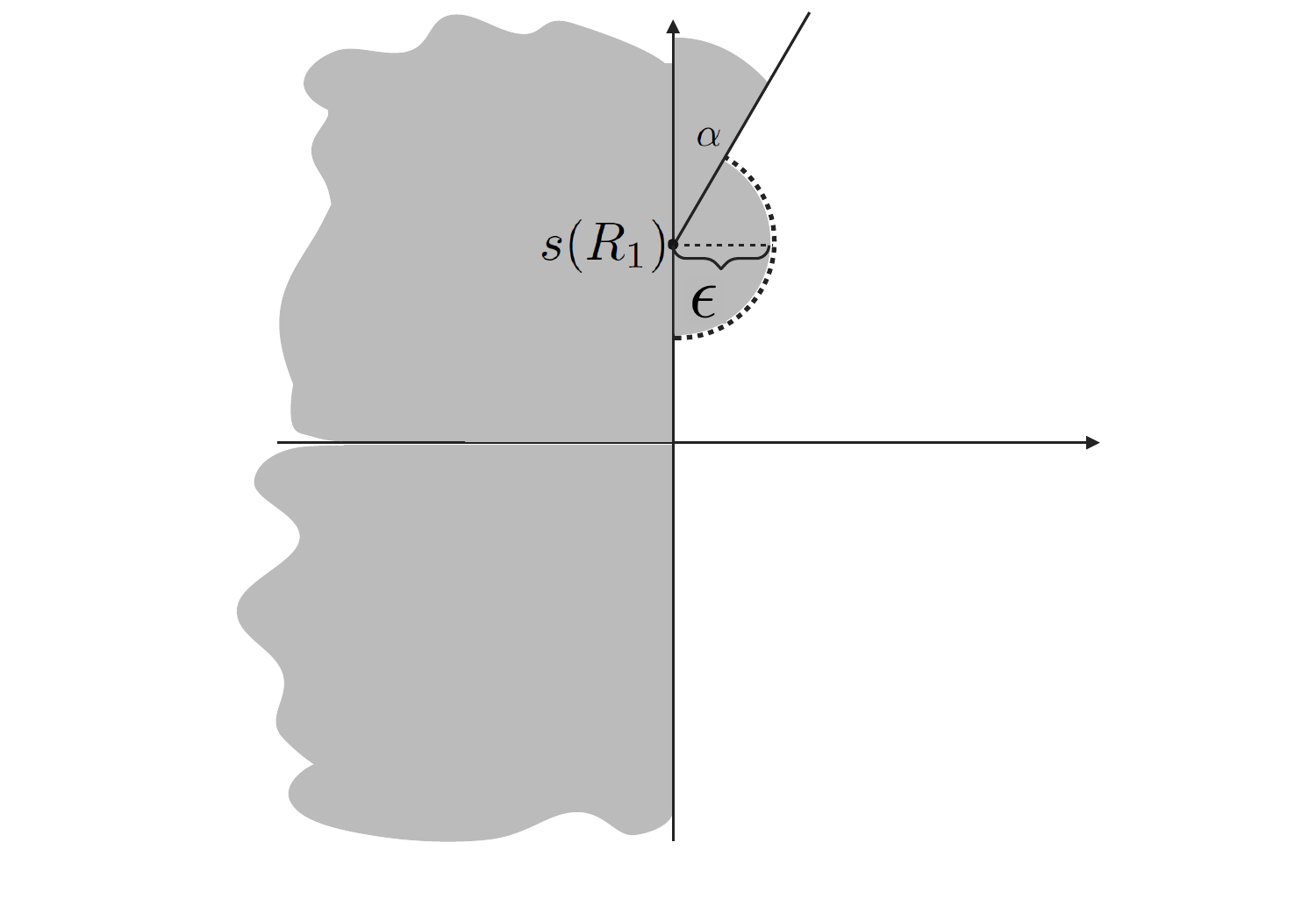}
    \label{fig:lemma_8_2}
\end{figure}

Moreover, let $v$ be the element of $\mathrm{Im}(s)$ that is perpendicular to $s(R_1)$, lies on a plane with $s(R_1)$, $s(R_2)$, and the origin, and points in the same direction as $s(R_2)$ within this plane. Since $\mathrm{Im}(s)$ is convex, we know that $s(R_2)$ cannot lie within the triangle formed by the $x$-axis, the $y$-axis, and the line between $s(R_1)$ and $v$:

\begin{figure}[H]
    \centering
    \includegraphics[width=2\textwidth/3]{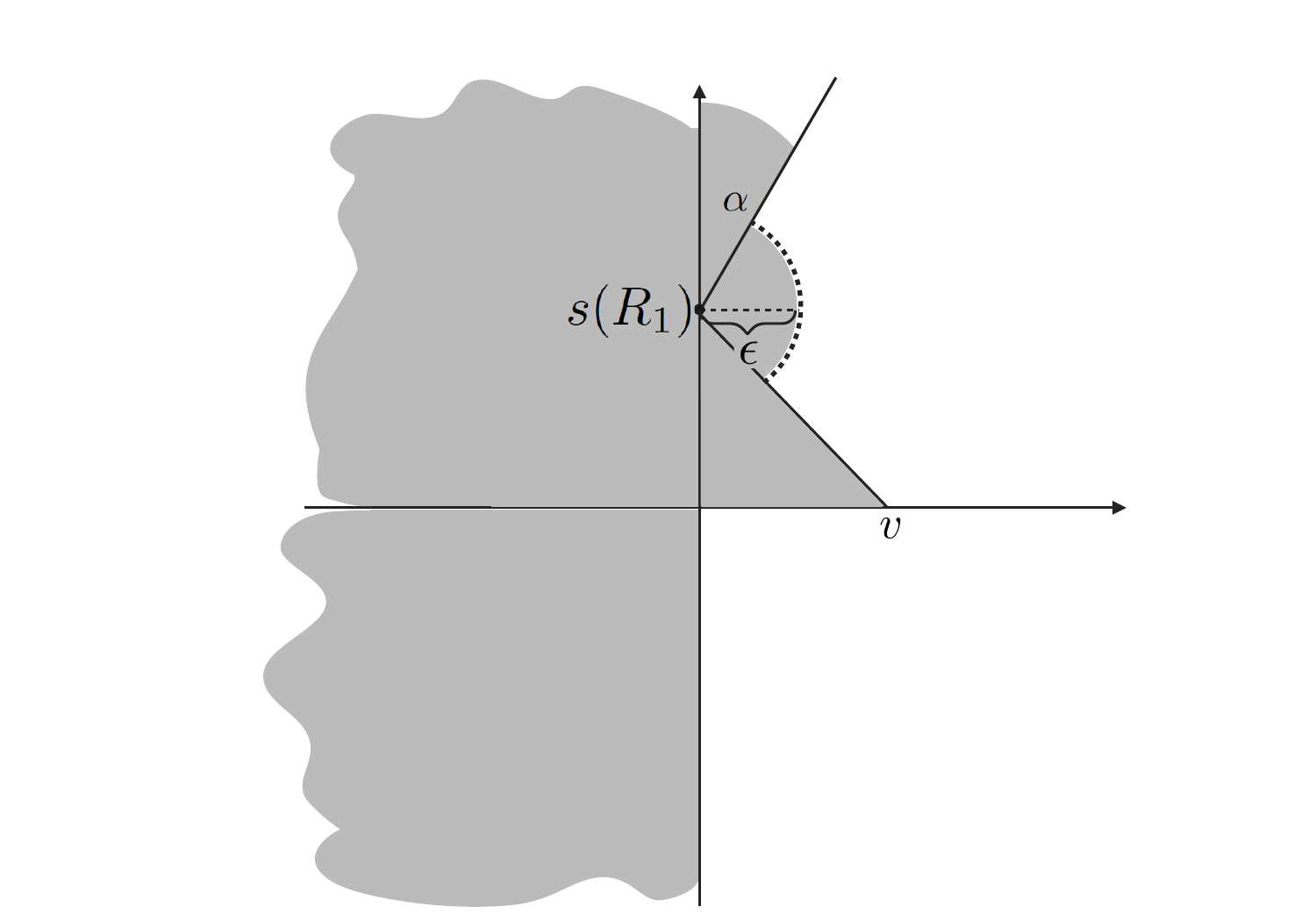}
    \label{fig:lemma_8_3}
\end{figure}

Since $\mathrm{Im}(s)$ is closed and convex, we know that there is a vector $a$ in $\mathrm{Im}(s)$ whose $L_2$-norm is bigger than all other vectors in $\mathrm{Im}(s)$, and a (non-zero) vector $b$ in $\mathrm{Im}(s)$ whose $L_2$-norm is smaller than all other (non-zero) vectors in $\mathrm{Im}(s)$. From this, we can infer that the angle between $s(R_1)$ and $v - s(R_1)$ is at least $\beta = \mathrm{arctan}(b/a)$. Also note that $\beta > 0$.

We now have everything we need to derive a lower bound on the angle $\theta$ between $s(R_1)$ and $s(R_2)$. First note that this angle can be no greater than the angle between $s(R_1)$ and the points marked $A$ and $B$ in the figure below (whichever is smaller):

\begin{figure}[H]
    \centering
    \includegraphics[width=\textwidth]{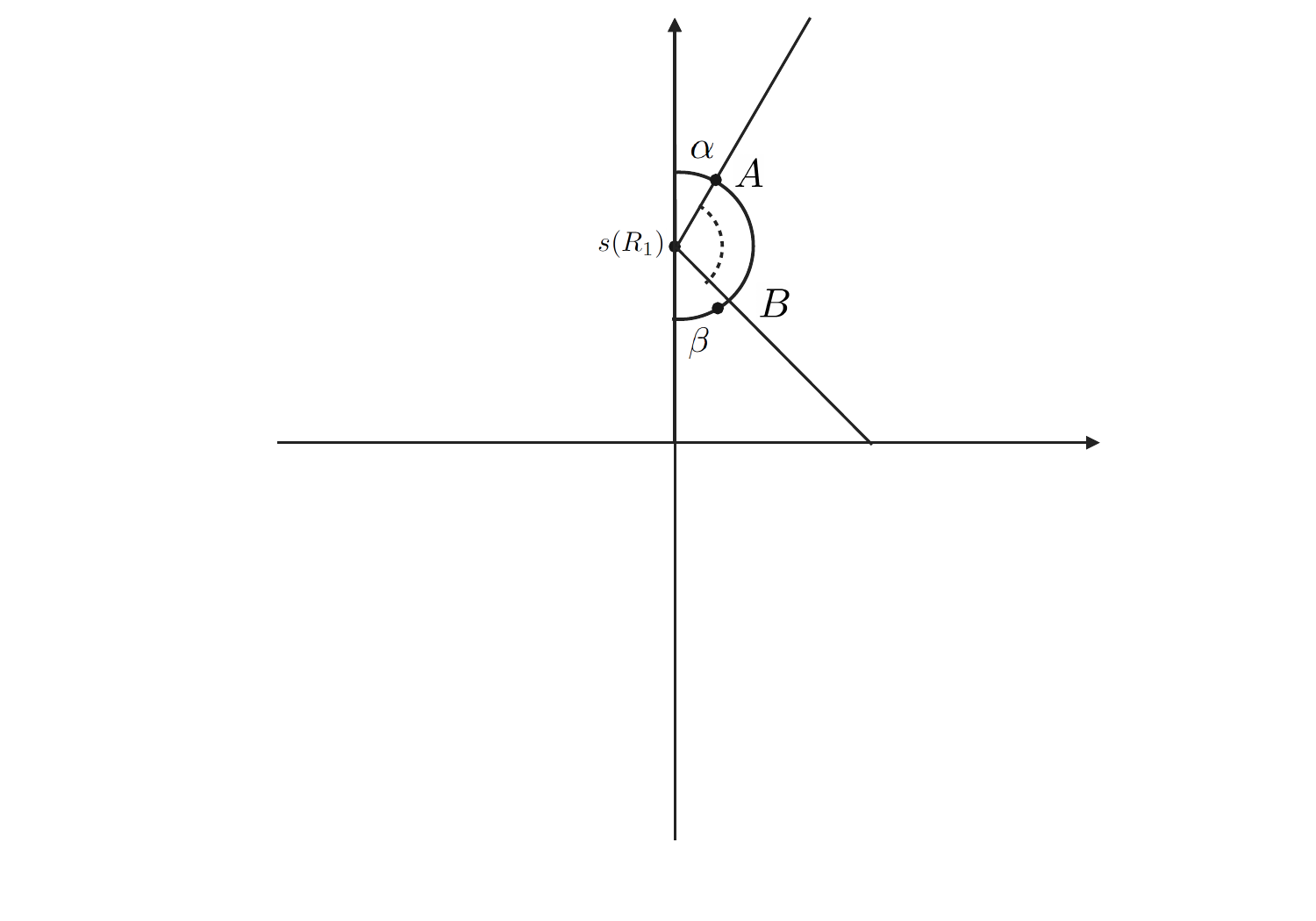}
    \label{fig:lemma_8_4}
\end{figure}

To make things easier, replace both $\alpha$ and $\beta$ with $\gamma = \mathrm{min}(\alpha, \beta)$. Since this makes the shaded region smaller, we still have that $s(R_2)$ cannot be in the interior of the new shaded region. Moreover, in this case, we know that the angle between $s(R_1)$ and $s(R_2)$ is no smaller than the angle $\theta'$ between $s(R_1)$ and the point marked $A$:

\begin{figure}[H]
    \centering
    \includegraphics[width=\textwidth]{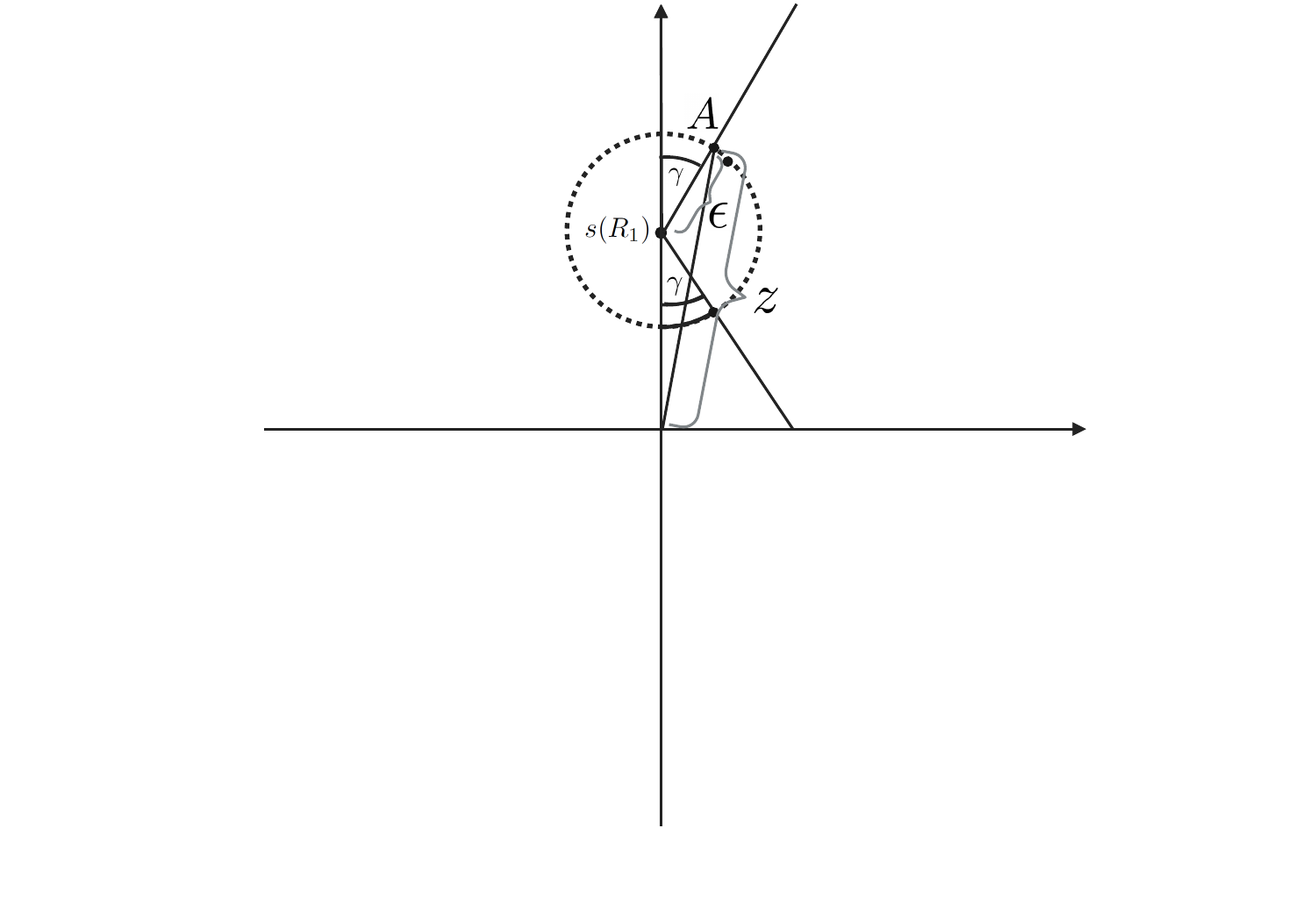}
    \label{fig:lemma_8_5}
\end{figure}

Deriving this angle is now just a matter of trigonometry. Letting $z$ denote $L_2(A)$, we have that:
$$
\frac{\epsilon}{\sin(x)} = \frac{z}{\sin(\pi - \gamma)} = \frac{z}{\sin(\gamma)}
$$
From this, we get that 
\begin{align*}
    \theta' &= \arcsin\left(\left(\frac{\epsilon}{z}\right) \sin(\gamma)\right)\\
            &\geq \left(\frac{\epsilon}{z}\right) \sin(\gamma)
\end{align*}
Moreover, it is also straightforward to find an upper bound $z'$ for $z$. Specifically, we have that $z^2 = L_2(s(R_1))^2 + \epsilon^2 - 2L_2(s(R_1))\epsilon \cos(\pi-\gamma)$. Since $\epsilon < L_2(s(R_1))$, this means that 
$$
z < \sqrt{2L_2(s(R_1))^2 - 2L_2(s(R_1))^2 \cos(\pi-\gamma)}.
$$
Moreover, since $\mathrm{Im}(s)$ is compact, there is a vector $a$ in $\mathrm{Im}(s)$ whose $L_2$-norm is bigger than all other vectors in $\mathrm{Im}(s)$. We thus know that 
$$
z < z' = \sqrt{2L_2(a)^2 - 2L_2(a)^2 \cos(\pi-\gamma)}.
$$
Putting this together, we have that 
$$
\theta \geq \theta' \geq \left(\frac{\epsilon}{z'}\right) \sin(\gamma) = m(s(R_1), s(R_2)) \cdot p \cdot \left(\frac{\sin(\gamma)}{z'}\right).
$$
Setting $\ell_1 = p \cdot \left(\frac{\sin(\gamma)}{z'}\right)$ thus completes the proof.
\end{proof}

Finally, before we can give the full proof, we will also need the following:

\begin{lemma}\label{lemma:lower_bound_lemma_2}
For any invertible matrix $M : \mathbb{R}^n \to \mathbb{R}^n$ there is an $\ell_2 \in (0, 1]$ such that for any $v, w \in \mathbb{R}^n$, the angle $\theta'$ between $M v$ and $M w$ satisfies $\theta' \geq \ell_2 \cdot \theta$, where $\theta$ is the angle between $v$ and $w$.
\end{lemma}

\begin{proof}
We will first prove that this holds in the $2$-dimensional case, and then extend this proof to the general $n$-dimensional case.

Let $M$ be an arbitrary invertible matrix $\mathbb{R}^2 \to \mathbb{R}^2$.
First note that we can factor $M$ via Singular Value Decomposition into three matrices $U$, $\Sigma$, $V$, such that $M = U \Sigma V^\top$, where $U$ and $V$ are orthogonal matrices, and $\Sigma$ is a diagonal matrix with non-negative real numbers on the diagonal. 
Since $M$ is invertible, we also have that $\Sigma$ cannot have any zeroes along its diagonal.
Next, recall that orthogonal matrices preserve angles.
This means that we can restrict our focus to just $\Sigma$.\footnote{If there are vectors $x,y$ such that the angle between $x$ and $y$ is $\theta$ and the angle between $Mx$ and $My$ is $\theta'$, then there are vectors $v,w$ such that the angle between $x$ and $y$ is $\theta$ and the angle between $\Sigma v$ and $\Sigma w$ is $\theta'$, and vice versa.}

Let $\alpha$ and $\beta$ be the singular values of $M$. We may assume, without loss of generality, that 
$$
\Sigma = \begin{pmatrix}\alpha & 0\\0 & \beta\end{pmatrix}.
$$ 
Moreover, since scaling the $x$ and $y$-axes uniformly will not affect the angle between any vectors after multiplication, we can instead equivalently consider the matrix 
$$
\Sigma = \begin{pmatrix}\alpha/\beta & 0\\0 & 1\end{pmatrix}.
$$
Let $v, w \in \mathbb{R}^2$ be two arbitrary vectors with angle $\theta$, and let $\theta'$ be the angle between $\Sigma v$ and $\Sigma w$. We will derive a lower bound on $\theta'$ expressed in terms of $\theta$. Moreover, since the angle between $v$ and $w$ is not affected by their magnitude, we will assume (without loss of generality) that both $v$ and $w$ have length 1 (under the $L_2$-norm).

First, note that if $\theta = \pi$ then $v = -w$. This means that $\Sigma v = -\Sigma w$, since $\Sigma$ is a linear transformation, which in turn means that $\theta' = \pi$. Thus $\theta' \geq \ell_2 \cdot \theta$ as long as $\ell_2 \leq 1$. Next, assume that $\theta < \pi$.

We may assume (without loss of generality) that the angle between $v$ and the $x$-axis is no bigger than the angle between $w$ and the $x$-axis. 
Let $\phi$ be the angle between the $x$-axis and the vector that is in the middle between $v$ and $w$. 
This means that we can express $v$ as $(\cos(\phi - \theta/2), \sin(\phi - \theta/2))$ and $w$ as $(\cos(\phi + \theta/2), \sin(\phi + \theta/2))$. Moreover, since reflection along either of the axes will not change the angle between either $v$ and $w$ or $\Sigma v$ and $\Sigma w$, we may assume (without loss of generality) that $\phi \in [0, \pi/2]$. For convenience, let $\sigma = \alpha/\beta$.

\begin{figure}[H]
    \centering
    \includegraphics[width=3\textwidth/4]{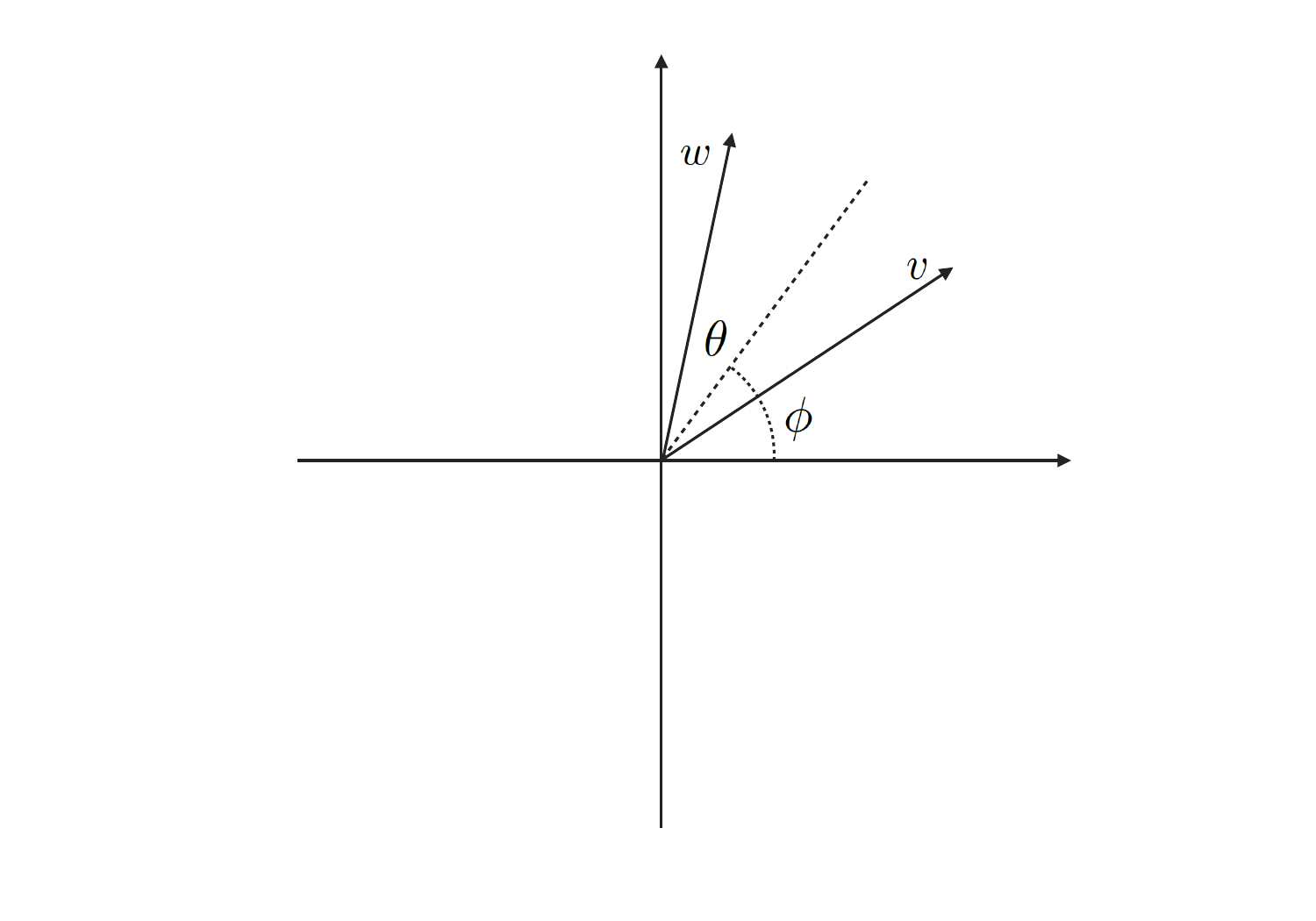}
    \label{fig:completeness_8}
\end{figure}

(Note that we can visualise the action of $\Sigma$ as scaling the $x$-axis in the figure above by $\sigma$.)

We now have that $\Sigma v = (\sigma \cos(\phi - \theta/2), \sin(\phi - \theta/2))$ and $\Sigma w = (\sigma \cos(\phi + \theta/2), \sin(\phi + \theta/2))$. 
Using the dot product, we get that
$$
\cos(\theta') = \frac{\sigma^2 \cos(\phi - \theta/2)\cos(\phi + \theta/2) + \sin(\phi - \theta/2)\sin(\phi + \theta/2)}{\sqrt{\sigma^2 \cos^2(\phi - \theta/2) + \sin^2(\phi - \theta/2)}\sqrt{\sigma^2 \cos^2(\phi + \theta/2) + \sin^2(\phi + \theta/2)}}.
$$
We next note that if $\theta \in [0,\pi)$ and $\phi \in [0, \pi/2]$, then the derivative of this expression with respect to $\phi$ can only be 0 when $\phi \in \{0, \pi/2\}$.\footnote{For example, this may be verified using tools such as Wolfram Alpha.} This means that $\cos(\theta')$ must be maximised or minimised when $\phi$ is either $0$ or $\pi/2$, which in turn means that the angle $\theta'$ must be minimised or maximised when $\phi$ is either $0$ or $\pi/2$.

It is now easy to see that if $\sigma >1$ then $\theta'$ is minimised when $\phi = 0$, and that if $\sigma < 1$ then $\theta'$ is minimised when $\phi = \pi/2$.
Moreover, if $\phi = \pi/2$, then 
$$
\theta' = 2 \arctan \left(\frac{\sigma \cos(\pi/2 -\theta/2)}{\sin(\pi/2 -\theta/2)}\right) = 2 \arctan \left(\sigma \tan(\theta/2)\right),
$$
which in turn is greater than $\theta \cdot \sigma$ when $\sigma < 1$.\footnote{To see this, let $x = \tan(\theta/2)$. Now $2 \arctan \left(\sigma \tan(\theta/2)\right) > \sigma \cdot \theta$ for all $\theta \in [0,\pi)$ if and only if $\arctan \left(\sigma x\right) > \sigma \cdot \arctan(x)$ for all $x \geq 0$. This is true, since $\arctan$ is strictly concave on $[0,\infty)$.}
Similarly, if $\phi = 0$, then 
$$
\theta' = 2 \arctan \left(\frac{\sin(\theta/2)}{\sigma \cos(\theta/2)}\right) = 2 \arctan \left(\sigma^{-1} \tan(\theta/2)\right),
$$
which is in turn greater than $\sigma^{-1} \cdot \theta$ when $\sigma > 1$. In either case, we thus have that
$$
\theta' \geq \theta \cdot \min(\sigma, \sigma^{-1}) = \theta \cdot \min(\beta/\alpha, \alpha/\beta).
$$
We have therefore show that, for any invertible matrix $M : \mathbb{R}^2 \to \mathbb{R}^2$, there exists a positive constant $\min(\beta/\alpha, \alpha/\beta)$, where $\alpha$ and $\beta$ are the singular values of $M$, such that if $v, w \in \mathbb{R}^2$ have angle $\theta$, then the angle between $Mv$ and $Mw$ is at least $\theta \cdot \min(\beta/\alpha, \alpha/\beta)$.

To generalise this to the general $n$-dimensional case, let $v, w \in \mathbb{R}^n$ be two arbitrary vectors. Consider the 2-dimensional linear subspace given by $S = \mathrm{span}(v, w)$, and note that $M(S)$ also is a 2-dimensional linear subspace of $\mathbb{R}^n$ (since $M$ is linear and invertible).
The linear transformation which $M$ induces between $S$ and $M(S)$ is isomorphic to a linear transformation $M' : \mathbb{R}^2 \to \mathbb{R}^2$.\footnote{To see this, let $A$ be an orthonormal matrix that rotates $\mathbb{R}^2$ to align with $S$, and let $B$ be an orthonormal matrix that rotates $M(S)$ to align with $\mathbb{R}^2$. Now $M' = BMA$ is an invertible linear transformation $\mathbb{R}^2 \to \mathbb{R}^2$. Moreover, since orthonormal matrices preserve the angles between vectors, we have that $v, w \in S$ have angle $\theta$ and $Mv, Mw \in M(S)$ have angle $\theta'$, if and only if $A^{-1}v, A^{-1}w \in \mathbb{R}^2$ have angle $\theta$ and $BMv, BMw \in \mathbb{R}^2$ have angle $\theta'$. Note that $M'A^{-1}v = BMv$ and $M'A^{-1}w = BMw$. This means that there are $v, w \in S$ such that $v, w$ have angle $\theta$ and $Mv, Mw$ have angle $\theta'$, if and only if there are $v', w' \in \mathbb{R}^2$ such that $v', w'$ have angle $\theta$ and $M'v'$ and $M'w'$ have angle $\theta'$ (with $v' = A^{-1}v$ and $w' = A^{-1}w$).}
We can thus apply our previous result for the two-dimensional case, and conclude that if the angle between $v$ and $w$ is $\theta$, then the angle between $Mv$ and $Mw$ is at least $\theta \cdot \min(\beta/\alpha, \alpha/\beta)$, where $\alpha$ and $\beta$ are the singular values of $M'$. Next, note that the singular values of $M'$ cannot be smaller than the smallest singular values of $M$ or bigger than the biggest singular values of $M$. We can therefore let $\ell_2 = \alpha/\beta$, where $\alpha$ is the smallest singular value of $M$ and $\beta$ is the greatest singular value of $M$, and conclude that the angle between $Mv$ and $Mw$ must be at least $\ell_2 \cdot \theta$. Since the value of $\ell_2$ does not depend on $v$ or $w$, this completes the proof.
\end{proof}

With these lemmas, we can now finally prove that all STARC metrics are complete:

\begin{theorem}
Any STARC metric is complete.
\end{theorem}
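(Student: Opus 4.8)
The plan is to reduce the statement to a purely geometric claim about the normalised canonical rewards and then lower-bound regret by an angle. Write $v_i = s(R_i)$, and first dispose of the degenerate cases. If $s(R_1) = 0$ then $c(R_1) = 0$, so $J_1$ is constant across policies, $\max_\pi J_1(\pi) - \min_\pi J_1(\pi) = 0$, and the required inequality holds with $\pi_1 = \pi_2$. If $s(R_2) = 0$ but $s(R_1) \neq 0$ then $J_2$ is constant, so the constraint $J_2(\pi_2) \geq J_2(\pi_1)$ is vacuous and I may take $\pi_1, \pi_2$ to be the $R_1$-best and $R_1$-worst policies, giving regret equal to the full range; since $s(R_1)$ ranges over the compact unit sphere of $n$ and $m$ is continuous, $d(R_1,R_2)$ is bounded above, so a small enough $L$ suffices. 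The ``moreover'' clause is exactly Proposition~\ref{prop:STARC_distance_zero}. So assume $v_1, v_2 \neq 0$. By Lemma~\ref{lemma:shift_lemma}, $J_i(\pi) = n(c(R_i)) \cdot J_{v_i}(\pi) + B_i$ for a policy-independent constant $B_i$, so both the regret $J_1(\pi_1) - J_1(\pi_2)$ and the range $\max_\pi J_1(\pi) - \min_\pi J_1(\pi)$ carry a common factor $n(c(R_1)) > 0$ that cancels, and the constraint is unchanged by this positive rescaling. It therefore suffices to find policies with $J_2(\pi_2) \geq J_2(\pi_1)$ and $J_{v_1}(\pi_1) - J_{v_1}(\pi_2) \geq L \cdot (\max_\pi J_{v_1}(\pi) - \min_\pi J_{v_1}(\pi)) \cdot d(R_1,R_2)$.

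Next I would convert the STARC distance into an angle and push it into the space where regret lives. Lemma~\ref{lemma:lower_bound_lemma_1} gives $\theta_R \geq \ell_1 \cdot d(R_1,R_2)$, where $\theta_R$ is the ($L_2$-)angle between $v_1$ and $v_2$ in $\mathrm{Im}(c)$. Let $\eta^\pi$ denote the discounted occupancy measure, so that $J_R(\pi) = \langle \eta^\pi, R\rangle$, and let $W = \mathrm{span}\{\eta^\pi - \eta^{\pi'}\}$ be the span of all occupancy-measure differences. Writing $w_i = \mathrm{proj}_W(v_i)$, I would first argue that $\mathrm{proj}_W$ is injective on $\mathrm{Im}(c)$: if $\mathrm{proj}_W(v) = 0$ for $v \in \mathrm{Im}(c)$ then every difference $J_v(\pi) - J_v(\pi')$ vanishes, so $v$ shares the policy order of the zero reward, and Proposition~\ref{prop:policy_order} together with the defining property of $c$ (and $c(c(R)) = c(R)$) forces $v = 0$. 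Thus $\mathrm{proj}_W$ restricts to an isomorphism from $\mathrm{Im}(c)$ onto its image; representing it in orthonormal bases as an invertible matrix and invoking Lemma~\ref{lemma:lower_bound_lemma_2}, the angle $\theta_W$ between $w_1$ and $w_2$ satisfies $\theta_W \geq \ell_2 \cdot \theta_R \geq \ell_2 \ell_1 \cdot d(R_1,R_2)$.

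The heart of the argument is a geometric lemma about the symmetric difference polytope $D = \{\eta^\pi - \eta^{\pi'}\}$, which spans $W$ and is centrally symmetric. Since every $\delta \in D$ decomposes as $\eta^{\pi_1} - \eta^{\pi_2}$, the quantity $\rho := \max\{\langle \delta, w_1\rangle : \delta \in D,\ \langle \delta, w_2\rangle \leq 0\}$ equals the maximal regret subject to the completeness constraint: the optimiser $\delta^\circ = \eta^{\pi_1} - \eta^{\pi_2}$ directly supplies the witnessing policies, and $\langle\delta,w_2\rangle \leq 0$ is exactly $J_{v_2}(\pi_2) \geq J_{v_2}(\pi_1)$, while $\max_\pi J_{v_1}(\pi) - \min_\pi J_{v_1}(\pi) = \max_{\delta \in D}\langle \delta, w_1\rangle$. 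I would then prove $\rho \geq L' \cdot \theta_W \cdot \max_{\delta\in D}\langle\delta, w_1\rangle$ for a constant $L' > 0$ depending only on $D$. By scale-invariance it suffices to bound the ratio $\rho/(\theta_W \cdot \mathrm{range})$ below on the compact set of non-parallel unit pairs $(w_1, w_2)$; the only subtlety is the limit $\theta_W \to 0$, where one checks (moving along the boundary of $D$ from the $w_1$-maximiser until the hyperplane $\langle\cdot,w_2\rangle = 0$ is met, as in the model case $D = [-1,1]^2$) that $\rho$ grows linearly rather than superlinearly in $\theta_W$, so the ratio stays bounded away from $0$. Chaining the bounds and setting $L = \ell_1 \ell_2 L'$, intersected with the constant from the degenerate cases, completes the proof.

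I expect the polytope step to be the main obstacle. Turning ``large angle between $w_1$ and $w_2$'' into ``large constrained regret'' requires control of the shape of $D$ — in particular a uniform, polytope-dependent constant $L'$ — and the delicate point is the behaviour as $\theta_W \to 0$ together with directions where the $w_1$-maximising face of $D$ is not a vertex; a clean compactness argument must rule out the ratio degenerating there. The remaining ingredients (the reduction, the cancellation of $n(c(R_1))$, and the two angle lemmas) are comparatively routine, the main care being to set up $\mathrm{proj}_W$ restricted to $\mathrm{Im}(c)$ as a genuine invertible map so that Lemma~\ref{lemma:lower_bound_lemma_2} applies.
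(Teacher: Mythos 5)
Your architecture is essentially the paper's: the same degenerate-case handling, the same reduction via Lemma~\ref{lemma:shift_lemma} to the normalised canonical rewards, the angle lower bound from Lemma~\ref{lemma:lower_bound_lemma_1}, pushing that angle through an invertible projection onto the occupancy-measure space via Lemma~\ref{lemma:lower_bound_lemma_2} (your justification of invertibility via Proposition~\ref{prop:policy_order} and idempotence of $c$ is the same as, and somewhat more carefully stated than, the paper's), and finally relating $n(c(R_1))$ back to $\max_\pi J_1(\pi) - \min_\pi J_1(\pi)$ by norm equivalence on $\mathrm{Im}(c)$. The one place you diverge is exactly the step you flag as the main obstacle, and there your argument as written does not close. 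The set of non-parallel unit pairs $(w_1,w_2)$ is \emph{not} compact --- the problematic locus $\theta_W = 0$ is precisely its boundary --- so "bound the ratio below on the compact set" does not apply, and your proposed repair (walking along the boundary of $D$ from the $w_1$-maximiser to the hyperplane $\langle\cdot,w_2\rangle = 0$) is a heuristic for a polytope with vertex-maximisers that you yourself note breaks down when the maximising face is higher-dimensional. As stated, the claim $\rho \geq L' \cdot \theta_W \cdot \max_{\delta\in D}\langle\delta,w_1\rangle$ is asserted, not proved.

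The fix is simple and is in substance what the paper does: $D = \mathcal{O} - \mathcal{O}$ (with $\mathcal{O}$ the occupancy polytope) is convex, compact, centrally symmetric, and spans $W$, so $0$ lies in its relative interior and $D \supseteq B_W(0,r)$ for some $r>0$. Taking $\delta = r u$ with $u$ the unit vector in $\mathrm{span}(w_1,w_2)$ orthogonal to $w_2$ on the $w_1$-positive side gives a feasible $\delta$ with $\langle\delta,w_1\rangle = r\sin\theta_W \geq (2r/\pi)\,\theta_W$ for $\theta_W \le \pi/2$ (and $\delta = rw_1$ is feasible when $\theta_W > \pi/2$), while $\max_{\delta\in D}\langle\delta,w_1\rangle \leq \mathrm{diam}(D)$, so $L' = 2r/(\pi\,\mathrm{diam}(D))$ works with no case analysis on the faces of $D$ at all. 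This inscribed-ball device is exactly the paper's use of the fact that $\mathrm{Im}(m)$ contains a relatively open set, hence a sphere of some diameter $\epsilon$, inside which it constructs the two witness policies and applies $\cos(\pi/2 - x) \geq 2x/\pi$. With that substitution your proof goes through and coincides with the paper's; one further point worth making explicit (which the paper also leans on implicitly) is that every point of $\mathcal{O}$, hence every extremal $\delta \in D$, is realised by an actual (possibly stochastic) policy, so the optimiser genuinely supplies $\pi_1,\pi_2$.
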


\begin{proof}
Let $d$ be an arbitrary STARC metric. We need to show that there exists a positive constant $L$ such that, for any reward functions $R_1$ and $R_2$, there are two policies $\pi_1$, $\pi_2$ with $J_2(\pi_2) \geq J_2(\pi_1)$ and
$$
J_1(\pi_1) - J_1(\pi_2) \geq L \cdot (\max_\pi J_1(\pi) - \min_\pi J_1(\pi)) \cdot d(R_1, R_2),
$$
and moreover, if both $\max_\pi J_1(\pi) - \min_\pi J_1(\pi) = 0$ and $\max_\pi J_2(\pi) - \min_\pi J_2(\pi) = 0$, then we have that $d(R_1, R_2) = 0$.

We first note that the last condition holds straightforwardly. If $\max_\pi J_1(\pi) - \min_\pi J_1(\pi) = 0$ and $\max_\pi J_2(\pi) - \min_\pi J_2(\pi) = 0$ then $R_1$ and $R_2$ have the same policy order, which means that Proposition~\ref{prop:STARC_distance_zero} implies that $d(R_1, R_2) = 0$. This condition is therefore satisfied.

For the first condition, first note that if $\max_\pi J_1(\pi) - \min_\pi J_1(\pi) = 0$, then the statement holds trivially for any non-negative $L$ (since the other two terms on the right-hand side of the inequality are strictly non-negative). 

Let us next consider the case where both $\max_\pi J_1(\pi) - \min_\pi J_1(\pi) > 0$ and $\max_\pi J_1(\pi) - \min_\pi J_1(\pi) > 0$.
We need to introduce a new definition. 
Let $m : \Pi \to \mathbb{R}^{|\States||\Actions||\States|}$ be the function that takes a policy $\pi$, and returns the vector where $m(\pi)[s,a,s'] = \sum_{t=0}^\infty \gamma^t\mathbb{P}(S_t,A_t,S_{t+1} = s,a,s')$, where the probability is for a trajectory sampled from $\pi$ under $\tau$ and $\InitStateDistribution$.
In other words, $m$ returns the long-run discounted cumulative probability with which $\pi$ visits each transition. 
Next, note that $J(\pi) = m(\pi) \cdot R$. This means that $m$ can be used to decompose $J$ into two steps, the first of which is independent of the reward function, and the second of which is a linear function.



We will use $d$ to derive a lower bound on the angle $\theta$ between the level sets of $J_1$ and $J_2$ in $\mathrm{Im}(m)$. We will then show that $\mathrm{Im}(m)$ contains an open set with a certain diameter. From this, we can find two policies that incur a certain amount of regret.



First, by Lemma~\ref{lemma:lower_bound_lemma_1}, there exists an $\ell_1$ such that for any non-trivial $R_1$ and $R_2$, the angle between $s(R_1)$ and $s(R_2)$ is at least $\ell_1 \cdot d(R_1, R_2)$. To make our proof easier, we will assume that we pick an $\ell_1$ that is small enough to ensure that $\ell_1 \cdot d(R_1, R_2) \leq \pi/2$ for all $R_1, R_2$.

Note that $s(R_1)$ and $s(R_2)$ may not be parallel with $\mathrm{Im}(m)$, which means that the angle between $s(R_1)$ and $s(R_2)$ may not be the same as the angle between the level sets of $J_1$ and $J_2$ in $\mathrm{Im}(m)$.
Therefore, consider the matrix $M$ that projects $\mathrm{Im}(c)$ onto the linear subspace of $\mathcal{R}$ that is parallel to $\mathrm{Im}(m)$, where $c$ is the canonicalisation function of $d$.
Now the angle between $M s(R_1)$ and $M s(R_2)$ is the same as the angle between the level sets of the linear functions which $J_1$ and $J_2$ induce on $\mathrm{Im}(m)$.
Moreover, note that $M$ is invertible, since any two reward functions in $\mathrm{Im}(c)$ induce different policy orderings except when they differ by positive linear scaling (Proposition~\ref{prop:policy_order}).
We can therefore apply Lemma~\ref{lemma:lower_bound_lemma_1}, and conclude that there exists an $\ell_2 \in (0,1]$, such that the angle $\theta$ between the level sets of $J_1$ and $J_2$ in $\mathrm{Im}(m)$ is at least $\ell_2 \cdot \ell_1 \cdot d(R_1, R_2)$.
Moreover, since $\ell_1 \cdot d(R_1, R_2)$ is at most $\pi/2$, and since $\ell_2 \leq 1$, we have that $\ell_2 \cdot \ell_1 \cdot d(R_1, R_2)$ is at most $\pi/2$.

This gives us that, for any two policies $\pi_1, \pi_2$, we have:
\begin{align*}
    J_1(\pi_1) - J_1(\pi_2) &= J_1^C(\pi_1) - J_1^C(\pi_2)\\
    &=c(R_1) m(\pi_1) - c(R_1) m(\pi_2)\\
    &= c(R_1) (m(\pi_1) - m(\pi_2))\\
    &= M(c(R_1)) (m(\pi_1) - m(\pi_2))\\
    &= L_2(M(c(R_1))) \cdot L_2(m(\pi_1) - m(\pi_2)) \cdot \cos(\phi)
\end{align*}
where $\phi$ is the angle between $M(c(R_1))$ and $m(\pi_1) - m(\pi_2)$, and $J_1^C$ is the evaluation function of $c(R_1)$. Note that the first line follows from Lemma~\ref{lemma:shift_lemma}. We can thus derive a lower bound on worst-case regret by deriving a lower bound for the greatest value of this expression.

We have that $\mathrm{Im}(m)$ contains a set that is open in the smallest affine space which contains $\mathrm{Im}(m)$ \citep[see ][]{rewardhacking}. 
This means that there is an $\epsilon$ such that $\mathrm{Im}(m)$ contains a sphere of diameter $\epsilon$. We will show that we always can find two policies within this sphere that incur a certain amount of regret.
Consider the 2-dimensional cut which goes through the middle of this sphere and is parallel with the normal vectors of the level sets of $J_1$ and $J_2$. The intersection between this cut and the $\epsilon$-sphere forms a 2-dimensional circle with diameter $\epsilon$.
Let $\pi_1, \pi_2$ be the two policies for which $m(\pi_1)$ and $m(\pi_2)$ lie opposite to each other on this circle, and satisfy that $J_2(\pi_1) = J_2(\pi_2)$ (or, equivalently, that $Mc(R_1) \cdot m(\pi_1) = Mc(R_1) \cdot m(\pi_2)$). Without loss of generality, we may assume that $J_1(\pi_1) \geq J_1(\pi_2)$.

\begin{figure}[H]
    \centering
    \includegraphics[width=2\textwidth/3]{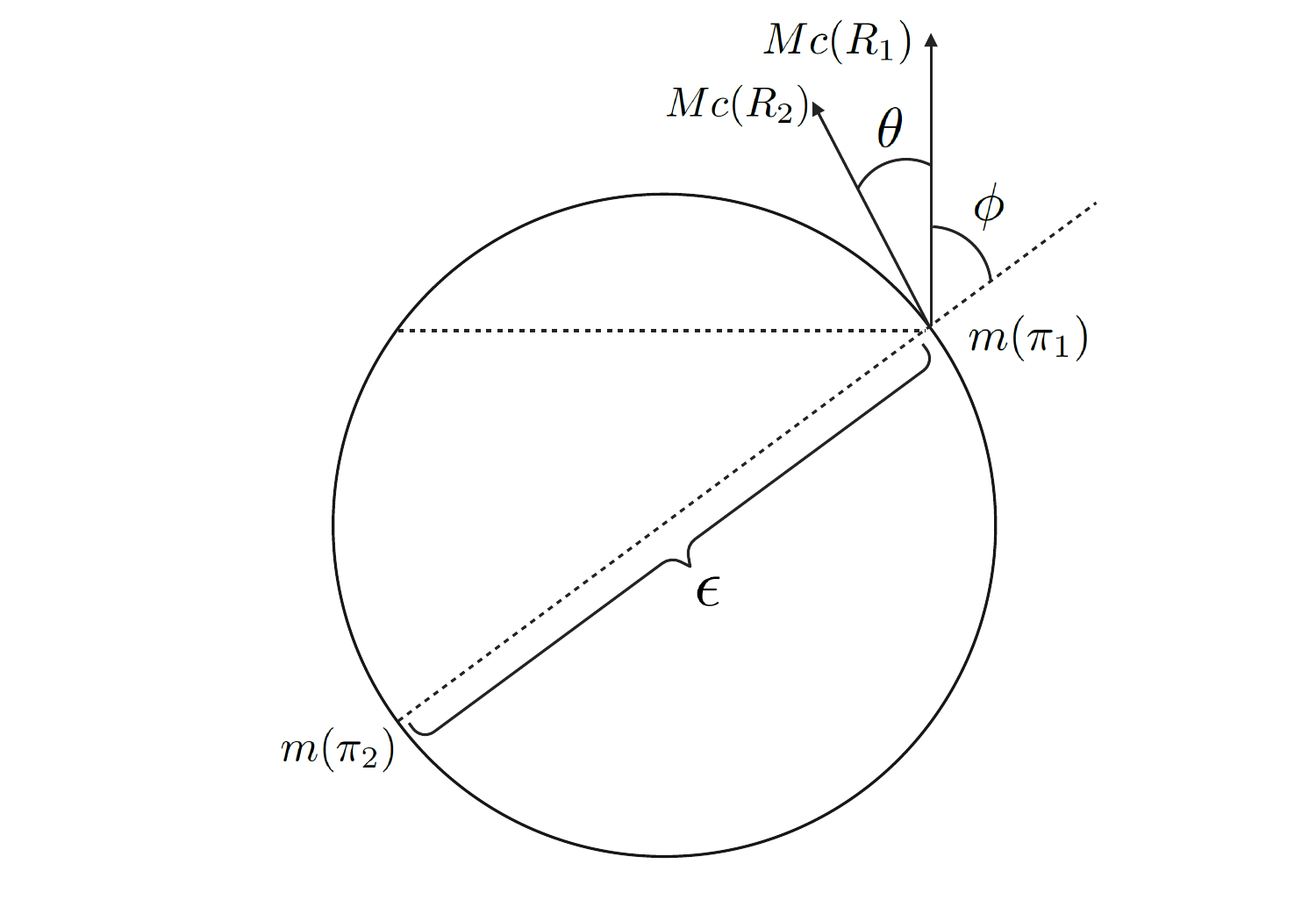}
    \label{fig:completeness_9}
\end{figure}

Now note that $L_2(m(\pi_1) - m(\pi_2)) = \epsilon$. Moreover, recall that the angle $\theta$ between $Mc(R_1)$ and $Mc(R_2)$ is at least $\theta' = \ell_1 \cdot \ell_2 \cdot d(R_1, R_2)$, and that this quantity is at most $\pi/2$. This means that the angle $\phi$ is at most $\pi/2-\theta'$, and so $\cos(\phi)$ is at least $\cos(\pi/2 - \theta') = \cos(\pi/2 - \ell_1 \cdot \ell_2 \cdot d(R_1, R_2))$.
This means that we have two policies $\pi_1, \pi_2$ where $J_2(\pi_2) = J_2(\pi_1)$ and such that 
\begin{align*}
    J_1(\pi_1) - J_1(\pi_2) &= L_2(M(c(R_1))) \cdot L_2(m(\pi_1) - m(\pi_2)) \cos(\phi)\\
    &= L_2(M(c(R_1))) \cdot \epsilon \cdot cos(\pi/2 - \ell_2 \cdot \ell_1 \cdot d(R_1, R_2)).
\end{align*}
Note that $\cos(\pi/2 - x) \geq x \cdot 2/\pi$ when $x \leq \pi/2$, and that $\ell_2 \cdot \ell_1 \cdot d(R_1, R_2) \leq \pi/2$.
Putting this together, we have that there must exist two policies $\pi_1$, $\pi_2$ with $J_2(\pi_2) = J_2(\pi_1)$ such that

$$
J_1(\pi_1) - J_1(\pi_2) \geq L_2(M(c(R_1))) \cdot \left( \frac{\epsilon \cdot \ell_1 \cdot \ell_2 \cdot 2}{\pi} \right) \cdot d(R_1, R_2).
$$

Next, note that, if $p$ is a norm and $M$ is an invertible matrix, then $p \circ M$ is also a norm. Furthermore, recall that $\max_\pi J_1(\pi) - \min_\pi J_1(\pi)$ is a norm on $\mathrm{Im}(c)$, when $c$ is a canonicalisation function (Proposition~\ref{prop:J_norm}). 
Since all norms are equivalent on a finite-dimensional vector space, this means that there must exist a positive constant $\ell_3$ such that $L_2(M(c(R_1))) \geq \ell_3 \cdot (\max_\pi J_1(\pi) - \min_\pi J_1(\pi))$.
We can therefore set $L = \left( \epsilon \cdot \ell_1 \cdot \ell_2 \cdot \ell_3 \cdot 2 / \pi \right)$, and obtain the result that we want:
$$
J_1(\pi_1) - J_1(\pi_2) \geq L \cdot (\max_\pi J_1(\pi) - \min_\pi J_1(\pi)) \cdot d(R_1, R_2).
$$
Finally, we must consider the case where $R_2$ is trivial under $\tau$ and $\mu_0$, but where $R_1$ is not. In this case, $J_2(\pi_2) \geq J_2(\pi_1)$ for all $\pi_1$ and $\pi_2$, which means that $\max_{\pi_1, \pi_2 : J_2(\pi_2) \geq J_2(\pi_1)} J_1(\pi_1) - J_1(\pi_2) = \max_\pi J_1(\pi) - \min_\pi J_1(\pi)$. Therefore, the statement holds for any $L$ as long as we ensure that $L \cdot d(R_1, R_0) \leq 1$ for all $R_1$ and $R_2$. This completes the proof.
\end{proof}

\subsection{Issues with EPIC, and Similar Metrics}\label{appendix:epic_like_metrics}

In this appendix, we prove the results from in Appendix~\ref{appendix:epic}. Moreover, we state and prove versions of these theorems that are more general than the versions given in the main text. First, we need a few new definitions:

\begin{definition}
A function $c : \R \to \R$ is an \emph{EPIC-like canonicalisation function} if $c$ is linear, $c(R)$ and $R$ differ by potential shaping, and $c(R_1) = c(R_2)$ if and only if $R_1$ and $R_2$ only differ by potential shaping.
\end{definition}

\begin{definition}
A function $d : \R \times \R \to \mathbb{R}$ is an \emph{EPIC-like metric} if there is an EPIC-like canonicalisation function $c$, a function $n$ that is a norm on $\mathrm{Im}(c)$, and a metric $m$ that is admissible on $\mathrm{Im}(c)$, such that $d(R_1, R_2) = m(s(R_1), s(R_2))$,
where $s(R) = c(R)/n(c(R))$ when $n(c(R)) \neq 0$, and $c(R)$ otherwise.
\end{definition}

Note that $C^\mathrm{EPIC}$ is an EPIC-like canonicalisation function, and that EPIC is an EPIC-like metric.

\begin{theorem}
For any EPIC-like metric $d$ there exists a positive constant $U$, such that for any reward functions $R_1$ and $R_2$, if two policies $\pi_1$ and $\pi_2$ satisfy that $J_2(\pi_2) \geq J_2(\pi_1)$, then we have that
$$
J_1(\pi_1) - J_1(\pi_2) \leq U \cdot L_2(R_1) \cdot d(R_1, R_2).
$$\end{theorem}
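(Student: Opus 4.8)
The plan is to mirror the proof of Theorem~\ref{thm:STARC_sound} almost line for line, diverging only in the final step. The key observation is that the opening of that proof never used the fact that the canonicalisation removes $S'$-redistribution; it used only that $c$ is linear, that $n$ is a norm on $\mathrm{Im}(c)$, and that $m$ is admissible on $\mathrm{Im}(c)$ --- all of which hold for EPIC-like metrics. Writing $J^S_i$ for the evaluation function of the standardised reward $s(R_i) = c(R_i)/n(c(R_i))$, I would first combine admissibility of $m$ (which gives $p(s(R_1),s(R_2)) \leq K_m \cdot d(R_1,R_2)$ for some norm $p$ and constant $K_m$) with Lemma~\ref{lemma:norm_to_distance_fixed_pi} (which gives $|J^S_1(\pi) - J^S_2(\pi)| \leq K_p \cdot p(s(R_1),s(R_2))$ for every policy $\pi$), concluding that $|J^S_1(\pi) - J^S_2(\pi)| \leq K_{mp} \cdot d(R_1,R_2)$ for all $\pi$, where $K_{mp} = K_p K_m$.

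Since $J_2(\pi_2) \geq J_2(\pi_1)$ holds exactly when $J^S_2(\pi_2) \geq J^S_2(\pi_1)$, I would then apply Lemma~\ref{lemma:policy_optimisation_bound} to get $J^S_1(\pi_1) - J^S_1(\pi_2) \leq 2 K_{mp} \cdot d(R_1,R_2)$. Here the EPIC-like structure enters for the first time: because $c(R_1)$ and $R_1$ differ by potential shaping, Lemma~\ref{lemma:shift_lemma} furnishes a constant $B$ with $J^C_1 = J_1 + B$, which is precisely the hypothesis needed to invoke Lemma~\ref{lemma:standardised_to_unstandardised_bound} and pass from the standardised to the unstandardised evaluation. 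This yields
$$
J_1(\pi_1) - J_1(\pi_2) \leq n(c(R_1)) \cdot 2 K_{mp} \cdot d(R_1,R_2),
$$
with the degenerate case $n(c(R_1)) = 0$ (where $c(R_1) = 0$ and $J_1$ is constant) handled automatically.

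The one genuinely new step, and the main obstacle, is bounding $n(c(R_1))$. In Theorem~\ref{thm:STARC_sound} this was controlled by $\max_\pi J_1(\pi) - \min_\pi J_1(\pi)$ through Proposition~\ref{prop:J_norm}, but that route is closed here: Proposition~\ref{prop:J_norm} needs $c$ to remove $S'$-redistribution, and for an EPIC-like $c$ the map $R \mapsto \max_\pi J(\pi) - \min_\pi J(\pi)$ need not be a norm on $\mathrm{Im}(c)$, since two distinct elements of $\mathrm{Im}(c)$ can differ by $S'$-redistribution and so share a policy order. Instead I would note that $R \mapsto n(c(R))$ is a seminorm on $\R$, inheriting non-negativity, absolute homogeneity, and the triangle inequality from $n$ through the linear map $c$. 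Every seminorm on the finite-dimensional space $\R$ is dominated by a constant multiple of any norm: by continuity of $n \circ c$ and compactness of the $L_2$-unit sphere (the extreme value theorem), $n \circ c$ attains a maximum $C$ there, and homogeneity gives $n(c(R_1)) \leq C \cdot L_2(R_1)$ for all $R_1$ (in the spirit of Lemma~\ref{lemma:smallest_n_ratio}). Setting $U = 2 C K_{mp}$ then delivers the claim.

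Finally, I would check the uniformity that makes this genuinely generalise the $D^\mathrm{EPIC}$ bound: the constant $U$ does not depend on $\tau$ or $\mu_0$. This is because an EPIC-like canonicalisation, and hence $n \circ c$, $s$, and $d$, depend only on $\gamma$; the trajectory-distribution bounds of Lemmas~\ref{lemma:norm_to_distance_fixed_pi} and \ref{lemma:policy_optimisation_bound} hold for all distributions with constants depending only on $\gamma$ and the fixed norms; the potential-shaping shift $B$ exists for every $\mu_0$; and the dominating constant $C$ depends only on $c$, $n$, and $L_2$. Thus $U$ depends on the metric $d$ (and $\gamma$) alone, as required.
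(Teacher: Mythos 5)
Your proposal is correct and follows essentially the same route as the paper's proof: the same chain through admissibility, Lemma~\ref{lemma:norm_to_distance_fixed_pi}, Lemma~\ref{lemma:policy_optimisation_bound}, Lemma~\ref{lemma:shift_lemma}, and Lemma~\ref{lemma:standardised_to_unstandardised_bound}, followed by bounding $n(c(R_1))$ against $L_2(R_1)$. The only cosmetic difference is that you fold the paper's two-step bound ($n(c(R)) \leq K_n \cdot n(R)$ via Lemma~\ref{lemma:smallest_n_ratio}, then $n(R) \leq K_2 \cdot L_2(R)$ by norm equivalence) into a single seminorm-domination argument, which rests on the same compactness-and-homogeneity idea.
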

\begin{proof}
We wish to show that there is a positive constant $U$, such that for any $R_1$ and $R_2$, and any pair of policies $\pi_1$ and $\pi_2$ such that $J_2(\pi_2) \geq J_2(\pi_1)$, we have
$$
J_1(\pi_1) - J_1(\pi_2) \leq U \cdot L_2(R_1) \cdot d(R_1, R_2).
$$
Moreover, this must hold for any choice of $\tau$ and $\mu_0$.

Recall that $d(R_1, R_2) = m(s(R_1), s(R_2))$, where $m$ is an admissible metric. Since $m$ is admissible, we have that $p(s(R_1), s(R_2)) \leq K_m \cdot m(s(R_1), s(R_2))$ for some norm $p$ and constant $K_m$.
Moreover, since $p$ is a norm, we can apply Lemma~\ref{lemma:norm_to_distance_fixed_pi} to conclude that there is a constant $K_p$ such that for any policy $\pi$, any transition function $\tau$, and any initial state distribution $\mu_0$, we have that
$$
|\J_1^S(\pi) - \J_2^S(\pi)| \leq K_p \cdot p(s(R_1), s(R_2)).
$$ 
Combining this with the fact that $p(s(R_1), s(R_2)) \leq K_m \cdot m(s(R_1), s(R_2))$, we get
\begin{align*}
  K_p \cdot p(s(R_1), s(R_2)) &\leq K_p \cdot K_m \cdot m(s(R_1), s(R_2))\\  
  &= K_{mp} \cdot d(R_1, R_2)
\end{align*}
where $K_{mp} = K_p \cdot K_m$.
We have thus established that, for any $\pi$, $\tau$, and $\mu_0$, we have
$$
|\J_1^S(\pi) - \J_2^S(\pi)| \leq K_{mp} \cdot d(R_1, R_2).
$$
Consider an arbitrary transition function $\tau$ and initial state distribution $\mu_0$, and let $\pi_1$ and $\pi_2$ be any two policies such that $\J_2(\pi_2) \geq \J_2(\pi_1)$ under $\tau$ and $\mu_0$.
Note that $\J_2(\pi_2) \geq \J_2(\pi_1)$ if and only if $\J_2^S(\pi_2) \geq \J_2^S(\pi_1)$. 
We can therefore apply Lemma~\ref{lemma:policy_optimisation_bound} and conclude that 
$$
\J_1^S(\pi_1) - \J_1^S(\pi_2) \leq 2 \cdot K_{mp} \cdot d(R_1, R_2).
$$
By Lemma~\ref{lemma:shift_lemma}, there is a constant $B$ such that $J_1^C = J_1 + B$.
We can therefore apply Lemma~\ref{lemma:standardised_to_unstandardised_bound}:
$$
\J_1(\pi_1) - \J_1(\pi_2) \leq n(c(R_1)) \cdot 2 \cdot  K_{mp} \cdot d(R_1, R_2).
$$ 
By Lemma~\ref{lemma:smallest_n_ratio}, there is a positive constant $K_n$ such that $n(c(R)) \leq K_n \cdot n(R)$ for all $R \in \R$.
$$
\J_1(\pi_1) - \J_1(\pi_2) \leq K_n \cdot n(R_1) \cdot 2 \cdot  K_{mp} \cdot d(R_1, R_2).
$$ 
Moreover, since $n$ is a norm, and since $\R$ is a finite-dimensional vector space, we have that there is a constant $K_2$ such that $n(R) \leq K_2 \cdot L_2(R)$ for all $R \in \R$. 
Let $U = 2 \cdot K_n \cdot K_{mp} \cdot K_2$. We have now established that, for any $\pi_1$ and $\pi_2$ such that $\J_2(\pi_2) \geq \J_2(\pi_1)$, we have that
$$
\J_1(\pi_1) - \J_1(\pi_2) \leq U \cdot L_2(R_1) \cdot d(R_1, R_2).
$$ 
Note that $U$ does not depend on $\tau$ or $\mu_0$. This completes the proof.
\end{proof}

\begin{theorem}
No EPIC-like metric is sound.
\end{theorem}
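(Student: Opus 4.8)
The plan is to show, for an arbitrary EPIC-like metric $d$ and any candidate constant $U$, that the soundness inequality of Definition~\ref{def:soundness} can be violated, by exploiting the fact that an EPIC-like canonicalisation standardises potential shaping but \emph{not} $S'$-redistribution. The conceptual root of the failure is that the analogue of Proposition~\ref{prop:J_norm} breaks: on $\mathrm{Im}(c)$ for an EPIC-like $c$, the quantity $\max_\pi J(\pi) - \min_\pi J(\pi)$ is only a seminorm, not a norm. Concretely, in any non-degenerate environment (for example whenever $|\States| \geq 2$, where rewards supported only on transitions that are impossible under $\tau$ already supply $S'$-redistribution freedom) there is a pure $S'$-redistribution reward $w_0$ of the zero reward that is not itself potential shaping, so that $w := c(w_0)$ is a \emph{nonzero} element of $\mathrm{Im}(c)$. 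Since $w_0$ is an $S'$-redistribution of $0$ we have $J_{w_0} \equiv 0$, and therefore by Lemma~\ref{lemma:shift_lemma} the evaluation $J_w$ is a \emph{constant} function; in particular $\max_\pi J_w(\pi) - \min_\pi J_w(\pi) = 0$ while $n(w) > 0$.

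Next I would fix any reward $u$ with $\Delta := \max_\pi J_u(\pi) - \min_\pi J_u(\pi) > 0$ (again guaranteed in a non-degenerate environment), and let $\pi^+$ and $\pi^-$ attain the maximum and minimum of $J_u$. For a scale parameter $t > 0$ set $R_1 = t\,w + u$ and $R_2 = t\,w - u$. Because $J_w$ is constant, the term $t\,J_w$ cancels in every difference of evaluations, so $\max_\pi J_1(\pi) - \min_\pi J_1(\pi) = \Delta$, and taking $\pi_1 = \pi^+$, $\pi_2 = \pi^-$ gives both $J_2(\pi_2) - J_2(\pi_1) = \Delta \geq 0$ (so the hypothesis $J_2(\pi_2) \geq J_2(\pi_1)$ holds) and $J_1(\pi_1) - J_1(\pi_2) = \Delta$. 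Hence the \emph{normalised} regret $\bigl(J_1(\pi_1) - J_1(\pi_2)\bigr)/\bigl(\max_\pi J_1(\pi) - \min_\pi J_1(\pi)\bigr)$ equals $1$ for \emph{every} value of $t$.

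I would then show $d(R_1, R_2) \to 0$ as $t \to \infty$. Writing $u' = c(u)$ and using linearity together with $c(w) = w$ (canonicalisation functions are idempotent on their image), we get $c(R_1) = t\,w + u'$ and $c(R_2) = t\,w - u'$. Dividing through by $t$ and using $n(w) > 0$ and the continuity of $n$, both normalised representatives converge to the common direction: $s(R_1) = (w + u'/t)/n(w + u'/t) \to w/n(w)$, and likewise $s(R_2) \to w/n(w)$. Since $m$ is bilipschitz equivalent to a norm it is continuous and satisfies $m(x,x) = 0$, so $d(R_1, R_2) = m(s(R_1), s(R_2)) \to 0$.

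Finally I would combine these facts: soundness would require a single constant $U$ with $J_1(\pi_1) - J_1(\pi_2) \leq U \cdot (\max_\pi J_1(\pi) - \min_\pi J_1(\pi)) \cdot d(R_1, R_2)$ for all $t$, which after dividing by $\Delta > 0$ reads $1 \leq U \cdot d(R_1, R_2)$. But $d(R_1, R_2) \to 0$, so for $t$ large enough $U \cdot d(R_1, R_2) < 1$, a contradiction; since $d$ was an arbitrary EPIC-like metric, none is sound. The main obstacle is the first step, namely verifying that a nonzero $w \in \mathrm{Im}(c)$ with constant $J_w$ genuinely exists: this is exactly where the failure to standardise $S'$-redistribution enters, and it is also what forces the result to be read relative to the standing non-degeneracy assumptions on $\States$, $\Actions$, and $\tau$.
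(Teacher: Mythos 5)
Your proof is correct and follows essentially the same route as the paper's: both arguments exploit the fact that an EPIC-like canonicalisation leaves a nonzero pure-$S'$-redistribution direction in $\mathrm{Im}(c)$ (on which $\max_\pi J - \min_\pi J$ vanishes but $n$ does not), and build a family of reward pairs with normalised regret exactly $1$ whose standardised representatives converge to a common point. The only difference is cosmetic parametrisation --- the paper shrinks the informative component by $\epsilon$ and renormalises along the redistribution direction $x$, while you blow up the redistribution component by $t$ and let $s$ absorb the scale --- and your reliance on a non-degeneracy condition for the existence of $w_0$ matches the paper's implicit requirement that $\mathcal{X} \subsetneq \mathrm{Im}(c)$.
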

\begin{proof}
Consider an arbitrary transition function $\tau$ and an arbitrary initial state distribution $\InitStateDistribution$, and let $d$ be an EPIC-like metric with canonicalisation function $c$, normalisation function $n$, and admissible metric $m$.

Let $\mathcal{X}$ be a linear subspace of $\mathrm{Im}(c)$, such that there, for any reward function $R \in \mathrm{Im}(c)$, is exactly one reward function $R' \in \mathcal{X}$ such that $R$ and $R'$ differ by $S'$-redistribution under $\tau$. 

Let $R_1$ be an arbitrary reward function in $\mathcal{X}$ such that $n(R_1) = 1$, and let $R_2 = -R_1$. Note that $R_1$ and the reward function that is $0$ everywhere do not differ by potential shaping and $S'$-redistribution -- this is ensured by the fact that they are distinct, and both included in $\mathcal{X}$. As per Proposition~\ref{prop:policy_order}, this implies that $R_1$ does not have the same policy order as the reward function that is $0$ everywhere. This, in turn, means that $\max_\pi J_1(\pi) - \min_\pi J_1(\pi) > 0$. Moreover, since $R_2 = -R_1$, this implies that, if $\pi_1$ is a policy that is optimal under $R_1$, and $\pi_2$ is a policy that is optimal under $R_2$, then $\pi_2$ is maximally bad under $R_1$, and $\pi_1$ is maximally bad under $R_2$. In other words, there are policies $\pi_1, \pi_2$ such that $J_2(\pi_2) \geq J_2(\pi_1)$, and 
$$
\frac{J_1(\pi_1) - J_1(\pi_2)}{\max_\pi J_1(\pi) - \min_\pi J_1(\pi)} = 1.
$$
This is the greatest value for this expression, and so the regret for $R_1$ and $R_2$ is maximally high.

Next, let $R_1' = \epsilon \cdot R_1$, and $R_2' = \epsilon \cdot R_2$, for some small positive value $\epsilon$. Since positive linear scaling does not affect the regret, we have that the regret for $R_1'$ and $R_2'$ also is 1.

Let $x$ be a vector in $\mathrm{Im}(c)$ that is orthogonal to $\mathcal{X}$. Note that movement along $x$ corresponds to $S'$-redistribution under $\tau$. Next, let $R_1'' = R_1' + \alpha \cdot x$ and $R_2'' = R_2' + \beta \cdot x$, where $\alpha$ and $\beta$ are two positive constants such that $n(R_1'') = 1$ and $n(R_2'') = 1$. Since movement along $x$ corresponds to $S'$-redistribution under $\tau$, and since $S'$-redistribution under $\tau$ does not affect regret, we have that the regret for $R_1''$ and $R_2''$ is 1.

\begin{figure}[H]
    \centering
    \includegraphics[width=2\textwidth/3]{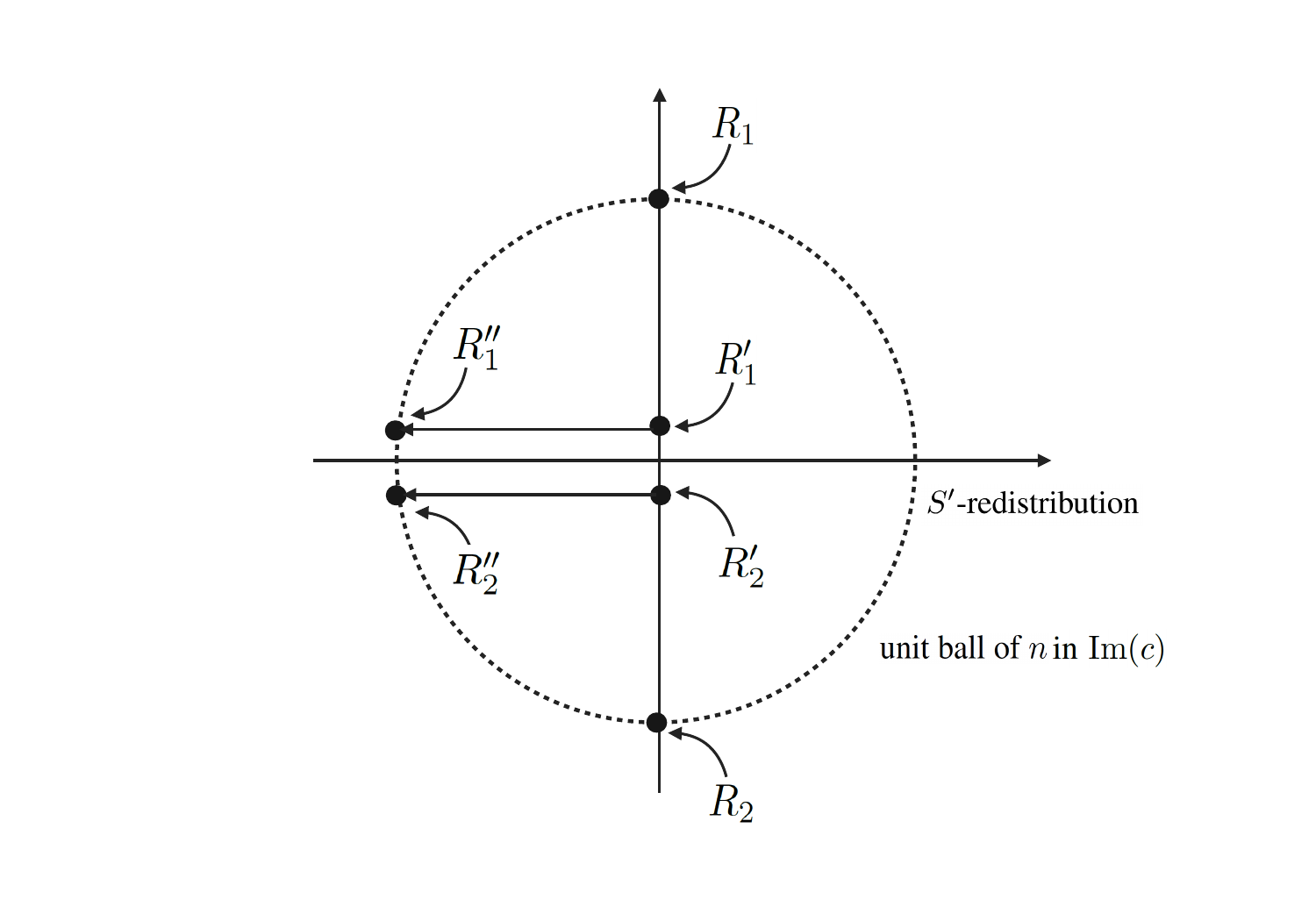}
\end{figure}

Now, since $R_1''$ and $R_2''$ are in $\mathrm{Im}(c)$, and since $n(R_1'') = 1$ and $n(R_2'') = 1$, we have that $d(R_1'', R_2'') = m(R_1'', R_2'')$. By making $\epsilon$ sufficiently small, we can ensure that this value is arbitrarily close to 0. Therefore, for any simple STARC metric $d$ and any environment, there are reward functions such that $R_1''$ and $R_2''$ have maximally high regret, but $d(R_1'', R_2'')$ is arbitrarily close to 0.
\end{proof}

\begin{theorem}
There exist reward functions $R_1$, $R_2$ such that $d(R_1, R_2) > 0$ for any EPIC-like metric $d$, but where $R_1$ and $R_2$ induce the same ordering of policies for any choice of transition function and any choice of initial state distribution.
\end{theorem}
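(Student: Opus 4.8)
The plan is to separate the two requirements into clean algebraic conditions and then exhibit an explicit witness. For the distance condition, I would first observe that for any EPIC-like metric $d(R_1,R_2)=m(s(R_1),s(R_2))$ with $m$ admissible, we have $d(R_1,R_2)=0$ iff $s(R_1)=s(R_2)$. Since $s(R)=c(R)/n(c(R))$, $c$ is linear, and $c(R_1)=c(R_2)$ exactly when $R_1,R_2$ differ by potential shaping, this holds iff $c(R_1)$ and $c(R_2)$ are positive scalar multiples of one another, i.e. iff $R_1$ and $R_2$ differ by potential shaping together with positive scaling. Hence ``$d(R_1,R_2)>0$ for every EPIC-like metric'' is the single condition that $R_1$ and $R_2$ are \emph{not} related by potential shaping and positive scaling. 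The task thus reduces to constructing $R_1,R_2$ that (i) induce the same policy ordering for all $\tau,\mu_0$, yet (ii) are not related by potential shaping and positive scaling.

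The key tool for (i) is that $J(\pi)$ depends on $R$ only through the $\tau$-averaged reward $\bar R^\tau(s,a):=\mathbb{E}_{s'\sim\tau(s,a)}[R(s,a,s')]$, since $J(\pi)=\sum_{s,a}\eta^\pi(s,a)\,\bar R^\tau(s,a)$ where $\eta^\pi$ is the discounted state--action occupancy. Consequently, if in every environment $\bar R_2^\tau=\lambda_\tau\,\bar R_1^\tau+P_\tau$, where $\lambda_\tau>0$ and $P_\tau(s,a)=\gamma\,\mathbb{E}_{s'\sim\tau(s,a)}[\Phi(s')]-\Phi(s)$ is the state--action form of a potential-shaping reward, then $J_2(\pi)=\lambda_\tau J_1(\pi)+c_\tau$ for a policy-independent constant $c_\tau$ (the shaping part telescopes, by Lemma~\ref{lemma:shift_lemma}), so $R_1$ and $R_2$ share the same ordering in that environment. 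The whole point is that $\lambda_\tau$ is allowed to depend on $\tau$: this is what lets the two rewards carry genuine $S'$-redistribution content while remaining order-equivalent everywhere, even though requiring $S'$-redistribution to hold for all $\tau$ at once would force $R_1=R_2$.

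I would then give the explicit witness for any $\States$, $\Actions$ with $|\States|\ge2$, $|\Actions|\ge2$. Fix a pair $(s^\ast,a^\ast)$ and a potential $\Phi$. Let $R_1$ reward exactly $1$ on every transition out of $(s^\ast,a^\ast)$ and $0$ elsewhere, so $\bar R_1^\tau$ is the indicator of $(s^\ast,a^\ast)$ for every $\tau$ and is robustly nonzero there. Let $R_2$ equal $\gamma\Phi(s')-\Phi(s)$ on every pair other than $(s^\ast,a^\ast)$, and on $(s^\ast,a^\ast)$ take successor-dependent values whose difference from $\gamma\Phi$ is a nonconstant but strictly dominating vector. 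A direct computation then gives $\bar R_2^\tau=\lambda_\tau\bar R_1^\tau+P_\tau$ with $P_\tau$ the shaping of $\Phi$ and $\lambda_\tau=\langle\tau(s^\ast,a^\ast),\,R_2(s^\ast,a^\ast,\cdot)-\gamma\Phi\rangle+\Phi(s^\ast)$, which is strictly positive (dominance) and genuinely varies with $\tau$ (nonconstancy), establishing (i). For (ii), if $R_2=\lambda R_1+\gamma\Phi'(s')-\Phi'(s)$, matching on any pair other than $(s^\ast,a^\ast)$ gives $\gamma(\Phi-\Phi')(s')=(\Phi-\Phi')(s)$ for all $s'$, forcing $\Phi'=\Phi$; matching on $(s^\ast,a^\ast)$ then forces $R_2(s^\ast,a^\ast,\cdot)-\gamma\Phi$ to be constant in $s'$, contradicting its choice. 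Finally, since both rewards are action-dependent, neither is potential shaping, so $c(R_i)\ne0$ and the unit vectors $s(R_i)$ are well-defined and distinct, whence $d(R_1,R_2)>0$.

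The main obstacle is requirement (i): showing that a pair differing by $S'$-redistribution-type content can still be order-equivalent for every $\tau$ and $\mu_0$. The naive expectation is that this is impossible, and it would be if one insisted on a single fixed decomposition; the crux is recognising that the positive factor relating the two evaluation functions may vary with $\tau$, which is precisely the slack that an EPIC-like canonicalisation (removing potential shaping but not $S'$-redistribution) fails to detect. Verifying this cleanly hinges on the reduction to $\bar R^\tau$, which collapses the quantification over all trajectories down to the finite state--action level.
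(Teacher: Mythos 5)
Your proposal is correct and follows essentially the same route as the paper: both exhibit an explicit pair of rewards supported (modulo shaping) on a single state--action pair, so that the $\tau$-averaged rewards become positive, $\tau$-dependent multiples of one another for every transition function, while the pair is not related by potential shaping and positive scaling. The paper's witness is simply the special case $\Phi=0$ with $R_1,R_2$ taking values $(1,\epsilon)$ and $(\epsilon,1)$ on two successors of $(s_1,a_1)$; your version is somewhat more carefully argued (you make $\lambda_\tau$ strictly positive for every $\tau$, avoiding the degenerate case where both averaged rewards vanish, and you spell out why failure of shaping-plus-scaling equivalence forces $d>0$), but the underlying idea is identical.
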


\begin{proof}
Recall that $\States$ must contain at least two states $s_1, s_2$, and $\Actions$ must contain at least two actions $a_1, a_2$. Let $R_1(s_1,a_1,s_1) = 1$, $R_1(s_1,a_1,s_2) = \epsilon$, $R_2(s_1,a_1,s_1) = \epsilon$, and $R_2(s_1,a_1,s_2) = 1$, and let $R_1$ and $R_2$ be 0 for all other transitions. $R_1$ and $R_2$ do not differ by potential shaping or positive linear scaling; this means that $d(R_1, R_2) > 0$ for any EPIC-like metric $d$. However, $R_1$ and $R_2$ have the same policy ordering for all $\tau$ and $\mu_0$.
\end{proof}

\subsection{Other Proofs}

In this section, we provide a the remaining proofs of the results mentioned in the main text.

\begin{proposition}
Any STARC metric is a pseudometric on $\R$.
\end{proposition}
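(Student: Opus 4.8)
The plan is to observe that a STARC metric $d$ is simply the pullback of the admissible metric $m$ along the standardisation map $s$, and that such a pullback is always a pseudometric regardless of the properties of $s$. Concretely, by Definition~\ref{def:reward_metric} we have $d(R_1, R_2) = m(s(R_1), s(R_2))$, where $s : \R \to \mathrm{Im}(s)$ is a fixed function and $m$ is a genuine metric on $\mathrm{Im}(s)$ (every admissible metric is in particular a metric). The strategy is therefore to verify the four pseudometric axioms one at a time, in each case reducing the claim about $d$ to the corresponding metric axiom for $m$ applied to the points $s(R_1), s(R_2), s(R_3) \in \mathrm{Im}(s)$.

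For the first three axioms this is immediate: $d(R, R) = m(s(R), s(R)) = 0$ since $m$ vanishes on the diagonal; $d(R_1, R_2) = m(s(R_1), s(R_2)) \geq 0$ since $m$ is non-negative; and $d(R_1, R_2) = m(s(R_1), s(R_2)) = m(s(R_2), s(R_1)) = d(R_2, R_1)$ by the symmetry of $m$. For the triangle inequality, I would apply the triangle inequality of $m$ to the three image points, giving
$$
d(R_1, R_3) = m(s(R_1), s(R_3)) \leq m(s(R_1), s(R_2)) + m(s(R_2), s(R_3)) = d(R_1, R_2) + d(R_2, R_3).
$$
No property of $s$ beyond being a well-defined function is used anywhere in these four verifications.

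The one point that deserves comment — and the reason the statement asks only for a pseudometric rather than a metric — is that $s$ need not be injective. Distinct reward functions may be identified by $s$ (precisely those differing by potential shaping, $S'$-redistribution, and positive scaling), so $d(R_1, R_2) = 0$ does not force $R_1 = R_2$; it only forces $s(R_1) = s(R_2)$. Hence positive-definiteness fails in general, which is exactly why $d$ is a pseudometric and not a metric. I do not anticipate any genuine obstacle here: the only care needed is to confirm that $s$ is well-defined on all of $\R$ (including the branch where $n(c(R)) = 0$, in which case $s(R) = c(R)$) and that $m$ is defined on the entirety of $\mathrm{Im}(s)$, both of which follow directly from Definition~\ref{def:reward_metric}.
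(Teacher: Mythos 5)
Your proof is correct and takes essentially the same route as the paper: both verify the pseudometric axioms for $d$ by pulling back the corresponding axioms of the admissible metric $m$ along the standardisation map $s$, with the triangle inequality for $d$ following from that of $m$ applied to $s(R_1), s(R_2), s(R_3)$. Your additional remark on why positive-definiteness fails (non-injectivity of $s$) is a nice clarification but not needed for the statement.
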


\begin{proof}
To show that $d$ is a pseudometric, we must show that
\begin{enumerate}
    \item $d(R,R) = 0$
    \item $d(R_1, R_2) = d(R_2, R_1)$
    \item $d(R_1, R_3) \leq d(R_1, R_2) + d(R_2, R_3)$
\end{enumerate}
1 follows from the fact that $m$ is a metric, and 2 follows directly from the fact that the definition of STARC metrics is symmetric in $R_1$ and $R_2$. For 3, the fact that $m$ is a metric again implies that $d(R_1, R_3) = m(s(R_1), s(R_3)) \leq m(s(R_1), s(R_2)) + m(s(R_2), s(R_3)) = d(R_1, R_2) + d(R_2, R_3)$. This completes the proof.
\end{proof}

\begin{proposition}
All STARC metrics have the property that $d(R_1, R_2) = 0$ if and only if $R_1$ and $R_2$ induce the same ordering of policies.
\end{proposition}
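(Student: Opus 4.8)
The plan is to reduce the whole statement to Proposition~\ref{prop:policy_order} using the three-step structure defining a STARC metric. First I would observe that the final distance step uses an admissible metric $m$, which by Definition~\ref{def:admissible_metric} satisfies $\ell \cdot p(x,y) \le m(x,y) \le u \cdot p(x,y)$ for some norm $p$ and positive $\ell$. Hence $m$ inherits the identity of indiscernibles from $p$: since $p(x,y) = 0 \iff x = y$ and $\ell > 0$, we get $m(x,y) = 0 \iff x = y$. Therefore $d(R_1,R_2) = 0$ if and only if $s(R_1) = s(R_2)$, and the proposition reduces to showing that $s(R_1) = s(R_2)$ holds exactly when $R_1$ and $R_2$ induce the same ordering of policies. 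Throughout I would write $R \approx R'$ for \enquote{$R$ and $R'$ have the same policy order}, which is an equivalence relation, and freely invoke Proposition~\ref{prop:policy_order}, which says $R \approx R'$ iff they differ by potential shaping, positive linear scaling, and $S'$-redistribution.

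For the reverse direction I would start from $R_1 \approx R_2$. By Proposition~\ref{prop:policy_order}, $R_2 = \lambda R_1 + \Delta$ where $\lambda > 0$ and $\Delta$ is a combination of potential shaping and $S'$-redistribution of the zero reward; thus $R_2$ and $\lambda R_1$ differ only by potential shaping and $S'$-redistribution. The defining property of a canonicalisation function then gives $c(R_2) = c(\lambda R_1)$, and linearity of $c$ gives $c(R_2) = \lambda c(R_1)$. I would then split on whether $c(R_1) = 0$: if so, then $c(R_2) = 0$ as well and $s(R_1) = s(R_2) = 0$; otherwise $n(c(R_1)) \neq 0$, and absolute homogeneity of $n$ with $\lambda > 0$ yields $n(c(R_2)) = \lambda n(c(R_1)) \neq 0$, so the normalisation cancels the factor $\lambda$ and $s(R_2) = c(R_2)/n(c(R_2)) = c(R_1)/n(c(R_1)) = s(R_1)$. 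Either way $d(R_1,R_2) = 0$.

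For the forward direction I would argue by a chain of $\approx$-equivalences. Suppose $s(R_1) = s(R_2)$. If both equal $0$, then $c(R_1) = c(R_2) = 0$, so the canonicalisation property forces $R_1$ and $R_2$ to differ by potential shaping and $S'$-redistribution, whence $R_1 \approx R_2$. Otherwise $s(R_1) = s(R_2) = v \neq 0$, which forces $c(R_1) = \lambda c(R_2)$ for the positive scalar $\lambda = n(c(R_1))/n(c(R_2))$. I then chain $R_1 \approx c(R_1)$ (they differ by potential shaping and $S'$-redistribution by definition of $c$), $c(R_1) = \lambda c(R_2) \approx c(R_2)$ (positive scaling), and $c(R_2) \approx R_2$, concluding $R_1 \approx R_2$ by transitivity.

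The main obstacle I anticipate is bookkeeping rather than conceptual. I must confirm that scaling a potential-shaping-plus-$S'$-redistribution difference by a positive constant stays of the same type, so that the transformations compose within the class named in Proposition~\ref{prop:policy_order} and the chain of $\approx$-links is legitimate. I must also treat the degenerate branch $c(R) = 0$ (equivalently $s(R) = 0$) separately in both directions, since there $s$ is defined without a division. Everything else follows mechanically from linearity of $c$, absolute homogeneity of $n$, and the identity-of-indiscernibles property of the admissible metric $m$.
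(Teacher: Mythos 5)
Your proof is correct and takes essentially the same route as the paper's: reduce $d(R_1,R_2)=0$ to $s(R_1)=s(R_2)$ via the admissibility of $m$, then identify $s(R_1)=s(R_2)$ with \enquote{differ by potential shaping, $S'$-redistribution, and positive linear scaling} and invoke Proposition~\ref{prop:policy_order}. The normal-form fact you flag as remaining bookkeeping (that a positive scaling of a shaping-plus-redistribution difference stays in the same class, so the transformations compose) is exactly the one fact the paper's own one-line proof also asserts without further argument, so you are on equal footing there.
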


\begin{proof}
This is immediate from Proposition~\ref{prop:policy_order}, together with the fact that if $R_1$ and $R_2$ differ by potential shaping, $S'$-redistribution, and positive linear scaling, applied in any order, then $R_2 = \alpha \cdot R_3$ for some scalar $\alpha$ and some $R_3$ that differs from $R_1$ via potential shaping and $S'$-redistribution.
\end{proof}

\begin{proposition}
If two pseudometrics $d_1$, $d_2$ on $\R$ are both sound and complete, then $d_1$ and $d_2$ are bilipschitz equivalent.
\end{proposition}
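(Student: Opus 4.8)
The plan is to route everything through a single, metric-independent quantity: the normalized worst-case regret. For any ordered pair $(R_1, R_2)$ with $\max_\pi J_1(\pi) - \min_\pi J_1(\pi) > 0$, define
$$
W(R_1, R_2) = \frac{\sup\{\, J_1(\pi_1) - J_1(\pi_2) : J_2(\pi_2) \geq J_2(\pi_1) \,\}}{\max_\pi J_1(\pi) - \min_\pi J_1(\pi)}.
$$
The supremum is attained because $\States$ and $\Actions$ are finite, so the policy space is compact and $J$ is continuous; taking $\pi_1 = \pi_2$ shows $W \geq 0$. The key observation is that soundness and completeness together pin $W$ to $d$ up to constants: soundness, applied to every admissible pair $(\pi_1,\pi_2)$ and then passing to the supremum, yields $W(R_1,R_2) \leq U_d \cdot d(R_1,R_2)$, while completeness produces a single witnessing pair, giving $W(R_1,R_2) \geq L_d \cdot d(R_1,R_2)$. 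Hence for any sound-and-complete $d$ with constants $U_d, L_d$ we obtain $L_d \cdot d(R_1,R_2) \leq W(R_1,R_2) \leq U_d \cdot d(R_1,R_2)$.

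Applying this to both $d_1$ (constants $U_1,L_1$) and $d_2$ (constants $U_2,L_2$) and eliminating the common $W$ gives, whenever $R_1$ is non-trivial,
$$
\frac{L_1}{U_2}\, d_1(R_1,R_2) \leq d_2(R_1,R_2) \leq \frac{U_1}{L_2}\, d_1(R_1,R_2),
$$
by chaining $L_2\, d_2 \leq W \leq U_1\, d_1$ for the upper bound and $L_1\, d_1 \leq W \leq U_2\, d_2$ for the lower bound. This is exactly bilipschitz equivalence with $\ell = L_1/U_2$ and $u = U_1/L_2$, both positive since all four constants are positive.

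It remains to cover the pairs where $W$ is undefined, i.e.\ where $\max_\pi J_1(\pi) - \min_\pi J_1(\pi) = 0$. If $R_2$ is non-trivial, I would swap the roles of $R_1$ and $R_2$: since $d_1$ and $d_2$ are symmetric, $W(R_2,R_1)$ is well-defined and the same elimination yields the identical inequalities for $d_i(R_1,R_2) = d_i(R_2,R_1)$ with the same constants. The only genuinely separate case is when both $R_1$ and $R_2$ are trivial; then both induce the (same) trivial policy order, so the final clause in the definition of completeness forces $d_1(R_1,R_2) = d_2(R_1,R_2) = 0$, and the bilipschitz inequalities hold with both sides equal to zero.

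I expect the main obstacle to be precisely the bookkeeping around these degenerate reward functions: the quantity $W$ is \emph{asymmetric} (it normalizes by $R_1$ but conditions on $R_2$), so one must verify that the symmetry of the pseudometrics can be leveraged to reduce every pair with one non-trivial reward to the case handled above, and that the boundary $\max_\pi J_1(\pi) - \min_\pi J_1(\pi) = 0$ is exactly what the extra completeness clause was designed to patch. Once the case analysis is organized as above, the core argument is just the two-sided elimination of $W$.
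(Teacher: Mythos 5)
Your proof is correct and is essentially the same argument as the paper's: both sandwich the normalized worst-case regret between $L_d \cdot d$ (from completeness) and $U_d \cdot d$ (from soundness) for each metric and then eliminate that common quantity, and both handle the degenerate rewards by the same case split (symmetry of the pseudometrics when only $R_1$ is trivial, and the final clause of completeness when both are). Your explicit introduction of the quantity $W$ and the remark that the supremum is attained are cosmetic differences only.
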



\begin{proof}
Since $d_1$ is complete, we have that
$$
    L_1 \cdot d_1(R_1, R_2) \cdot (\max_\pi J_1(\pi) - \min_\pi J_1(\pi)) \leq \max_{\pi_1,\pi_2 : J_2(\pi_2) \geq J_2(\pi_1)} J_1(\pi_1) - J_1(\pi_2).
$$
Similarly, since $d_2$ is sound, we also have that
$$
\max_{\pi_1,\pi_2 : J_2(\pi_2) \geq J_2(\pi_1)} J_1(\pi_1) - J_1(\pi_2) \leq U_2 \cdot d_2(R_1, R_2) \cdot (\max_\pi J_1(\pi) - \min_\pi J_1(\pi)).
$$
This implies that
$$
    L_1 \cdot d_1(R_1, R_2) \cdot (\max_\pi J_1(\pi) - \min_\pi J_1(\pi)) \leq U_2 \cdot d_2(R_1, R_2) \cdot (\max_\pi J_1(\pi) - \min_\pi J_1(\pi)).
$$
First suppose that $(\max_\pi J_1(\pi) - \min_\pi J_1(\pi)) > 0$. We can then divide both sides, and obtain that
$$
    d_1(R_1, R_2) \leq \left(\frac{U_2}{L_1}\right) d_2(R_1, R_2).
$$
Similarly, we also have that 
$$
    \left(\frac{L_2}{U_1}\right) d_2(R_1, R_2) \leq d_1(R_1, R_2).
$$
This means that we have constants $\left(\frac{U_2}{L_1}\right)$ and $\left(\frac{L_2}{U_1}\right)$ not depending on $R_1$ or $R_2$, such that
$$
\left(\frac{L_2}{U_1}\right) d_2(R_1, R_2) \leq d_1(R_1, R_2) \leq \left(\frac{U_2}{L_1}\right) d_2(R_1, R_2)
$$
for all $R_1$ and $R_2$ such that $(\max_\pi J_1(\pi) - \min_\pi J_1(\pi)) > 0$.

Next, assume $(\max_\pi J_1(\pi) - \min_\pi J_1(\pi)) = 0$ but $(\max_\pi J_2(\pi) - \min_\pi J_2(\pi)) > 0$. Since $d_1$ and $d_2$ are pseudometrics, we have that $d_1(R_1, R_2) = d_1(R_2, R_1)$ and $d_2(R_1, R_2) = d_2(R_2, R_1)$. Therefore, $\left(\frac{L_2}{U_1}\right) d_2(R_1, R_2) \leq d_1(R_1, R_2) \leq \left(\frac{U_2}{L_1}\right) d_2(R_1, R_2)$ in this case as well.

Finally, assume that $(\max_\pi J_1(\pi) - \min_\pi J_1(\pi)) = 0$ and $(\max_\pi J_2(\pi) - \min_\pi J_2(\pi)) = 0$. In this case, $R_1$ and $R_2$ induce the same policy order (namely, the order where $\pi_1 \equiv \pi_2$ for all $\pi_1, \pi_2$). This in turn means that $d_1(R_1, R_2) = d_2(R_1, R_2) = 0$, and so $\left(\frac{L_2}{U_1}\right) d_2(R_1, R_2) \leq d_1(R_1, R_2) \leq \left(\frac{U_2}{L_1}\right) d_2(R_1, R_2)$ in this case as well. This completes the proof.
\end{proof}

\section{Experimental Setup of Small MDPs}\label{appendix:experimental_mdps}

In this appendix, we give the precise details required to reproduce our experimental results, as well as more of the raw data than what is provided in the main text.

\subsection{Environments and Rewards}
As mentioned in the main text, we used Markov Decision Processes with 32 states and 4 actions. The discount factor was set to $0.95$ and the initial state distribution was uniform.

The transition distribution $\tau(s,a,s')$ was generated as follows:
\begin{enumerate}
    \item Sample i.i.d. Gaussians ($\mu=0,\sigma=1$) to generate a matrix of shape $[32, 4, 32]$.
    \item For each item in the matrix: if the item is below 1, set its value to -20. This is done to ensure the transition distribution is sparse and therefore more similar to real-world environments. Without this step, when an agent is in state $S$ and takes action $A$, the distribution $\tau(S,A,s')$ would be close to uniform, meaning that the choice of the action would not make much of a difference.
    \item Softmax along the last dimension (which corresponds to $s'$) to get a valid probability distribution.
\end{enumerate}

We then generated pairs of rewards.
This worked in two stages: random generation and interpolation.

In the random generation stage, we choose two random rewards $R_1,R_2$ using the following procedure:
\begin{enumerate}
    \item Sample i.i.d. Gaussians ($\mu=0,\sigma=1$) to generate a matrix of shape $[32, 4, 32]$ corresponding to $R(s,a,s')$.
    \item With a 20\% probability, make the function sparse in the following way -- for each item in the matrix: if the item is below 3, set its value to 0.
    \item With a 70\% probability, scale the reward function in the following way -- sample a uniform distribution between 0 and 10, multiply the matrix by this number.
    \item With a 30\% probability, translate the reward function in the following way -- sample a uniform distribution between 0 and 10, add this number to the matrix.
    \item With a 50\% probability, apply random potential shaping in the following way -- sample 32 i.i.d. Gaussians ($\mu=0,\sigma=1$) to get a potential vector $\Phi$.
    Then sample a uniform distribution between 0 and 10 and multiply the vector by this number.
    Then sample a uniform distribution between 0 and 1 and add this number to the vector.
    Then apply potential shaping to the reward function: $R_\text{new}(s,a,s')=R(s,a,s')+\gamma\Phi(s')-\Phi(s)$.
\end{enumerate}
When we say "With an X\% probability", we are sampling a random number from a uniform distribution between 0 and 1 and if the number is above (100-X)\%, we perform the action described, otherwise we skip the step.

Then in the interpolation stage, we take the pair of reward functions generated above and do a linear interpolation between them, finding 16 functions which lie between $R_1$ and $R_2$.
More precisely, we set $R_{(i)}=R_1+id$ where $d=(R_2-R_1)/16$ and with $i$ ranging from 1 to 16.

The interpolation step exists to give us pairs of rewards which are relatively close to each other -- for instance, $R_1$ and $R_{(1)}$ are very similar.
This is important because nearly all reward functions generated with the random generation process described above will be orthogonal to each other, and thus their distances to each other would always be quite large.
By including this interpolation step, we ensure a greater variety in the range of distance values and regret values we expect to see.

For each environment, we generated 16 pairs of reward functions, and then for each pair we performed 16 interpolation steps. This means that for each transition distribution, we compared 256 different reward functions.

We then compute all distance metrics as well as rollout regret between $R_1$ and $R_{(i)}$ for all $i$.

\subsection{Rollout Regret}\label{appendix:rollout}
We calculate rollout regret in 3 stages: 1, find optimal and anti-optimal policies, 2, compute returns under various policies and reward functions, 3, calculate regret.

In stage 1, we use value iteration to find policies $\pi_1$ which maximises reward $R_1$, $\pi_{(i)}$ which maximises reward $R_{(i)}$, $\pi_x$ which minimises reward $R_1$ (in other words maximising reward $-R_1$, meaning $\pi_x$ is the worst possible policy under $R_1$), and $\pi_y$ which minimises reward $R_2$.
Note that all of these policies are deterministic, i.e. $\pi:\States\to\Actions$. 

In stage 2, we simulate a number of episodes to determine the average return of the policy.
Specifically, we simulate 32 episodes such that no two episodes start in the same initial state (this helps reduce noise in the return estimates).
The episode terminates when the discount factor being applied (i.e. $\gamma^t$) is below $10^{-5}$.

In stage 3, we calculate regret as follows.
The regret is the average of two regrets: the regret of using $\pi_{(i)}$ instead of $\pi_1$ when evaluating using $R_1$, and the regret of using $\pi_1$ instead of $\pi_{(i)}$ when evaluating using $R_{(i)}$ (with both of these being normalised by the range of possible returns):
\begin{align*}
    \text{Reg}&=\frac{\text{Reg}_1+\text{Reg}_{(i)}}{2}\\
    \text{Reg}_1&=\frac{J_1(\pi_1)-J_1(\pi_{(i)})}{J_1(\pi_1)-J_1(\pi_x)}\\
    \text{Reg}_{(i)}&=\frac{J_{(i)}(\pi_{(i)})-J_{(i)}(\pi_1)}{J_{(i)}(\pi_{(i)})-J_{(i)}(\pi_y)}
\end{align*}
In cases where the denominator is zero, we simply replace it with 1 (since the numerator in these cases is also necessarily 0).

\subsection{List of Metrics}\label{appendix:exp_list_of_metrics}

Our experiment covers hundreds of metrics, derived by creating different combinations of canonicalisation functions, normalisations, and distance metrics.
Specifically, we used 6 \enquote{pseudo-canonicalisations} (some of which, like $C^\mathrm{EPIC}$ and $C^\mathrm{DARD}$, do not meet the conditions of Definition~\ref{def:canonicalisation_function}), 7 normalisation functions, and 6 distance norms.

For canonicalisations, we used \verb|None| (which simply skips the canonicalisation step), $C^\mathrm{EPIC}$, $C^\mathrm{DARD}$, \verb|MinimalPotential| (which is the minimal \enquote{pseudo-canonicalisation} that removes potential shaping but not $S'$-redistribution, and therefore is easier to compute), \verb|VALPotential| (which is given by $R(s,a,s') - V^\pi(s) + \gamma V^\pi(s')$), and \verb|VAL| (defined in Proposition \ref{proposition:value_canon} as $\mathbb{E}_{S'\sim\tau(s,a)}[R(s,a,S') - V^\pi(s) + \gamma V^\pi(S')]$). For both $C^\mathrm{EPIC}$ and $C^\mathrm{DARD}$, both $\mathcal{D}_\mathcal{S}$ and $\mathcal{D}_\mathcal{A}$ were chosen to be uniform over $\mathcal{S}$ and $\mathcal{A}$. For both \verb|VALPotential| and \verb|VAL|, $\pi$ was chosen to be the uniformly random policy.
\footnote{Since $\texttt{VAL}$ is a valid canonicalisation function for any choice of policy $\pi$, we simply picked a policy for which $V^\pi$ would be easy to estimate. The reason for choosing a uniformly random policy, rather than some deterministic policy, is that this policy has exploration build in.}
Note that \verb|VAL| is the only canonicalisation which removes both potential shaping and $S'$-redistribution, and thus the only one that meets the STARC definition of a canonicalisation function (Definition~\ref{def:canonicalisation_function}). The other pseudo-canonicalisations were used for comparison. 
It is worth noting that our experiment does \emph{not} include the minimal canonicalisation functions, given in Definition \ref{def:minimal}, because these functions are prohibitively expensive to compute. They are therefore better suited for theoretical analysis, rather than practical evaluations.

The normalisation step and the distance step used $L_1$, $L_2$, $L_\infty$, \verb|weighted_|$L_1$, \verb|weighted_|$L_2$, and \verb|weighted_|$L_\infty$.
The weighted norms are weighted by the transition function $\tau$, i.e.\ $L_p^\tau(R)(s,a,s')=(\sum_{s,a,s'}\tau(s,a,s')|R(s,a,s')|^p)^{1/p}$.
We also considered metrics that skip the normalisation step.
 
We used almost all combinations of these -- the only exception was that we did not combine \verb|MinimalPotential| with normalisation norms $L_\infty$ or \verb|weighted_|$L_\infty$ because the optimisation algorithm for \verb|MinimalPotential| does not converge for these norms.

\subsection{Number of Reward Pairs}\label{appendix:exp_number}
We used 49,152 reward pairs. This number was chosen in advance as the stopping point -- it corresponds to using 96 CPU cores, generating 2 environments on each core, choosing 16 reward pairs within each environment and then performing 16 interpolation steps between them.

\section{Full Results Of Small MDP Experiments}\label{appendix:exp_results}
We used the Balrog GPU cluster at UC Berkeley, which consists of 8 A100 GPUs, each with 40 GB memory, along with 96 CPU cores.

The notation in this table is in the format \verb|Canonicalisation-normalisation-distance|.
For instance, \verb|VAL-2-weighted_1| means using the \verb|Val| canonicalisation function, $L_2$ normalisation function, and then taking the distance with the weighted $L_1$ norm.
\verb|0| means normalisation is skipped.

\begin{longtable}{c|c}
\caption{Full experimental results} \label{tab:full_exp_results}\\
    Distance function & Correlation to regret \\
    \hline
    \endfirsthead
        VALPotential-1-weighted\_1 & 0.876 \\
        VAL-1-weighted\_1 & 0.873 \\
        VAL-1-1 & 0.873 \\
        VAL-weighted\_1-weighted\_1 & 0.873 \\
        VAL-weighted\_1-1 & 0.873 \\
        VAL-1-2 & 0.870 \\
        VAL-weighted\_1-2 & 0.870 \\
        VAL-1-weighted\_2 & 0.870 \\
        VAL-weighted\_1-weighted\_2 & 0.870 \\
        VALPotential-weighted\_1-weighted\_1 & 0.867 \\
        DARD-1-weighted\_1 & 0.861 \\
        VAL-weighted\_2-inf & 0.858 \\
        VAL-2-inf & 0.858 \\
        VAL-weighted\_2-weighted\_2 & 0.856 \\
        VAL-2-2 & 0.856 \\
        VAL-weighted\_2-2 & 0.856 \\
        VAL-2-weighted\_2 & 0.856 \\
        VALPotential-weighted\_2-weighted\_2 & 0.845 \\
        DARD-weighted\_1-weighted\_1 & 0.835 \\
        DARD-weighted\_2-weighted\_2 & 0.831 \\
        EPIC-weighted\_1-weighted\_1 & 0.830 \\
        VALPotential-2-weighted\_2 & 0.828 \\
        DARD-2-weighted\_2 & 0.826 \\
        DARD-1-1 & 0.824 \\
        EPIC-weighted\_2-weighted\_2 & 0.823 \\
        EPIC-1-weighted\_1 & 0.819 \\
        VAL-weighted\_inf-inf & 0.816 \\
        MinimalPotential-1-1 & 0.815 \\
        MinimalPotential-2-weighted\_2 & 0.814 \\
        EPIC-2-weighted\_2 & 0.814 \\
        EPIC-1-1 & 0.814 \\
        VALPotential-weighted\_2-weighted\_inf & 0.807 \\
        EPIC-weighted\_2-weighted\_inf & 0.806 \\
        VAL-2-1 & 0.804 \\
        VAL-2-weighted\_1 & 0.804 \\
        VAL-weighted\_2-1 & 0.804 \\
        VAL-weighted\_2-weighted\_1 & 0.804 \\
        VALPotential-1-1 & 0.800 \\
        VALPotential-weighted\_2-weighted\_1 & 0.784 \\
        VAL-weighted\_2-weighted\_inf & 0.783 \\
        VAL-2-weighted\_inf & 0.783 \\
        VALPotential-2-2 & 0.782 \\
        DARD-2-2 & 0.782 \\
        MinimalPotential-2-2 & 0.778 \\
        EPIC-2-2 & 0.778 \\
        VALPotential-1-weighted\_2 & 0.776 \\
        DARD-weighted\_2-weighted\_1 & 0.774 \\
        DARD-2-weighted\_1 & 0.767 \\
        VALPotential-2-weighted\_1 & 0.767 \\
        VAL-weighted\_inf-weighted\_2 & 0.766 \\
        VAL-weighted\_inf-2 & 0.766 \\
        DARD-1-weighted\_2 & 0.761 \\
        VAL-inf-inf & 0.756 \\
        DARD-weighted\_1-weighted\_2 & 0.754 \\
        VALPotential-2-1 & 0.752 \\
        DARD-2-1 & 0.751 \\
        EPIC-weighted\_1-1 & 0.749 \\
        VAL-1-inf & 0.749 \\
        VAL-weighted\_1-inf & 0.749 \\
        MinimalPotential-2-weighted\_1 & 0.746 \\
        EPIC-2-weighted\_1 & 0.746 \\
        VAL-1-weighted\_inf & 0.741 \\
        VAL-weighted\_1-weighted\_inf & 0.741 \\
        EPIC-weighted\_2-weighted\_1 & 0.738 \\
        VALPotential-weighted\_inf-weighted\_inf & 0.735 \\
        EPIC-2-1 & 0.734 \\
        VAL-weighted\_inf-weighted\_inf & 0.734 \\
        MinimalPotential-2-1 & 0.733 \\
        EPIC-weighted\_1-weighted\_2 & 0.730 \\
        VAL-inf-weighted\_2 & 0.723 \\
        VAL-inf-2 & 0.723 \\
        VALPotential-weighted\_inf-weighted\_1 & 0.722 \\
        MinimalPotential-1-weighted\_1 & 0.718 \\
        DARD-weighted\_inf-weighted\_2 & 0.718 \\
        DARD-weighted\_inf-weighted\_1 & 0.713 \\
        DARD-weighted\_2-1 & 0.711 \\
        DARD-weighted\_2-weighted\_inf & 0.708 \\
        VAL-inf-weighted\_1 & 0.708 \\
        VAL-inf-1 & 0.708 \\
        VAL-inf-weighted\_inf & 0.707 \\
        VAL-weighted\_inf-1 & 0.707 \\
        VAL-weighted\_inf-weighted\_1 & 0.707 \\
        DARD-weighted\_inf-weighted\_inf & 0.698 \\
        VALPotential-weighted\_1-weighted\_inf & 0.692 \\
        EPIC-weighted\_inf-weighted\_2 & 0.692 \\
        EPIC-weighted\_inf-weighted\_1 & 0.686 \\
        VALPotential-1-weighted\_inf & 0.685 \\
        DARD-weighted\_inf-1 & 0.685 \\
        EPIC-weighted\_2-1 & 0.680 \\
        DARD-weighted\_1-weighted\_inf & 0.679 \\
        VALPotential-2-weighted\_inf & 0.677 \\
        DARD-2-weighted\_inf & 0.675 \\
        EPIC-weighted\_inf-weighted\_inf & 0.661 \\
        DARD-inf-weighted\_1 & 0.657 \\
        DARD-1-weighted\_inf & 0.654 \\
        VALPotential-inf-weighted\_1 & 0.653 \\
        DARD-inf-weighted\_2 & 0.652 \\
        VALPotential-inf-weighted\_2 & 0.648 \\
        EPIC-1-2 & 0.647 \\
        EPIC-weighted\_inf-1 & 0.642 \\
        DARD-inf-1 & 0.639 \\
        DARD-inf-2 & 0.637 \\
        MinimalPotential-2-weighted\_inf & 0.637 \\
        EPIC-2-weighted\_inf & 0.637 \\
        VALPotential-inf-1 & 0.636 \\
        VALPotential-inf-2 & 0.634 \\
        MinimalPotential-2-inf & 0.634 \\
        EPIC-2-inf & 0.634 \\
        None-2-weighted\_2 & 0.633 \\
        DARD-inf-weighted\_inf & 0.632 \\
        DARD-2-inf & 0.630 \\
        EPIC-inf-weighted\_2 & 0.630 \\
        EPIC-inf-weighted\_1 & 0.629 \\
        VALPotential-2-inf & 0.625 \\
        VALPotential-inf-weighted\_inf & 0.624 \\
        EPIC-1-weighted\_2 & 0.622 \\
        None-weighted\_2-weighted\_inf & 0.622 \\
        DARD-weighted\_1-1 & 0.621 \\
        EPIC-inf-1 & 0.620 \\
        None-2-2 & 0.618 \\
        EPIC-inf-2 & 0.617 \\
        None-weighted\_2-weighted\_2 & 0.615 \\
        EPIC-weighted\_1-weighted\_inf & 0.607 \\
        None-weighted\_1-weighted\_1 & 0.598 \\
        None-1-1 & 0.597 \\
        None-2-weighted\_1 & 0.579 \\
        MinimalPotential-1-2 & 0.576 \\
        None-weighted\_2-weighted\_1 & 0.573 \\
        None-2-1 & 0.571 \\
        EPIC-inf-weighted\_inf & 0.571 \\
        MinimalPotential-1-weighted\_2 & 0.568 \\
        VALPotential-inf-inf & 0.557 \\
        None-inf-inf & 0.555 \\
        EPIC-inf-inf & 0.554 \\
        DARD-inf-inf & 0.552 \\
        EPIC-1-inf & 0.542 \\
        None-inf-weighted\_2 & 0.539 \\
        None-weighted\_inf-weighted\_1 & 0.539 \\
        None-inf-2 & 0.538 \\
        None-inf-weighted\_1 & 0.537 \\
        None-inf-1 & 0.537 \\
        None-weighted\_inf-weighted\_inf & 0.530 \\
        EPIC-weighted\_1-2 & 0.529 \\
        EPIC-1-weighted\_inf & 0.517 \\
        None-1-weighted\_1 & 0.515 \\
        None-2-weighted\_inf & 0.513 \\
        MinimalPotential-1-inf & 0.493 \\
        MinimalPotential-1-weighted\_inf & 0.489 \\
        None-inf-weighted\_inf & 0.487 \\
        DARD-1-2 & 0.453 \\
        None-2-inf & 0.444 \\
        EPIC-weighted\_1-inf & 0.436 \\
        None-weighted\_1-weighted\_inf & 0.429 \\
        None-1-weighted\_2 & 0.390 \\
        DARD-1-inf & 0.376 \\
        None-1-weighted\_inf & 0.353 \\
        None-1-2 & 0.352 \\
        None-0-inf & 0.333 \\
        DARD-weighted\_inf-2 & 0.319 \\
        None-0-weighted\_inf & 0.312 \\
        None-0-2 & 0.304 \\
        None-0-weighted\_2 & 0.303 \\
        None-0-weighted\_1 & 0.296 \\
        None-0-1 & 0.296 \\
        None-1-inf & 0.278 \\
        DARD-weighted\_1-2 & 0.241 \\
        VALPotential-1-2 & 0.240 \\
        EPIC-weighted\_2-2 & 0.224 \\
        DARD-weighted\_1-inf & 0.220 \\
        EPIC-weighted\_inf-2 & 0.197 \\
        EPIC-0-inf & 0.184 \\
        DARD-weighted\_2-2 & 0.178 \\
        VALPotential-1-inf & 0.171 \\
        VAL-0-weighted\_inf & 0.171 \\
        VALPotential-0-inf & 0.171 \\
        DARD-0-inf & 0.170 \\
        VAL-0-weighted\_1 & 0.149 \\
        VAL-0-1 & 0.149 \\
        VAL-0-weighted\_2 & 0.146 \\
        VAL-0-2 & 0.146 \\
        VALPotential-0-weighted\_inf & 0.141 \\
        DARD-0-weighted\_inf & 0.141 \\
        VAL-0-inf & 0.140 \\
        EPIC-0-weighted\_inf & 0.140 \\
        DARD-0-weighted\_1 & 0.137 \\
        VALPotential-0-weighted\_1 & 0.136 \\
        DARD-0-weighted\_2 & 0.136 \\
        VALPotential-0-weighted\_2 & 0.135 \\
        EPIC-0-weighted\_2 & 0.131 \\
        EPIC-0-weighted\_1 & 0.130 \\
        EPIC-weighted\_2-inf & 0.129 \\
        DARD-0-2 & 0.125 \\
        DARD-0-1 & 0.124 \\
        VALPotential-0-2 & 0.123 \\
        DARD-weighted\_2-inf & 0.122 \\
        VALPotential-0-1 & 0.122 \\
        EPIC-0-1 & 0.122 \\
        EPIC-0-2 & 0.122 \\
        VALPotential-weighted\_2-1 & 0.112 \\
        DARD-weighted\_inf-inf & 0.095 \\
        VALPotential-weighted\_2-2 & 0.093 \\
        VALPotential-weighted\_inf-1 & 0.077 \\
        VALPotential-weighted\_inf-weighted\_2 & 0.073 \\
        VALPotential-weighted\_2-inf & 0.065 \\
        EPIC-weighted\_inf-inf & 0.052 \\
        VALPotential-weighted\_inf-2 & 0.051 \\
        VALPotential-weighted\_inf-inf & 0.024 \\
        None-weighted\_2-1 & -0.034 \\
        VALPotential-weighted\_1-1 & -0.035 \\
        None-weighted\_inf-1 & -0.035 \\
        VALPotential-weighted\_1-weighted\_2 & -0.037 \\
        None-weighted\_inf-weighted\_2 & -0.040 \\
        VALPotential-weighted\_1-2 & -0.040 \\
        None-weighted\_inf-2 & -0.043 \\
        None-weighted\_2-2 & -0.043 \\
        VALPotential-weighted\_1-inf & -0.044 \\
        None-weighted\_1-1 & -0.045 \\
        None-weighted\_inf-inf & -0.046 \\
        None-weighted\_2-inf & -0.046 \\
        None-weighted\_1-weighted\_2 & -0.047 \\
        None-weighted\_1-2 & -0.047 \\
        None-weighted\_1-inf & -0.048 \\
\end{longtable}

\subsection{Comparison of Experimental Performance Based on Choice of Norms}\label{appendix:exp_norm}

As discussed in the main text, the choice of normalisation and metric functions can make a noticeable difference to the performance of a reward metric. To make it easier to see the impact that this choice has, this appendix contains the same data as Appendix \ref{appendix:exp_results}, but organised together by canonicalisation function, and then arranged by normalisation and metric.

\begin{table}[H]
\centering
\begin{tabular}{c|c|c|c|c|c|c}
 &  1 & 2 & inf & weighted\_1 & weighted\_2 & weighted\_inf \\
0 & 0.296 & 0.304 & 0.333 & 0.296 & 0.303 & 0.312 \\
1 & 0.597 & 0.352 & 0.278 & 0.515 & 0.39 & 0.353 \\
2 & 0.571 & 0.618 & 0.444 & 0.579 & \textbf{0.633} & 0.513 \\
inf & 0.537 & 0.538 & 0.555 & 0.537 & 0.539 & 0.487 \\
weighted\_1 & -0.045 & -0.047 & -0.048 & 0.598 & -0.047 & 0.429 \\
weighted\_2 & -0.034 & -0.043 & -0.046 & 0.573 & 0.615 & 0.622 \\
weighted\_inf & -0.035 & -0.043 & -0.046 & 0.539 & -0.04 & 0.53 \\
\end{tabular}
\caption{Correlation to regret for the \texttt{None} canonicalisation for each normalization and distance metric.  Each row corresponds to a normalisation function, and each column corresponds to a metric function.}
\label{tab:exp_corr-None}
\end{table}

\begin{table}[H]
\centering
\begin{tabular}{c|c|c|c|c|c|c}
 &  1 & 2 & inf & weighted\_1 & weighted\_2 & weighted\_inf \\
0 & 0.122 & 0.122 & 0.184 & 0.13 & 0.131 & 0.14 \\
1 & 0.814 & 0.647 & 0.542 & 0.819 & 0.622 & 0.517 \\
2 & 0.734 & 0.778 & 0.634 & 0.746 & 0.814 & 0.637 \\
inf & 0.62 & 0.617 & 0.554 & 0.629 & 0.63 & 0.571 \\
weighted\_1 & 0.749 & 0.529 & 0.436 & \textbf{0.83} & 0.73 & 0.607 \\
weighted\_2 & 0.68 & 0.224 & 0.129 & 0.738 & 0.823 & 0.806 \\
weighted\_inf & 0.642 & 0.197 & 0.052 & 0.686 & 0.692 & 0.661 \\
\end{tabular}
\caption{Correlation to regret for the \texttt{EPIC} canonicalisation for each normalization and distance metric.  Each row corresponds to a normalisation function, and each column corresponds to a metric function.}
\label{tab:exp_corr-EPIC}
\end{table}

\begin{table}[H]
\centering
\begin{tabular}{c|c|c|c|c|c|c}
 &  1 & 2 & inf & weighted\_1 & weighted\_2 & weighted\_inf \\
0 & 0.124 & 0.125 & 0.17 & 0.137 & 0.136 & 0.141 \\
1 & 0.824 & 0.453 & 0.376 & \textbf{0.861} & 0.761 & 0.654 \\
2 & 0.751 & 0.782 & 0.63 & 0.767 & 0.826 & 0.675 \\
inf & 0.639 & 0.637 & 0.552 & 0.657 & 0.652 & 0.632 \\
weighted\_1 & 0.621 & 0.241 & 0.22 & 0.835 & 0.754 & 0.679 \\
weighted\_2 & 0.711 & 0.178 & 0.122 & 0.774 & 0.831 & 0.708 \\
weighted\_inf & 0.685 & 0.319 & 0.095 & 0.713 & 0.718 & 0.698 \\
\end{tabular}
\caption{Correlation to regret for the \texttt{DARD} canonicalisation for each normalization and distance metric.  Each row corresponds to a normalisation function, and each column corresponds to a metric function.}
\label{tab:exp_corr-DARD}
\end{table}

\begin{table}[H]
\centering
\begin{tabular}{c|c|c|c|c|c|c}
 &  1 & 2 & inf & weighted\_1 & weighted\_2 & weighted\_inf \\
1 & \textbf{0.815} & 0.576 & 0.493 & 0.718 & 0.568 & 0.489 \\
2 & 0.733 & 0.778 & 0.634 & 0.746 & 0.814 & 0.637 \\
\end{tabular}
\caption{Correlation to regret for the \texttt{MinimalPotential} canonicalisation for each normalization and distance metric.  Each row corresponds to a normalisation function, and each column corresponds to a metric function.}
\label{tab:exp_corr-MinimalPotential}
\end{table}

\begin{table}[H]
\centering
\begin{tabular}{c|c|c|c|c|c|c}
 &  1 & 2 & inf & weighted\_1 & weighted\_2 & weighted\_inf \\
0 & 0.122 & 0.123 & 0.171 & 0.136 & 0.135 & 0.141 \\
1 & 0.8 & 0.24 & 0.171 & \textbf{0.876} & 0.776 & 0.685 \\
2 & 0.752 & 0.782 & 0.625 & 0.767 & 0.828 & 0.677 \\
inf & 0.636 & 0.634 & 0.557 & 0.653 & 0.648 & 0.624 \\
weighted\_1 & -0.035 & -0.04 & -0.044 & 0.867 & -0.037 & 0.692 \\
weighted\_2 & 0.112 & 0.093 & 0.065 & 0.784 & 0.845 & 0.807 \\
weighted\_inf & 0.077 & 0.051 & 0.024 & 0.722 & 0.073 & 0.735 \\
\end{tabular}
\caption{Correlation to regret for the \texttt{VALPotential} canonicalisation for each normalization and distance metric.  Each row corresponds to a normalisation function, and each column corresponds to a metric function.}
\label{tab:exp_corr-VALPotential}
\end{table}

\begin{table}[H]
\centering
\begin{tabular}{c|c|c|c|c|c|c}
 &  1 & 2 & inf & weighted\_1 & weighted\_2 & weighted\_inf \\
0 & 0.149 & 0.146 & 0.14 & 0.149 & 0.146 & 0.171 \\
1 & \textbf{0.873} & 0.87 & 0.749 & \textbf{0.873} & 0.87 & 0.741 \\
2 & 0.804 & 0.856 & 0.858 & 0.804 & 0.856 & 0.783 \\
inf & 0.708 & 0.723 & 0.756 & 0.708 & 0.723 & 0.707 \\
weighted\_1 & \textbf{0.873} & 0.87 & 0.749 & \textbf{0.873} & 0.87 & 0.741 \\
weighted\_2 & 0.804 & 0.856 & 0.858 & 0.804 & 0.856 & 0.783 \\
weighted\_inf & 0.707 & 0.766 & 0.816 & 0.707 & 0.766 & 0.734 \\
\end{tabular}
\caption{Correlation to regret for the \texttt{VAL} canonicalisation for each normalization and distance metric.  Each row corresponds to a normalisation function, and each column corresponds to a metric function.}
\label{tab:exp_corr-VAL}
\end{table}

\section{Experimental Setup Of Reacher Environment}\label{appendix:experimental_reacher}
In this appendix, we elaborate on the details of our Reacher experiments. The discount rate for this experiment was $\gamma=0.99$, following the original MuJoCo environment.

\subsection{Reward functions}
As mentioned in the main text, we used 7 different reward functions.

The \verb|GroundTruth| is simply copied from the original Reacher environment. It computes the Euclidean distance between the fingertip and target, denoted $d$. It also computes a penalty for taking large actions, which is computed by squaring the values of the action and summing them, $p=a_0^2+a_1^2$. It then returns $-(d+p)$.

The \verb|PotentialShaped| reward applies randomly generated (but deterministic) potential shaping on top of \verb|GroundTruth|. When the experiment starts, 11 weights (one for each dimension in observation space) and 1 bias are randomly sampled from a normal Gaussian. The potential function is then simply $\Phi(s)=w\cdot x+b$, so the full reward is $\verb|PotentialShaped|(s,a,s')=\verb|GroundTruth|(s,a,s')+\gamma\Phi(s')-\Phi(s)$.

\verb|SPrime| returns the same value as \verb|GroundTruth| if $s'=\tau(s,a)$, and the same value as \verb|Random| otherwise. At the start of the experiment, a new instance of \verb|Random| is initialised (see below for details). We consider $s'$ to follow from $\tau(s,a)$ if the two quantities are either less than 1\% apart, or less than 0.01 apart (even if they aren't perfectly equal).

\verb|SecondPeak| creates a second, smaller "peak" (corresponding to a second, less important target) in the environment, alongside the original "peak" from \verb|GroundTruth|. When initialised at the start of the experiment, it picks a random position on the same 2D plane where the fingertip and target are, such that the Euclidean distance between the second peak and the original peak is at least 0.5 (note that the size of the whole plane is $1\times 1$). The reward is then determined by first computing the Euclidean distance between the fingertip and the second peak, denoted $d$, and then simply adding $-0.2d$ on top of \verb|GroundTruth|, ie. $\verb|SecondPeak|=\verb|GroundTruth|-0.2d$.

\verb|SemanticallyIdentical| creates a reward peak around the target, similarly to \verb|GroundTruth|, but this peak has a different shape. It is a 2D Gaussian with a standard deviation of $0.1$ along both axes. The values of the Gaussian are then rounded to the nearest $0.01$.

\verb|NegativeGroundReward| simply returns $-1*\verb|GroundTruth|$.

\verb|Random| returns randomly selected (but deterministic) values. When the experiment starts, 11 $s$-weights, 2 $a$-weights, 11 $s'$-weights, and 1 bias are generated by sampling a normal Gaussian. The reward function then simply returns $s\cdot w_s+a\cdot w_a+s'\cdot w_{s'}+b$.

\subsection{Canonicalising and normalising in continuous settings}


The state value function in VAL is based on a uniformly random policy $\pi$. We implemented $V^\pi$ using SARSA \citep{sarsa} updates with AdamW \citep{loshchilov2019decoupled} and a reply buffer on a 4-layer MLP (which maps observations onto real values).

For the norm, we effectively need to compute the norm of a function, which means taking the norm of an infinite-dimensional vector.
This can be written precisely as $(\int|f(x)|^p\,dx)^{1/p}$, and approximated using Monte Carlo sampling as $(\frac{1}{N}\sum|f(x)|^p\,dx)^{1/p}$.
When taking a sample of the reward function, we sample $s,a$ uniformly, and then set $s'=\tau(s,a)$ -- this removes impossible transitions from the sample space, while also reducing the dimensionality of the space we need to cover from 22 dimensions down to 12.
As a special case, when taking the $L_\infty$ norm, we simply look for the maximum value of $|f(x)|$.
This could be approximated using optimisation algorithms by assuming $|f(x)|$ is convex, but we chose not to make this assumption and instead simply choose the maximum among the samples.

\end{document}